\newcommand{\nosemic}{\renewcommand{\@endalgocfline}{\relax}}
\newcommand{\dosemic}{\renewcommand{\@endalgocfline}{\algocf@endline}}
\let\oldnl\nl
\newcommand{\nonl}{\renewcommand{\nl}{\let\nl\oldnl}}
\newcommand{\bpi}{\boldsymbol\pi}
\newtheorem{theorem}{Theorem}
\newtheorem{lemma}[theorem]{Lemma} 
\newtheorem{proposition}[theorem]{Proposition}
\newtheorem{assumption}[theorem]{Assumption} 
\newtheorem{remark}[theorem]{Remark}
\newtheorem{corollary}[theorem]{Corollary}
\newtheorem{definition}[theorem]{Definition}
\DeclareMathOperator*{\w}{\text{\textbf{w}}}
\DeclareMathOperator*{\argmax}{arg\,max}
\DeclareMathOperator*{\argmin}{arg\,min}
\begin{document}
	
	\title{A Subgame Perfect Equilibrium Reinforcement Learning Approach to Time-inconsistent Problems}
	
	\author{\name Nixie S. Lesmana \email nixiesap001@e.ntu.edu.sg \\
		\name Chi Seng Pun \email cspun@ntu.edu.sg \\
		\addr School of Physical and Mathematical Sciences\\
		Nanyang Technological University\\
		Singapore}
	
	\editor{TBA}
	
	\maketitle
	
	\begin{abstract}
		In this paper, we establish a subgame perfect equilibrium reinforcement learning (SPERL) framework for time-inconsistent (TIC) problems. In the context of RL, TIC problems are known to face two main challenges: the non-existence of natural recursive relationships between value functions at different time points and the violation of Bellman's principle of optimality that raises questions on the applicability of standard policy iteration algorithms for unprovable policy improvement theorems. We adapt an extended dynamic programming theory and propose a new class of algorithms, called backward policy iteration (BPI), that solves SPERL and addresses both challenges.
		To demonstrate the practical usage of BPI as a training framework, we adapt standard RL simulation methods and derive two BPI-based training algorithms. We examine our derived training frameworks on a mean-variance portfolio selection problem and evaluate some performance metrics including convergence and model identifiability.
	\end{abstract}
	
	\vspace{1mm}
	
	\begin{keywords}
		Time Inconsistency, Reinforcement Learning, Consistent Planning, Intrapersonal Game, Subgame Perfect Equilibrium, Training Algorithms, Mean-Variance Analysis
	\end{keywords}

	\section{Introduction} \label{sec: intro}
	Time-inconsistent (TIC\footnote{In this paper, the abbreviation TIC could refer to time-inconsistent as an adjective or time inconsistency as a noun, whichever is appropriate.}) performance criterion arises as a result of decision-theoretic planning in order to reflect more closely human's preferences that are prone to bounded rationality (see \cite{Simon1955,Simon2008}), biases, and fallacies. In these preference models, decision alternatives are evaluated using various psychological principles that include but are not limited to present bias, loss aversion, nonlinear probability weighting, and projection bias; see the (cumulative) prospect theory in behavioral economics developed in \cite{Kahneman1979,Tversky1992}. As a result of these biases, TIC can then be described as a situation in which a plan, consisting of current and future actions, could be optimal today but might not be optimal in the future. In the context of dynamic game theory, TIC, also known as dynamic inconsistency, manifests itself through the violation of Bellman's principle of optimality (BPO) by the dominant player (i.e. the future); see \cite{Simaan1973}. Examples include endogenous habit formation in economics (\cite{Fuhrer2000}) and mean-variance criteria in finance (\cite{Markowitz1952}).

Recently, TIC criterion has grown in prevalence alongside artificial intelligence (AI) proliferation, as more and more AI agents are centered around human, e.g., assistive, autonomous, and humanoid robots.
However, modelling realistic human's preferences is not the only cause of TIC. In fact, any efforts to modify a time-consistent (TC) optimization/control criterion may result in TIC. One of the largest contributors to such AI advances is arguably the field of reinforcement learning (RL). An RL agent aims to solve decision-making/control problems, but with methods distinguishable from analytical control solvers, resulting in a broader scope of capabilities. In particular, RL training methods are often drawn from animal learning psychology, where it is a common conception that on top of goal-directed behaviour, it is also important to encode seemingly unrelated behaviour to help achieve the specified goal. Such an encoding can range from simple reward engineering to the field of (optimal) reward design with some involved modifications of reward functions as well as the control criterion itself. These modifications, while maybe inspired by human's cognition, are often practically motivated to overcome a designer's limited ability to observe all intricacies related to the environment and capture them into a single goal functional at initialization. For instance, modifications are necessary when bounded resources cause an RL agent to behave unexpectedly, failing to achieve the intended behavior encoded in the original goal functional, or when designer wants to encode abilities to promote adaptivity and autonomy; e.g., \cite{Schmidhuber1991,Chentanez2004,Bellemare2016,Achiam2017}. An interesting crossover between the two is shown by \cite{Fedus2019}, where behaviorally-motivated (TIC) \textit{hyperbolic-agent} can also serve as a practically-motivated auxiliary task to improve performance against TC \textit{exponential-agent} in several domains. It is noteworthy that an RL framework for the problems under a TIC criterion, e.g., hyperbolic discounting (see \cite{Laibson1997,Frederick2002}), enhances the modelling feasibility, where psychological principles can be introduced to reflect the agent's intrinsic objective beyond the expected utility theory. 

In his seminal work, \cite{Strotz1955} summarizes two types of decision makers, who tackle with TIC seriously: (i) a \textit{pre-committer}, who is TIC-aware but chooses to focus on the planning solely at the initial time point by determining her plan once at the beginning and committing to it throughout the planning horizon, and (ii) a \textit{sophisticated agent}, who is also TIC-aware but unable or unwilling to pre-commit and thus chooses consistent planning as a compromise by finding a plan that is optimal at any given time in consideration of her future disobedience. From an optimization and control perspective, a pre-commiter solves a globally optimal solution to the TIC problem, while a sophisticated agent solves a intrapersonal equilibrium solution.

Contrary to the given terminologies, there are several circumstances in which we prefer consistent plans than globally optimal ones.
First, by definition, the so-called globally optimal plans are only optimal at initial time and state disregarding the fact that such plan might not be optimal when assessed at future time point. The main drawback for pre-committers is their commitment to a plan determined at the beginning while sticking with it throughout. Especially for a stochastic environment, as time evolves, it could deviate from what the agent anticipated at the beginning, resulting in the degeneracy of a pre-commitment plan, and the problem is more pronounced when the planning horizon is long. Empirical evidence reveals that in their natural state (i.e. without relying on commitment devices), humans tend to reassess their plan at some future times and states making them prone to future deviations for a lack of self control; see \cite{Kahneman1979a,Bondt1985}. Therefore, in the event when commitment devices are unavailable or not useful, being (time-)consistent is a rational choice. For instance, let us consider an online learning agent that is acting on behalf of human. In this case, we want our agent to keep its plan open for re-evaluation at future times and states to anticipate some changes in the environment and update its belief or information set accordingly. Consistent plans are robust\footnote{Here, robustness refers strictly to the agent's TIC sources (which does not concern the risk and uncertainty coming from the external environment). TIC sources are an agent's internal attributes and thus, assumed to be known at initialization when agent is TIC-aware. Sophisticated agent exploits this knowledge to devise its "robust" consistent plan, while precommiter does not.} under such future re-evaluations, while globally optimal plans are not. Second, there is lack of a pivotal tool to identify pre-commitment plans. In the context of stochastic controls, BPO violation renders standard DP techniques inapplicable. Although an embedding technique can be employed for some specific criteria such as mean-variance, initiated by \cite{Li2000} and later adapted to RL by \cite{Wang2020}, such technique is difficult to extend to general problem specifications, e.g., state-dependent problems. In fact, the embedding technique is mainly useful in handling the TIC source from the nonlinearity with respect to the reward-to-go, while a more general approach is to formulate and attack the problem with McKean--Vlasov DP; see \cite{Pham2017,Lei2020}. Similar approaches to such a technique can also be found in the RL context, such as \cite{Mannor2011,Evans2016}. These approaches also involve a TC reformulation to the original problem of searching globally optimal solution to a specific TIC criterion but in an approximate or less formal sense. The remaining alternative is then to use trajectory enumeration techniques that apply for more general TIC sources. However, such techniques are not efficient and quickly become intractable in cases such as stochastic environments.

The problem of identifying consistent plans in (finite-horizon) TIC control has been explored extensively since the seminal work of \cite{Strotz1955}. The current popularity of consistent plans in TIC control literature can be attributed to its formalism as \textit{intrapersonal equilibria}, in which an agent's selves at different times (while with the same terminal time) are considered to be self-interested players of a game and a consistent plan is characterized by a subgame perfect equilibrium (SPE) of the game, where no selves have the incentive to deviate. Such formalism was initiated through the study of classical (continuous-time) Ramsey model with a non-exponential discount function in a series of papers; see \cite{Ekeland2006,Ekeland2008,Ekeland2010}. Similar works that adopt game-theoretic line of reasoning, but less formally, can be found in \cite{Pollak1968,Phelps1968,Peleg1973,Barro1999,Luttmer2003}, where different application problems in both discrete- and continuous-time setting are considered. In an effort to unify the game-theoretic views, \cite{Bjoerk2014} proposes an extended DP theory in the context of discrete-time TIC stochastic controls. In this work, the authors derive a Bellman equation like system for relatively general TIC criterion to characterize an equilibrium value function and then solve the system recursively backward to obtain the corresponding SPE policy in various application examples. A continuous-time extension to this general theory is proposed in \cite{Bjoerk2017}, where a system of Hamilton--Jacobi--Bellman (HJB) equations is derived; if solvable, solution to such system can be obtained by the use of the partial differential equation (PDE) tools (see \cite{Lei2021}). Standard procedure can then be applied for practical implementation of the SPE policy derived as above that is, by estimating the parameters in the modelled state dynamics (i.e. stochastic differential equations, transition probabilities) and substituting these estimates into the analytically derived SPE policy form. Despite the close connection between DP and RL, the exploration of SPE policy and extended DP remains scarce in RL literature, except for a few related works on specific tasks, whose subtle difference from ours is illustrated technically in Section \ref{sec:relatedworks} below.

In this paper, we formalize the search of SPE policy as a reinforcement learning (SPERL) problem under general (task-invariant) TIC objectives and limit the scope of our study to finite-horizon, discrete-time problems. In an RL context, TIC problems are known to exhibit two challenges: (i) the non-existence of natural (action-)value recursion given fixed policy, and (ii) the questionable applicability of standard policy iteration algorithms due to unprovable policy improvement theorems (PIT). We propose a new class of algorithms called backward policy iteration (BPI) that addresses both challenges. First, by extending \cite{Bjoerk2014}'s value recursion to action-value recursion, we obtain TIC-adjusted temporal-difference (TD)-based policy evaluation method that recursively links Q-functions at different time points. Second, by applying SPE notion of optimality to structure our policy search, we show that BPI is lexicographically monotonic; this result is parallel to PIT in standard RL problems. Building on this monotonicity result, we can obtain the theoretical guarantees for BPI, such as finite termination/convergence and characterization of converged policy as SPE policy.

Our primary contribution is to propose BPI as a SPERL solver and study its analytical properties as mentioned above. To address some intractability issues of BPI, we further investigate the adaptation of standard RL simulation methods to BPI. We explore both tabular and function approximation cases and derive training algorithms, similar to Q-learning and Deterministic Actor-Critic, which in course reveals both contrasts and similarities between BPI-based and standard PI-based algorithms. We consider a mean-variance analysis application to exemplify how the derived algorithms can be used in practice and evaluate some performance metrics, including convergence and model identifiability.

The remainder of this paper is organized as follows. Section \ref{sec: prelim} formulates a general TIC problem for our RL study. The concept of SPE is elaborated in Section \ref{sec:SPE}, while the related works of both SPE-alike and non-SPE solutions
are reviewed in Section \ref{sec:relatedworks}. Section \ref{sec: SPERL-foundation} lays out the theoretical foundations to our proposed RL framework for training SPE policies structured along the line of policy iteration, namely policy evaluation and policy improvement, where the new BPI algorithm for both discrete and continuous state-action spaces is proposed. Section \ref{sec: training-algo} incorporates the standard RL simulation methods into BPI and specifies how to adapt BPI's key rules. Section \ref{sec: illustrative-example} elaborates the training algorithms under the proposed framework with a well-known financial example of dynamic portfolio selection. Section \ref{sec: conclusion-future} concludes.
	
	\section{Preliminaries} \label{sec: prelim}
	In this section, we will define the subgame perfect equilibrium reinforcement learning (SPERL) problem that we are addressing and provide some backgrounds on its two central concepts: the finite-horizon TIC-RL problems and the SPE notion of optimality.

\subsection{Finite-Horizon TIC Control Problems} \label{sec: finite-horizon-TIC}
Let $\mathcal{T} \doteq \{0, 1, \dotso, T-1\}$ be a discrete time set of decision epochs with finite time horizon $T \in \mathbb{Z}^{+}$. A finite-horizon TIC-RL problem is then defined as policy search in finite-horizon TIC MDP which consists of the standard finite-horizon tuple $(\mathcal{T}, \{\mathcal{X}_t\}_{t \in \mathcal{T}}, \{\mathcal{U}_t\}_{t \in \mathcal{T}}, P)$ and a TIC performance criterion, each of which will be detailed in the following. 

For each $t \in \mathcal{T}$, we assume general state-spaces $\mathcal{X}_t$ and action-spaces $\mathcal{U}_t$ and define transition probability measures $p^u_{t,x}(\cdot) \doteq P(X_{t+1} = \cdot|X_t = x, U_t = u)$ on $\mathcal{X}_{t+1}$. We note on the stationary transition assumption imposed here. Let us denote by $\Pi^{MD}$ the set of all deterministic Markovian policies $\bpi \doteq \{\pi_t: t \in \mathcal{T}\}$, where $\pi_t: \mathcal{X}_t \rightarrow \mathcal{U}_t, \forall t$. Here onward, we will restrict our attention to the policies in the set $\Pi^{MD}$. To ease the notational burden, we introduce some sequential notations to denote the subprocesses and subsets corresponding to a subsequence of $\mathcal{T}$.
\begin{definition}[Policy Sequences]
	We define some sequential notations for policies $\bpi \in \Pi^{MD}$ and their truncations,
	\begin{enumerate}
		\item $\forall k,n \in \mathcal{T}, k \leq n$, ${}{}_k^n\bpi \doteq \{\pi_t: k \leq t \leq n\}$. 
		\item When the last time index $n=T-1$, we shorten our notation by ${}{}_k\bpi \doteq {}{}_k^{T-1}\bpi$.
		\item When the start time index $k=0$, we shorten our notation by ${}{}^n\bpi \doteq {}{}_0^n\bpi$ and in particular $\bpi \doteq {}{}_0^{T-1}\bpi$.
	\end{enumerate}
\end{definition}
Note that the right superscript of policy sequences is reserved to distinguish different policies and the right subscript is to indicate the action at the corresponding time. Similarly for the set notations, we denote by ${}{}_t\mathcal{T}\doteq\{t,t+1,\ldots,T-1\}$ and by ${}{}_t\Pi^{MD}$ the set of all deterministic Markovian policies ${}{}_t\bpi$.

Given a decision epoch $t$ and an observed current system state $x_t \in \mathcal{X}_t$, ${}{}_t\bpi$ prescribes an action selection $\pi_t(x_t) \in \mathcal{U}_t$ which then drives the transition of our MDP to the next system state $x_{t+1} \in \mathcal{X}_{t+1}$ according to $p^{\bpi}_{t,x}(x_{t+1}) \doteq P(X_{t+1} = x_{t+1}|X_t = x_t, U_t = \pi_t(x_t))$. 

In standard finite-horizon RL problems, a TC performance criterion takes the form
\begin{equation} \label{std-objective}
	V^{\bpi}_{\text{std}, \tau}(y) \doteq \mathbb{E}_{\tau, y}\left[ \sum_{t = \tau}^{T-1} \mathcal{R}_t(X_t^{\bpi}, \pi_t(X^{\bpi}_t)) + \mathcal{F}(X_T^{\bpi}) \right] \doteq \mathbb{E}_{\tau, y}\left[ \sum_{t = \tau}^{T} R^{\bpi}_t \right],
\end{equation}
where $(\tau, y = X^{\bpi}_t)$ represents the current/evaluation time and state and the superscript $\bpi$ indicates the policy being followed. We note that the latter presentation is more commonly found in RL literature to account for random intermittent rewards $R_t \doteq \mathcal{R}_t(X_t, U_t)$ for $t\in\mathcal{T}$ and random terminal reward $R_T \doteq \mathcal{F}(X_T)$ emitted by the environment upon hitting the state-action pair $(X_t, U_t)$ at time $t < T$ and state $X_T$ at time $t = T$.

A SPERL problem instead considers the following TIC performance criterion of the form
\begin{equation} \label{TIC-objective}
	V^{\bpi}_{\tau}(y) \doteq \mathbb{E}_{\tau, y} \left[\sum_{t = \tau}^{T-1} \mathcal{R}_{\tau, t}\left(y, X_t^{\bpi}, \bpi_t(X_t^{\bpi})\right) + \mathcal{F}_{\tau}(y, X_T^{\bpi})\right] + \mathcal{G}_{\tau}\left(y, \mathbb{E}_{\tau, y}[X_T^{\bpi}]\right).
\end{equation}
As compared to the TC criterion in (\ref{std-objective}), we can observe some notable differences which make up the TIC sources\footnote{We refer readers to \cite[Sections 1.2, 7-9]{Bjoerk2014} for domain-specific examples of each source which include non-exponential discounting for type (i) and variance-related term for type (ii).} of the criterion \eqref{TIC-objective}: (i) the dependency of reward functions $\mathcal{R}$ and $\mathcal{F}$ on the current time and state, $(\tau, y)$, and (ii) the appearance of term $\mathcal{G}_{\tau}(y,z)$ that is non-linear in the $z$-variable.

\begin{remark}[Random rewards] \label{remark: random-rewards}
	It is possible to incorporate random rewards into the TIC performance criterion above by imposing additional assumption on the reward functions $\mathcal{R}$ and $\mathcal{F}$. For instance, we can define $\mathcal{R}_{\tau, t}(y, X_t, U_t) \doteq \mathcal{H}(\tau, y, \mathcal{R}_t(X_t, U_t)) \doteq \mathcal{H}(\tau, y, R_t)$ and $\mathcal{F}_{\tau}(y, X_T) \doteq \mathcal{H}(\tau, y, \mathcal{F}(X_T)) \doteq \mathcal{H}(\tau, y, R_T)$ where $\mathcal{H}$ can be considered as some TIC transformation of raw rewards and is required to be deterministic. We may then fit some popular TIC rewards into this form: $\mathcal{H}(\tau, y, R) \doteq \frac{R}{1 + h(T- \tau)}$ in hyperbolically discounted problems or $\mathcal{H}(\tau, y, R) \doteq \frac{\gamma}{y}R$ in state-dependent problems for some constants $h$ and $\gamma$. However, random rewards are rarely considered in TIC control literature due to their focus on analytical solutions. Thus, for most derivations in this paper, we will stick to the form in (\ref{TIC-objective}) and revisit the random reward case as a short remark in Section \ref{sec: training-algo}. 
\end{remark}

\subsubsection{BPO Violation and TIC}
We now describe the issue of BPO violation under TIC that lead to the splitting of globally optimal and locally optimal policies, which eventually motivates the SPE notion of optimality. Let us introduce some notations for a generic criterion that could be \eqref{std-objective} or \eqref{TIC-objective}. Under the standard notion of optimality, a policy search given a criterion aims to find a (globally) optimal policy at the beginning time 0: $\bpi^{*0} \doteq \argmax_{\bpi\in\Pi^{MD}} V^{\bpi}_0(x_0)$. Let us also consider local problems $\mathcal{P}_{\tau, y}$ indexed by the initial time $\tau\in\mathcal{T}$ and state $y$. Similarly, we may obtain under the standard notion of (local) optimality, $\bpi^{*\tau} \doteq \argmax_{\bpi\in{}{}_{\tau}\Pi^{MD}}V^{\bpi}_{\tau}(y)$. 

Under standard TC criterion as in (\ref{std-objective}), the optimal solutions for the local problem $\mathcal{P}_{\tau, y}$ and the global problem $\mathcal{P}_{0, x0}$ are linked by BPO which states
\begin{equation} \label{eq:BPO}
	{}{}_{s}\bpi^{*0} = {}{}_{s}\bpi^{*\tau}, \quad \forall \tau\in\mathcal{T},~s \in {}{}_\tau\mathcal{T}.
\end{equation}
In fact, \eqref{eq:BPO} also implies that ${}{}_{s}\bpi^{*t} = {}{}_{s}\bpi^{*\tau}$ for any $t\le \tau$ and $s\in{}{}_\tau\mathcal{T}$. In other words, globally optimal and locally optimal solutions are identical and in this case, the so-called pre-commitment policy is also SPE from a game-theoretic perspective. 

However, under the TIC criterion \eqref{TIC-objective}, the BPO relation \eqref{eq:BPO} no longer holds, causing the split between the locally optimal policy $\bpi^{*\tau}$ and the globally optimal policy $\bpi^{*0}$. The globally optimal policy $\bpi^{*0}$ is called the pre-commitment policy and it is usually found by means other than DP, which is no longer applicable without BPO. Moreover, $\bpi^{*0}$ is not SPE. The BPO violation can be viewed as the consequence of conflicting objectives in the collection of local problems $\{\mathcal{P}_{t,x}: t \in \mathcal{T}, x \in \mathcal{X}_t\}$, motivating the game-theoretic reformulation of TIC problems as an intrapersonal game, which will be detailed in the next subsection. The intrapersonal equilibrium (i.e. SPE) solution, if exists, recovers the BPO relation \eqref{eq:BPO} by modifying how we define $\bpi^*$ for all local problems.

\begin{remark}
	In the literature on TIC problems, there is the third class of `optimality' other than the pre-commitment and the SPE notions, namely dynamically optimality; see \cite{Pedersen2016}. A dynamically optimal policy is constructed by the collection of locally optimal solutions at all time points, i.e. $\{\bpi^{*t}_t:t\in\mathcal{T}\}$, where $\bpi^{*t}$ is the optimal pre-commitment policy to the local problem $\mathcal{P}_{t,x}$. However, this formation does not exploit the linkage among the local problems and such a construction needs to be justified. A closely related concept regarding the latter is called time consistency in efficiency; see \cite{Cui2017,Pun2021}. However, since dynamically optimal policies are generally lack of interpretation power and theoretical guarantees, we focus on the SPE policies.
\end{remark}

\subsection{SPE Notion of Optimality} \label{sec:SPE}
Given the finite-horizon discrete-time TIC criterion in (\ref{TIC-objective}), SPERL's objective is to find a SPE policy, which will be defined shortly after a few definitions and notational assumptions.
\begin{definition}[Delaying Operator]
	We denote by $u\oplus_t\bpi$ the concatenation between the use of action $u \in \mathcal{U}_t$ at any given time $t\in\mathcal{T}$ and the delayed use of policy ${}{}_{t+1}\bpi$. Hereafter, we adopt a convention that $u\oplus_t{}{}^m\bpi$ for $m\le t$ is simply an action $u\in \mathcal{U}_t$ at time $t\in\mathcal{T}$.
\end{definition}
\begin{definition}[Action-Value Functions] \label{def: Policy-dependent VFs}
	Given any fixed policy $\bpi \in \Pi^{MD}$ and its corresponding value function $V^{\bpi}_{t}(x)$ defined in \eqref{TIC-objective} for $t\in\mathcal{T}$, we define (policy-dependent) action-value function
	\begin{equation} \label{eq1: Policy-dependent VFs}
		Q^{\bpi}_{t}(x, u) \doteq V_{t}^{u\oplus_t\bpi}(x).
	\end{equation}
\end{definition}
\begin{remark}
	In Definition \ref{def: Policy-dependent VFs}, the policy ${}{}^{t}\bpi$ does not play a role and the policy ${}{}_{t+1}\bpi$ is fixed, while $u\in\mathcal{U}_t$ is the action variate at time $t$ with the state $x$.
\end{remark}
\begin{definition}[SPE Policy] \label{def: SPE-Policy}
	A policy $\bpi^* \in \Pi^{MD}$ is an SPE policy if
	\begin{equation} \label{eq1: SPE-policy}
		Q^{\bpi^*}_{t}(x, \bpi^*_{t}(x)) \geq Q^{\bpi^*}_{t}(x, u), \quad \forall t \in \mathcal{T},~x \in \mathcal{X}_t,~u \in \mathcal{U}_t.
	\end{equation}
\end{definition}
SPERL's search objective is inspired by the \textit{intrapersonal equilibria} characterization of time-consistent plans that aim to solve the conflicts between the objectives in the local problem set $\{\mathcal{P}_{t,x}: t \in \mathcal{T}, x \in \mathcal{X}_t\}$ by reformulating them as an SPE search in a sequential subgames played by \textit{self-interested} $\mathcal{T}$-indexed players, which goes as follows:
\begin{quote}
	At each round $t$, only player $t$ is allowed to move by choosing a strategy in the form of a mapping $\pi_t: \mathcal{X}_t \rightarrow \mathcal{U}_t$. Player $t$'s objective is to maximize his/her expected payoff $Q^{\bpi}_{t}(x,\pi_t) \doteq V^{\pi_t \oplus_t\bpi}_{t}(x)$. Player $t$ can observe $(t, X^{\bpi}_{t} = x)$ and has perfect information on the \textit{future players}' strategies ${}{}_{t+1}\bpi$.
\end{quote}
The SPE of the game above can be found by applying backward induction, where $\pi^*_t$ is obtained at each inductive step, resulting in SPE strategies of each player $\{\pi^*_{T-1}, \pi^*_{T-2}, \dotso, \pi^*_0\}$. We can easily verify that this strategy set realizes the condition in (\ref{eq1: SPE-policy}).
\begin{remark}[Markovian assumption on SPE policies]
	By definition, an SPE policy is Markovian, which implies that for each $t \in \mathcal{T}$, the past (including past players' policies ${}{}^{t-1}\bpi$) does not influence how player $t$ acts. \label{remark: Markov-SPE-policy}
\end{remark}
	\subsection{Related Works on TIC Problems in RL} \label{sec:relatedworks}
After reviewing some notions of optimality, it is convenient at this stage to discuss the related works with some technical comments before we end this section. Table \ref{tab:RLtable} compares between the TC and TIC problems in RL and the subclasses of optimality under the TIC problems, which were discussed in Section \ref{sec: intro}. We remark here that the SPE concept is investigated in \cite{Lattimore2014} for general discounting RL problems, which provides a detailed account on the advantages of SPE policies. However, they do not focus on solving the SPE policy search problem.

\begin{table}[!ht]
	\centering
	\caption{Different classes of RL problems and how they are attempted.}
	\label{tab:RLtable}
	\resizebox{\linewidth}{!}{
		\begin{tabular}{p{0.16\linewidth}||p{0.23\linewidth}||p{0.25\linewidth}|p{0.28\linewidth}}
			\hline \hline
			Criterion & \multicolumn{1}{|c||}{\textit{TC} (w/ BPO)} & \multicolumn{2}{c}{\textbf{\textit{TIC}} (w/o BPO)} \\
			\hline \hline
			Optimality & BPO promises dynamic optimality  & Globally optimal plan \newline (Precommitment) & \textbf{Consistent plan} \newline (SPE revives BPO) \\
			\hline
			Update Rule & Policy Improvement (PolImp) & Embed the problem to the one w/ PolImp & \textbf{SPE-improving rule} \newline (Definition \ref{def: SPE-improving rule} below) \\
			\hline
			Convergence guarantee & Monotonicity \newline (w/ PIT) & Monotonicity for the auxiliary one & \textbf{Lex-monotonicity} \newline (Theorem \ref{thm: lex-monotone} below) \\
			\hline
			Task-invariant & Yes & No & \textbf{Yes} \\
			\hline \hline
			{\footnotesize Reference} & {\footnotesize \cite{Sutton2018}} & {\footnotesize \cite{Wang2020}} & {\footnotesize \textbf{This paper}} \\
			\hline \hline
		\end{tabular}
	}
\end{table}

It should also be noted that there are a number of works attempting a TIC-RL problem from the perspective of learning behaviour or efficiency instead of optimality. Hence, each of their algorithms may learn a kind of `optimality' based on a pre-specified behaviour of the agent and it is not clear whether the converged policy falls into any class of optimality in Table \ref{tab:RLtable}. Hence, in the following two paragraphs, we categorize some related works into two streams based on their search of SPE-alike or non-SPE policies.

\paragraph{SPE-alike Policy Search in RL}
We highlight some differences between ours and some related prior works that have mentioned sophisticated, locally optimal, or SPE solutions. \cite{Evans2016} consider the learning of sophisticated behaviour under hyperbolic discounted criterion. The construction of their update rules is based on TC reformulation to learn pre-commitment plans to which sophistication is heuristically encoded afterwards. Though the sophisticated behavior is an exact analogy of the SPE policy, this work does not clarify whether its heuristic encoding sufficiently characterizes sophistication (actually terminates at/converges to SPE policy). Moreover, due to its TC reformulation, their method does not apply to our more general TIC criterion (which includes state-dependency). In \cite{Tamar2013,Tamar2016}, actor-critic and temporal-difference (TD)-based algorithms are proposed to learn a \textit{locally optimal} policy under variance-related criteria, respectively. These works are by far the closest to our approach for some similarities in their derivation of value recursions to the extended DP theory, but the relation of the learnt policies to SPE policy remains obscure for two reasons: (i) SPE policy is defined under deterministic notion of optimality, while both algorithms use stochastic policy representation; (ii) both works adopt gradient-based methods shifting the optimization landscape to that of policy parameters.

\paragraph{Non-SPE Policy Search in RL}
We briefly review some solution alternatives that do not fall into either class of aforementioned solutions. Recently, TIC is often handled from a tool-/task-oriented angle, that is by overcoming the difficulties of applying specific tools (derived under TC criteria) to a specific TIC task through clever maneuvers of task-specific properties. Though such approaches may be effective in handling some specific TIC criteria, their applicability to other TIC tasks remains unknown. Moreover, the tool-oriented handling of difficulties may cause the lost of ``optimality" of the obtained solutions. 
For instance, in the context of RL, Q-learning (\cite{Watkins1992}) is a globally-optimal policy search tool given TC criteria under deterministic notion of optimality. A popular RL algorithm designed for hyperbolic-discounted criterion is \textit{$\mu$Agent} (see \cite{KurthNelson2010} and \cite{Fedus2019}), that learns through a shared representation of Q-learning like agents. However, the policy learnt by such \textit{modified} Q-learning as part of \textit{$\mu$Agent} has an unknown theoretical ``optimality"\footnote{For empirical studies relevant to uncovering the "optimality" property of hyperbolic-discounted RL algorithms, see \cite{KurthNelson2010}.}. For the above reasons, we will pivot on the control/optimization perspective to preserve the main characteristics of TIC, without relying on specifications of tasks or tools.
	
	\section{SPERL Framework} \label{sec: SPERL-foundation}
	In this section, we will lay out some foundations to our proposed framework for training SPE policies structured along the line of policy iteration. We divide our discussion into two parts: policy evaluation and policy improvement. In regards of policy evaluation, we will derive a recursive system satisfied by the TIC Q-function given fixed policy $\bpi$, as defined in (\ref{eq1: Policy-dependent VFs}). At this stage, we have not applied any game-theoretic concepts and instead focus on the validity of the recursive evaluation scheme. We will elaborate on how to use these $\bpi$-dependent Q-functions to structure our search for an SPE policy in our policy improvement. Borrowing the SPE notion of optimality, we will propose a new class of policy iteration algorithms, called backward policy iteration (BPI), which possess desirable analytical properties such as monotonicity (i.e. policy improvement theorem-alike) and convergence to SPE policy in both discrete and continuous state-action spaces.

\subsection{Policy Evaluation (PolEva)} \label{sec:PE}
In this subsection, we set up a recursive evaluation of expected TIC rewards given a fixed policy $\bpi$. Unlike in the case of TC rewards, there is no natural recursive equations between TIC value functions; be it state-value or action-value functions. We apply the techniques, which were used to derive the extended Bellman equations in \cite{Bjoerk2014}, to obtain a backward inductive policy evaluation (PolEva) scheme.

First, we define a few adjustment functions that will help tracking the non-stationary changes in the $Q$-recursion.
\begin{definition}[Adjustment Functions]
	Given any fixed policy $\bpi \in \Pi^{MD}$, we define the following (policy-dependent) adjustment functions.
	\begin{eqnarray}
		f^{\bpi, \tau, y}_t(x, u) & \doteq & \mathbb{E}_{t, x}\left[\mathcal{F}_{\tau}(y, X_T^{u\oplus_t\bpi})\right], \label{eq1: adjustment-functions}\\
		g^{\bpi}_t(x, u) & \doteq & \mathbb{E}_{t, x}\left[X_T^{u\oplus_t\bpi}\right], \label{eq2: adjustment-functions}\\
		r^{\bpi, \tau, m, y}_t(x, u) & \doteq & \mathbb{E}_{t, x}\left[\mathcal{R}_{\tau, m}\left(y, X_m^{u\oplus_t{}{}^{m-1}\bpi}, \pi_m(X_m^{u\oplus_t{}{}^{m-1}\bpi})\right)\right] \label{eq3: adjustment-functions}
	\end{eqnarray}
	for $t\in\mathcal{T}$, $\tau \in {}{}_t\mathcal{T}$, $m \in {}{}_\tau\mathcal{T}$, $y \in \mathcal{X}_\tau$, $x \in \mathcal{X}_t$, and $u \in \mathcal{U}_t$.
	\label{def: adjustment-functions}
\end{definition}
\begin{remark}
	Again, we note that by Definitions \ref{def: Policy-dependent VFs} and \ref{def: adjustment-functions}, for each player $t$, its action-value function and adjustment functions are independent of \textit{past players'} policies ${}{}^{t-1}\bpi$.
\end{remark}

Now, we are ready to present the main result of this subsection, from which we later set up our TIC PolEva scheme.
\begin{proposition}[Policy-dependent TIC $Q$-recursion]	\label{prop: policy-dependent-Q-recursion}
	Let $\bpi\in \Pi^{MD}$ be any fixed policy and $Q, f, g, r$ defined as in Definitions \ref{def: Policy-dependent VFs} and \ref{def: adjustment-functions}. Then, the following holds for every fixed $t\in\mathcal{T}$, $\tau \in {}{}_t\mathcal{T}$, $m \in {}{}_\tau\mathcal{T}$, $y \in \mathcal{X}_\tau$, $x \in \mathcal{X}_t$, and $u \in \mathcal{U}_t$,
	\begin{enumerate}
		\item the adjustment function $r^{\bpi, \tau, m, y}$ satisfy the equations
		\begin{eqnarray}
			r^{\bpi, \tau, m, y}_t(x, u) &=& \mathbb{E}_{t,x}\left[r^{\bpi, \tau, m, y}_{t+1}(X^u_{t+1}, \pi_{t+1}(X^u_{t+1}))\right], \hbox{ for } m\not=t,~t<T-1, \label{PE-r-true-recursion}\\
			r^{\bpi, t, t, y}_t(x, u) &=& \mathbb{E}_{t,x} \left[ \mathcal{R}_{t, t}(y, x, u) \right], \hbox{ for } t<T-1, \label{PE-r-true-boundary0} \\
			r^{\bpi, T-1, T-1, y}_{T-1}(x, u) &=& \mathbb{E}_{T-1,x} \left[ \mathcal{R}_{T-1, T-1}(y, X^u_T, u) \right]; \label{PE-r-true-boundary}
		\end{eqnarray}
		
		\item the adjustment function $f^{\bpi, \tau, y}$ satisfy the equations
		\begin{eqnarray}
			f^{\bpi, \tau,y}_t(x, u) &=& \mathbb{E}_{t,x}\left[f^{\bpi, \tau,y}_{t+1}(X^u_{t+1}, \pi_{t+1}(X^u_{t+1}))\right],\hbox{ for } t < T-1, \label{PE-f-true-recursion}\\
			f^{\bpi, \tau, y}_{T-1}(x, u) &=& \mathbb{E}_{T-1,x}\left[\mathcal{F}_{\tau}(y, X^u_T)\right]; \label{PE-f-true-boundary}
		\end{eqnarray}
		
		\item the adjustment function $g^{\bpi}$ satisfy the equations
		\begin{eqnarray}
			g^{\bpi}_t(x, u) &=& \mathbb{E}_{t,x}\left[g^{\bpi}_{t+1}(X^u_{t+1}, \pi_{t+1}(X^u_{t+1}))\right],\hbox{ for } t < T-1, \label{PE-g-true-recursion}\\
			g^{\bpi}_{T-1}(x, u) &=& \mathbb{E}_{t,x}\left[ X_T^u \right]; \label{PE-g-true-boundary}
		\end{eqnarray}
		
		\item the action-value function $Q^{\bpi}$ satisfies the equations \small
		\begin{eqnarray}
			Q^{\bpi}_t(x, u) &=& r^{\bpi, t, t, x}_t(x, u) + \mathbb{E}_{t,x}\left[Q^{\bpi}_{t+1}(X^u_{t+1}, \pi_{t+1}(X^u_{t+1}))\right] \label{PE-Q-true-recursion} \\
			&&- \left\{ \sum_{m=t+1}^{T-1} \left( \mathbb{E}_{t,x} \left[ r_{t+1}^{\bpi, t+1, m, X^u_{t+1}}(X^u_{t+1}, \pi_{t+1}(X^u_{t+1}))\right] - r_t^{\bpi, t,m,x}(x, u) \right) \right\} \nonumber \\
			&&- \left\{\mathbb{E}_{t,x}\left[f^{\bpi, t+1, X^u_{t+1}}_{t+1}(X^u_{t+1}, \pi_{t+1}(X^u_{t+1}))\right] -
			f^{\bpi, t,x}_t(x, u) \right\} \nonumber \\
			&&- \left\{ \mathbb{E}_{t,x}\left[\mathcal{G}_{t+1}(X^u_{t+1}, g^{\bpi}_{t+1}(X^u_{t+1}, \pi_{t+1}(X^u_{t+1})))\right] - \mathcal{G}_t\left(x, 
			g^{\bpi}_t(x, u)\right)\right\},\hbox{ for } t < T-1, \nonumber \\
			{\textstyle Q^{\bpi}_{T-1}(x, u)} &=& {\textstyle r^{\bpi, T-1, T-1, x}_{T-1}(x, u) + f^{\bpi, T-1, x}_{T-1}(x, u) + \mathcal{G}_{T-1}(x, g^{\bpi}_{T-1}(x, u))}. \label{PE-Q-true-boundary}
		\end{eqnarray}
	\end{enumerate}
\end{proposition}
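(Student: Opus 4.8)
The plan is to treat the three adjustment-function systems (parts 1--3) separately from the $Q$-recursion (part 4), since the former are pure tower-property identities while the latter is an algebraic rearrangement layered on top of them. The single unifying tool is the Markov property together with the stationary transition assumption: under the delayed policy $u\oplus_t\bpi$ the trajectory after time $t$ evolves under $\bpi$, so conditioning on the next state $X^u_{t+1}$ and applying the tower property $\mathbb{E}_{t,x}[\,\cdot\,]=\mathbb{E}_{t,x}\big[\mathbb{E}_{t+1,X^u_{t+1}}[\,\cdot\,]\big]$ reduces any time-$t$ expectation to a time-$(t+1)$ one started from $X^u_{t+1}$ and continued under $\bpi$, i.e.\ with action $\pi_{t+1}(X^u_{t+1})$.

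For $f$ and $g$, I would apply this conditioning directly to the definitions \eqref{eq1: adjustment-functions} and \eqref{eq2: adjustment-functions}: the inner conditional expectation is, by definition, $f^{\bpi,\tau,y}_{t+1}$ (resp.\ $g^{\bpi}_{t+1}$) evaluated at $(X^u_{t+1},\pi_{t+1}(X^u_{t+1}))$, which yields \eqref{PE-f-true-recursion} and \eqref{PE-g-true-recursion}; the boundaries \eqref{PE-f-true-boundary} and \eqref{PE-g-true-boundary} are just the definitions read off at $t=T-1$. For $r$ in \eqref{eq3: adjustment-functions}, the same conditioning works whenever the reward index satisfies $m>t$ (equivalently $m\neq t$, since $m\ge\tau\ge t$): the reward $\mathcal{R}_{\tau,m}$ then lies strictly in the future, its reference pair $(\tau,y)$ is held fixed along the recursion, and the action at every future time is supplied by $\bpi$, giving \eqref{PE-r-true-recursion}. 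The case $m=t$ is a genuine boundary because the reward is emitted at the current step under the free action $u$ (via the convention $u\oplus_t{}^{m-1}\bpi=u$ when $m-1\le t$), producing \eqref{PE-r-true-boundary0} and its terminal counterpart \eqref{PE-r-true-boundary}.

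For part 4, I would first record the \emph{flat} expansion obtained by substituting the criterion \eqref{TIC-objective} into $V^{u\oplus_t\bpi}_t(x)=Q^{\bpi}_t(x,u)$ (cf.\ \eqref{eq1: Policy-dependent VFs}) and matching each summand against Definition \ref{def: adjustment-functions} with reference pair $(\tau,y)=(t,x)$:
\[
Q^{\bpi}_t(x,u)=\sum_{s=t}^{T-1} r^{\bpi,t,s,x}_t(x,u)+f^{\bpi,t,x}_t(x,u)+\mathcal{G}_t\big(x,g^{\bpi}_t(x,u)\big),
\]
whose $t=T-1$ instance is exactly the boundary \eqref{PE-Q-true-boundary}. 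Writing the same expansion for $Q^{\bpi}_{t+1}\big(X^u_{t+1},\pi_{t+1}(X^u_{t+1})\big)=V^{\bpi}_{t+1}(X^u_{t+1})$ and taking $\mathbb{E}_{t,x}$ then expresses $\mathbb{E}_{t,x}[Q^{\bpi}_{t+1}]$ as a sum of three ``future-reference'' pieces carrying reference $(t+1,X^u_{t+1})$. Comparing with \eqref{PE-Q-true-recursion}, the subtracted terms in the three correction brackets coincide term-by-term with these pieces and cancel $\mathbb{E}_{t,x}[Q^{\bpi}_{t+1}]$, while the added ``current-reference'' terms at $(t,x)$ reassemble, together with the leading $r^{\bpi,t,t,x}_t(x,u)$, into the flat expansion of $Q^{\bpi}_t(x,u)$ above.

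The main obstacle is the reference-point bookkeeping forced by the TIC sources. Because $\mathcal{R}_{\tau,m}$, $\mathcal{F}_\tau$ and $\mathcal{G}_\tau$ all depend on the evaluation pair $(\tau,y)$, the term $r^{\bpi,t+1,m,X^u_{t+1}}_{t+1}$ appearing inside $\mathbb{E}_{t,x}[Q^{\bpi}_{t+1}]$ is computed with reference $(t+1,X^u_{t+1})$ and is genuinely unequal to $r^{\bpi,t,m,x}_t$; the adjustment recursions of parts 1--3 only propagate a \emph{fixed} reference downward in time, so one cannot collapse the two and the correction brackets are unavoidable --- this is precisely the algebraic footprint of the BPO failure. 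The second delicate point is the nonlinearity of $\mathcal{G}$ in its second slot: it must be kept intact as $\mathcal{G}_{t+1}(X^u_{t+1},g^{\bpi}_{t+1})$ underneath $\mathbb{E}_{t,x}$ and never commuted with the expectation, which is why it enters \eqref{PE-Q-true-recursion} through its own correction bracket rather than through the linear $g$-recursion.
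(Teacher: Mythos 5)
Your proposal is correct and follows essentially the same route as the paper: parts 1--3 via the tower property applied directly to Definition \ref{def: adjustment-functions}, and part 4 via the flat expansion of $Q^{\bpi}_t(x,u)$ into $\sum_{m=t}^{T-1} r^{\bpi,t,m,x}_t + f^{\bpi,t,x}_t + \mathcal{G}_t(x,g^{\bpi}_t)$ combined with adding and subtracting $\mathbb{E}_{t,x}\left[Q^{\bpi}_{t+1}(X^u_{t+1},\pi_{t+1}(X^u_{t+1}))\right]$ and expanding the subtracted copy at reference $(t+1,X^u_{t+1})$. The only difference is presentational --- you verify the cancellation from the right-hand side while the paper derives it from the left-hand side --- which is the same algebraic identity.
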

\begin{proof}
	This proof is similarly developed as in \cite{Bjoerk2014}. The boundary values \eqref{PE-r-true-boundary0}, \eqref{PE-r-true-boundary}, \eqref{PE-f-true-boundary}, \eqref{PE-g-true-boundary}, and \eqref{PE-Q-true-boundary} can be easily computed from Definitions \ref{def: Policy-dependent VFs} and \ref{def: adjustment-functions}. 
	\begin{enumerate}
		\item For the $r$ recursion and $m \not= t$, we have \small
		\begin{align}
			r^{\bpi, \tau, m, y}_t(x, u) &= \mathbb{E}_{t,x}\left[ \mathcal{R}_{\tau, m}(y, X^{u\oplus_t{}{}^{m-1}\bpi}_m, \pi_m(X^{u\oplus_t{}{}^{m-1}\bpi}_m)) \right] \tag*{(by (\ref{eq3: adjustment-functions}))}\\
			&= \mathbb{E}_{t,x}\left[ \mathbb{E}_{t+1, X^u_{t+1}} \left[ \mathcal{R}_{\tau, m}(y, X^{{}{}^{m-1}_{t+1}\bpi}_m, \pi_m(X^{{}{}^{m-1}_{t+1}\bpi}_m)) \right] \right] \tag*{(by the tower rule)} \\
			&= \mathbb{E}_{t,x}\left[ r^{\bpi, \tau, m, y}_{t+1}(X^u_{t+1}, \pi_{t+1}(X^u_{t+1})) \right]. \tag*{(by (\ref{eq3: adjustment-functions}))}
		\end{align} \normalsize
		
		\item For the $f$ recursion and $t < T-1$, by (\ref{eq1: adjustment-functions}) and the tower rule, we similarly have \small
		\begin{align*}
			f^{\bpi, \tau,y}_t(x, u) &=
			\mathbb{E}_{t, x}\left[ \mathcal{F}_{\tau}(y, X_T^{u\oplus_t\bpi})\right]= \mathbb{E}_{t,x}\left[\mathbb{E}_{t+1, X^u_{t+1}}\left[ \mathcal{F}_{\tau}\left(y, X_T^{{}{}_{t+1}\bpi}\right)\right]\right] \\
			&= \mathbb{E}_{t,x}\left[ f^{\bpi, \tau, y}_{t+1}(X^u_{t+1}, \pi_{t+1}(X^u_{t+1})) \right].
		\end{align*} \normalsize
		
		\item For the $g$ recursion and $t < T-1$, by (\ref{eq2: adjustment-functions}) and the tower rule, we similarly have
		$$
		g^{\bpi}_t(x, u)=\mathbb{E}_{t, x}\left[X_T^{u\oplus_t\bpi}\right]=\mathbb{E}_{t,x}\left[\mathbb{E}_{t+1, X^u_{t+1}}\left[X_T^{{}{}_{t+1}\bpi}\right]\right]=\mathbb{E}_{t, x}\left[ g^{\bpi}_{t+1}(X^u_{t+1}, \pi_{t+1}(X^u_{t+1})) \right].
		$$
		
		\item For the $Q$ recursion and $t < T-1$, by (\ref{eq1: Policy-dependent VFs}) and the tower rule, we have \footnotesize
		\begin{align*}
			Q^{\bpi}_t(x, u) = & \; \mathbb{E}_{t,x}[\mathcal{R}_{t,t}(x, x, u)] + \mathbb{E}_{t,x}\left[Q^{\bpi}_{t+1}(X^u_{t+1}, \pi_{t+1}(X^u_{t+1}))\right] - \mathbb{E}_{t,x}\left[Q^{\bpi}_{t+1}(X^u_{t+1}, \pi_{t+1}(X^u_{t+1}))\right] \nonumber \\
			& + \sum_{m = t+1}^{T-1} r^{\bpi,t, m, x}_t(x, u) + f^{\bpi,t, x}_t(x, u) + \mathcal{G}_t\left(x, g^{\bpi}_t(x, u)\right) \\
			= & \; \mathbb{E}_{t,x}[\mathcal{R}_{t,t}(x, x, u)] + \mathbb{E}_{t,x}\left[Q^{\bpi}_{t+1}(X^u_{t+1}, \pi_{t+1}(X^u_{t+1}))\right] \nonumber \\
			& - \sum_{m = t+1}^{T-1}\mathbb{E}_{t,x}\left[ \mathbb{E}_{t+1, X^u_{t+1}}\left[ \mathcal{R}_{t+1, m}(X^u_{t+1}, X^{{}{}_{t+1}^{m-1}\bpi}_m, \pi_m(X^{{}{}_{t+1}^{m-1}\bpi}_m)) \right]\right] + \sum_{m = t+1}^{T-1} r^{\bpi,t, m, x}_t(x, u) \nonumber \\
			& - \mathbb{E}_{t,x}\left[ \mathbb{E}_{t+1, X^u_{t+1}} \left[ \mathcal{F}_{t+1}(X^u_{t+1}, X^{{}{}_{t+1}\bpi}_T) \right] \right] + f^{\bpi,t, x}_t(x, u) \nonumber \\
			& - \mathbb{E}_{t,x}\left[ \mathcal{G}_{t+1}(X^u_{t+1}, g^{\bpi}_{t+1}(X^u_{t+1}, \pi_{t+1}(X^u_{t+1}))) \right] + \mathcal{G}_t\left(x, g^{\bpi}_t(x, u)\right), 
		\end{align*} \normalsize
		which gives the right-hand side (RHS) of \eqref{PE-Q-true-recursion} by noting \eqref{eq1: adjustment-functions}-\eqref{eq3: adjustment-functions}.
	\end{enumerate}
\end{proof}

The SPERL PolEva scheme follows trivially by making the updates of $f, g, r$ and $Q$ flow backward from $t = T-1$ to $0$; see Algorithm \ref{alg: TD-based PE}.

\begin{algorithm}[H]
	\SetAlgoLined
	\SetKwInOut{Input}{Input}
	\SetKwInOut{Output}{Output}
	\SetKwInOut{Loop}{Loop}
	
	\Input{${}{}_{t+1}\bpi$}
	\Output{$Q^{\bpi}_t(x, u), \forall x, u$}
	\For{$k \gets T-1$ \KwTo $t$}{
		\For{$\tau \gets T-1$ \KwTo $k$}{	\label{algline:start-adj}
			\For{$m \gets T-1$ \KwTo $\tau$}{
				Compute 
				$r^{\bpi, \tau, m, y}_k(x, u), \forall x\in \mathcal{X}_k, u\in\mathcal{U}_k, y\in\mathcal{X}_\tau$ by (\ref{PE-r-true-recursion})-(\ref{PE-r-true-boundary})\;
			}
			Compute $f^{\bpi, \tau, y}_k(x,u), \forall x\in \mathcal{X}_k, u\in\mathcal{U}_k, y\in\mathcal{X}_\tau$ by (\ref{PE-f-true-recursion})-(\ref{PE-f-true-boundary})\;
		}
		Compute $g^{\bpi}_k(x,u), \forall x\in \mathcal{X}_k, u\in\mathcal{U}_k$ by (\ref{PE-g-true-recursion})-(\ref{PE-g-true-boundary})\; \label{algline:end-adj}
		Compute $Q^{\bpi}_k(x,u), \forall x\in \mathcal{X}_k, u\in\mathcal{U}_k$ by (\ref{PE-Q-true-recursion})-(\ref{PE-Q-true-boundary})\;
	}
	\caption{TIC-TD Policy Evaluation (PolEva)}
	\label{alg: TD-based PE}
\end{algorithm}
Next, we will justify the validity of Algorithm \ref{alg: TD-based PE} for computing $Q^{\bpi}_t(x, u)$. Firstly, note that the $t$-indexed \textit{adjustment functions} in the RHS of (\ref{PE-Q-true-recursion}) have been computed in the same iteration $k = t$ by lines \ref{algline:start-adj}-\ref{algline:end-adj}. We can verify the validity of the adjustment functions computation scheme by observing that in the non-boundary cases, the $(t+1)$-indexed functions inside the expectations in the RHS of (\ref{PE-r-true-recursion}), (\ref{PE-f-true-recursion}), and (\ref{PE-g-true-recursion}) have all been computed in the previous iteration $k = t+1$ such that an expectation over $X_{t+1}$ can be computed. For the boundary cases, the functions inside the expectations i.e. $\mathcal{R}, \mathcal{F}$ are known such that expectations can be computed, either over the deterministic variable in (\ref{PE-r-true-boundary}) or the random $X_T$ in (\ref{PE-f-true-boundary}) and (\ref{PE-g-true-boundary}). Secondly, we observe that the $(t+1)$-indexed functions inside the expectations in the RHS of (\ref{PE-Q-true-recursion}) have all been computed in the previous iteration $k = t+1$ such that the expectation over $X_{t+1}$ can then be computed. Finally, by Proposition \ref{prop: policy-dependent-Q-recursion}, we have shown that (\ref{eq1: Policy-dependent VFs}) holds.

In the subsequent sections, we will refer to the integrands in the RHS of Proposition \ref{prop: policy-dependent-Q-recursion} as DP targets defined as follows \small
\begin{align}
	\xi^r_t(x, u, \tau, m, y; \, {}{}_{t+1}\bpi) &\doteq
	\begin{cases}
		\mathcal{R}_{T-1,T-1}(y,X^u_T,u), &\text{if $m=\tau=t=T-1$},\\
		\mathcal{R}_{t, t}(y, x, u), &\text{if $m=\tau=t, \forall t<T-1$},\\
		r^{\bpi, \tau, m, y}_{t+1}(X^u_{t+1}, \pi_{t+1}(X^u_{t+1})), \, &\text{if $m\not=t$, $\forall t<T-1$},
	\end{cases} \label{r-DP-target} \\
	\xi^f_t(x, u, \tau, y; \, {}{}_{t+1}\bpi) &\doteq 
	\begin{cases}
		\mathcal{F}_{\tau}(y, X^u_T) &\text{if } t = T-1,\\
		f^{\bpi, \tau, y}_{t+1}(X^u_{t+1}, \pi_{t+1}(X^u_{t+1})) &\text{otherwise},
	\end{cases} \label{f-DP-target} \\
	\xi^g_t(x, u; \, {}{}_{t+1}\bpi) &\doteq
	\begin{cases}
		X^u_T &\text{if $t = T-1$},\\
		g^{\bpi}_{t+1}(X^u_{t+1}, \pi_{t+1}(X^u_{t+1})) &\text{otherwise},
	\end{cases} \label{g-DP-target}\\
	\xi^Q_t(x, u; \, {}{}_{t+1}\bpi) &\doteq 
	\begin{cases}
		r^{\bpi, t, t, x}_t(x, u) + f^{\bpi, t, x}_t(x, u) + \mathcal{G}_t(x, g^{\bpi}_t(x, u))
		&\text{if $t = T-1$},\\
		\scalebox{0.85}{$r^{\bpi, t, t, x}_t(x, u) + Q^{\bpi}_{t+1}(X^u_{t+1}, \pi_{t+1}(X^u_{t+1})) - (\Delta r^{\bpi}_t + \Delta f^{\bpi}_t + \Delta g^{\bpi}_t),$} &\text{otherwise},
	\end{cases} \label{Q-DP-target}
\end{align} \normalsize
where
\begin{align*}
	\Delta r^{\bpi}_t &\doteq \sum_{m = t+1}^{T-1} \left( r^{\bpi, t+1, m, X^u_{t+1}}(X^u_{t+1}, \pi_{t+1}(X^u_{t+1})) - r^{\bpi, t, m, x}_t(x, u) \right), \\
	\Delta f^{\bpi}_t &\doteq f^{\bpi, t+1, X^u_{t+1}}_{t+1}(X^u_{t+1}, \pi_{t+1}(X^u_{t+1})) - f^{\bpi, t, x}_t(x, u), \\
	\Delta g^{\bpi}_t &\doteq \mathcal{G}_{t+1}(X^u_{t+1}, g^{\bpi}_{t+1}(X^u_{t+1}, \pi_{t+1}(X^u_{t+1}))) - \mathcal{G}_t(x, g^{\bpi}_t(x, u)).
\end{align*}


\subsection{Policy Improvement (PolImp)}
This subsection is mainly concerned about adapting the SPE notion of optimality to construct an SPE-improvement scheme. We deliberately deviate from the standard policy improvement (PolImp) scheme due to unprovable PIT under TIC\footnote{This has been shown by \cite{Sobel1982} through counterexample, particularly showing that $\forall s$,
	\begin{align*}
		V^{\bpi'}(s) \geq V^{\bpi}(s) \not\Rightarrow V^{\delta \cdot \bpi'}(s) \geq V^{\delta \cdot \bpi}(s).  
	\end{align*}
	While his work considers infinite-horizon variance-related criterion, finite-horizon RL problems are often rewritten as infinite-horizon problems with time-extended state-space; see \cite{Harada1997}. Moreover, as his argument relies mainly on BPO violation, it also applies to our case.}. Intuitively, we hypothesize that the cause of such PIT failure lies in the definition of `improvement' itself such that the SPE notion of optimality should be accompanied by a corresponding change in the definition of a `better' policy.

To aid our result presentation in the later part of this section, we first define several SPE-improvement relations on the (tail) truncation of policies.
\begin{definition}[SPE-Improving Rules on Truncated Policy Sequences] \label{def: SPE-Improve-Policy-Seq}
	\small
	\begin{align*}
		{}{}_k\bpi' \succeq_{eq} {}{}_k\bpi &\Leftrightarrow \left(\, \forall x \in \mathcal{X}_{k},~Q^{\bpi'}_{k}(x, \pi'_{k}(x)) \geq Q^{\bpi'}_{k}(x, \pi_{k}(x)) \; \land \; {}{}_{k+1}\bpi' \succeq_{eq} {}{}_{k+1}\bpi \, \right),\\
		{}{}_k\bpi' \sim_{eq} {}{}_k\bpi &\Leftrightarrow \left( \, \forall x \in \mathcal{X}_k,~Q^{\bpi'}_k(x, \pi'_k(x)) = Q^{\bpi'}_k(x, \pi_k(x)) \, \land \, {}{}_{k+1}\bpi' \sim_{eq} {}{}_{k+1}\bpi \, \right),\\
		{}{}_k\bpi' \succ_{eq} {}{}_k\bpi &\Leftrightarrow \left(\, \exists x \in \mathcal{X}_{k},~Q^{\bpi'}_{k}(x, \pi'_{k}(x)) > Q^{\bpi'}_{k}(x, \pi_{k}(x)) \; \land \; {}{}_k\bpi' \succeq_{eq} {}{}_k\bpi \, \right).
	\end{align*}
\end{definition}
Then, we define our SPE-improving rules on the set of policies $\Pi^{MD}$.
\begin{definition}[SPE-Improving Rules] \label{def: SPE-improving rule}
	Let us define the following relations on the set of all policies $\Pi^{MD}$ such that for any arbitrary policies $\bpi', \bpi \in \Pi^{MD}$,
	\begin{align*}
		\bpi' \succeq_{eq} \bpi &\Leftrightarrow \left(\forall k, \forall x \in \mathcal{X}_k,~Q^{\bpi'}_k(x, \pi'_k(x)) \geq Q^{\bpi'}_k(x, \pi_k(x)) \, \land \, {}{}_{k+1}\bpi' \succeq_{eq} {}{}_{k+1}\bpi \right),\\
		\bpi' \sim_{eq} \bpi &\Leftrightarrow \left(\forall k, \forall x \in \mathcal{X}_k,~Q^{\bpi'}_k(x, \pi'_k(x)) = Q^{\bpi'}_k(x, \pi_k(x)) \, \land \, {}{}_{k+1}\bpi' \sim_{eq} {}{}_{k+1}\bpi \right),\\
		\bpi' \succ_{eq} \bpi &\Leftrightarrow \left(\exists k, \exists x \in \mathcal{X}_k \text{ s.t. }Q^{\bpi'}_k(x, \pi'_k(x)) > Q^{\bpi'}_k(x, \pi_k(x)) \, \land \, {}{}_k\bpi' \succeq_{eq} {}{}_k\bpi \right).
	\end{align*}
\end{definition}
These rules are inspired by the Nash Equilibrium (NE) concept of game-theory; specifically, we can interpret the relation $\succeq_{eq}$ as a PolImp rule by the following statement:
\begin{center}
	``Player $t$'s strategy $\bpi'_t$ is said to be \textit{better} if playing the strategy $\pi'_t$ improves $t$'s utility, given other players have the same belief about $\bpi' \succeq_{eq} \bpi$ and thus play $\bpi'_{-t}$."
\end{center}
We note that as compared to the NE concept, our SPE-improving rule restricts the set of player $t$'s opponents from $\{0, 1, \dotso, t-1, t+1, \dotso, T-1\}$ to $\{t+1, \dotso, T-1\}$, which is implied by Remark \ref{remark: Markov-SPE-policy} on the concept of SPE solution.

The two SPE-improving rules defined above are related by the following equivalence result.
\begin{proposition} \label{prop: SPE-Improving Equivalence}
	Consider two arbitrary policies $\bpi, \bpi' \in \Pi^{MD}$. Then, the following holds
	\begin{align}
		\bpi' \succeq_{eq} \bpi ~\Leftrightarrow~ \forall k,~{}{}_k\bpi' \succeq_{eq} {}{}_k\bpi, \label{eq1: SPE-Improving Equivalence} \\
		\bpi' \sim_{eq} \bpi ~\Leftrightarrow~ \forall k,~{}{}_k\bpi' \sim_{eq} {}{}_k\bpi. \label{eq2: SPE-Improving Equivalence}
	\end{align}
\end{proposition}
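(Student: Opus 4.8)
The plan is to reduce each side of the claimed equivalences to one and the same ``flat'' (non-recursive) predicate, so that the two sides become equivalent by transitivity. Introduce the abbreviation $P(j)$ for the statement ``$\forall x \in \mathcal{X}_j,~Q^{\bpi'}_j(x,\pi'_j(x)) \ge Q^{\bpi'}_j(x,\pi_j(x))$'', and let $E(j)$ denote the analogous predicate with $\ge$ replaced by $=$, which serves the $\sim_{eq}$ case. The entire argument then rests on a single unfolding lemma: for every $k\in\mathcal{T}$, ${}{}_k\bpi' \succeq_{eq} {}{}_k\bpi \Leftrightarrow \bigwedge_{j=k}^{T-1} P(j)$, and likewise ${}{}_k\bpi' \sim_{eq} {}{}_k\bpi \Leftrightarrow \bigwedge_{j=k}^{T-1} E(j)$.

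I would establish this lemma by backward induction on $k$, driven by the recursive clauses of Definition \ref{def: SPE-Improve-Policy-Seq}. The base case is $k=T-1$: since ${}{}_T\mathcal{T}=\emptyset$, the tail relation ${}{}_{T}\bpi' \succeq_{eq} {}{}_{T}\bpi$ is vacuously true by convention, so the defining clause collapses to $P(T-1)$, which is exactly the one-term conjunction $\bigwedge_{j=T-1}^{T-1}P(j)$. For the inductive step, Definition \ref{def: SPE-Improve-Policy-Seq} reads ${}{}_k\bpi' \succeq_{eq} {}{}_k\bpi \Leftrightarrow \bigl(P(k)\,\land\,{}{}_{k+1}\bpi' \succeq_{eq} {}{}_{k+1}\bpi\bigr)$; substituting the induction hypothesis $\bigwedge_{j=k+1}^{T-1}P(j)$ for the tail relation yields $\bigwedge_{j=k}^{T-1}P(j)$, as required. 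The $\sim_{eq}$ version is verbatim with $E$ in place of $P$.

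With the lemma available, both sides of \eqref{eq1: SPE-Improving Equivalence} collapse to the same object. The right-hand side is $\forall k,~{}{}_k\bpi' \succeq_{eq} {}{}_k\bpi$, which by the lemma equals $\forall k,~\bigwedge_{j=k}^{T-1}P(j)$; this is equivalent to the full conjunction $\bigwedge_{j=0}^{T-1}P(j)$, since the $k=0$ conjunct is the strongest among all $k$ and is itself one of the conjuncts. For the left-hand side, Definition \ref{def: SPE-improving rule} unfolds $\bpi' \succeq_{eq} \bpi$ as $\forall k,~\bigl(P(k)\,\land\,{}{}_{k+1}\bpi' \succeq_{eq} {}{}_{k+1}\bpi\bigr)$; applying the lemma to the tail relation turns each conjunct into $P(k)\land\bigwedge_{j=k+1}^{T-1}P(j)=\bigwedge_{j=k}^{T-1}P(j)$, so the left-hand side reduces to $\forall k,~\bigwedge_{j=k}^{T-1}P(j)$, i.e. to $\bigwedge_{j=0}^{T-1}P(j)$. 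Hence both sides of \eqref{eq1: SPE-Improving Equivalence} are equivalent to $\bigwedge_{j=0}^{T-1}P(j)$, which proves \eqref{eq1: SPE-Improving Equivalence}; replacing $P$ by $E$ throughout proves \eqref{eq2: SPE-Improving Equivalence}.

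I do not expect a genuine obstacle here, since the content is logical bookkeeping rather than analysis. The only points requiring care are the vacuous base case at $k=T-1$ (treating the empty tail ${}{}_T\bpi$ with the right convention) and the observation that the outer $\forall k$ quantifier in both lemma-reduced forms is redundant because its $k=0$ instance is the strongest. It is worth noting explicitly that Definition \ref{def: SPE-improving rule} already builds in this redundancy, imposing both the pointwise condition at every $k$ and the tail relation, so the proposition is essentially a consistency check confirming that the set-level and truncated-sequence definitions of $\succeq_{eq}$ and $\sim_{eq}$ agree.
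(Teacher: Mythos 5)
Your proof is correct and takes essentially the same route as the paper's: both arguments are pure definition-unfolding, resting on the observation that each $k$-conjunct in Definition \ref{def: SPE-improving rule} is exactly the defining clause of ${}{}_k\bpi' \succeq_{eq} {}{}_k\bpi$ in Definition \ref{def: SPE-Improve-Policy-Seq}. Your version is somewhat more explicit than the paper's two-line proof, since you flatten the recursion into the conjunction $\bigwedge_{j=k}^{T-1}P(j)$ by backward induction and spell out the vacuous base case at $k=T-1$ and the redundancy of the outer $\forall k$, but the underlying idea is identical.
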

\begin{proof}
	First, we show that (\ref{eq1: SPE-Improving Equivalence}) holds. By Definition \ref{def: SPE-improving rule}, $\bpi' \succeq_{eq} \bpi$ is equivalent to
	$$
	\forall k, \forall x \in \mathcal{X}_k, \quad Q^{\bpi'}_k(x, \pi'_k(x)) \geq Q^{\bpi'}_k(x, \pi_k(x)) \, \land \, ({}{}_{k+1}\bpi' \succeq_{eq} {}{}_{k+1}\bpi),
	$$
	which by Definition \ref{def: SPE-Improve-Policy-Seq}, is equivalent to $\forall k$, ${}{}_{k+1}\bpi' \succeq_{eq} {}{}_{k+1}\bpi$. We can show (\ref{eq2: SPE-Improving Equivalence}) similarly by replacing the relation $\succeq_{eq}$ with $\sim_{eq}$.
\end{proof}

Now, we introduce backward policy iteration (BPI) algorithm that can achieve these SPE-improving rules; see Algorithm \ref{alg: backward-policy-update}.

\begin{algorithm}[H]
	\SetAlgoLined
	\SetKwInOut{Input}{Input}
	\SetKwInOut{Output}{Output}
	\SetKwInOut{Init}{Initialize}
	\Input{$\bpi^{0}\not=\emptyset$}
	\Output{$\bpi^*=\bpi'$}
	\Init{$\bpi' \gets \bpi^{0}, \bpi \gets \emptyset$\;}
	
	\While{\textit{not stable ($\bpi' \neq \bpi$)}}{
		Update $\bpi \gets \bpi'$\;
		\For{$k \gets T-1$ \KwTo $0$}{
			\nonl \textbf{1. Policy Evaluation (PolEva)}\;
			Compute $Q^{\bpi'}_k(x,u), \forall x \in \mathcal{X}_k, u \in \mathcal{U}_k$ by \textit{TIC-TD} (Algorithm \ref{alg: TD-based PE})\;
			
			\nonl~\\
			
			\nonl \textbf{2. Policy Improvement (PolImp)}\;
			\For{$x \in \mathcal{X}_k$}{
				\eIf{$\exists u' \in \mathcal{U}_k \text{ s.t. } Q^{\bpi'}_k(x, u')> Q^{\bpi'}_k(x, \pi_k(x))$}{
					Assign $\pi'_k(x) \gets u'$ (arbitrarily)\;}{
					Assign $\pi'_k(x) \gets \pi_k(x)$\;}}
		}
	}
	\caption{Backward Policy Iteration}
	\label{alg: backward-policy-update}
\end{algorithm}

Subsequently, we refer to different parts of BPI as follows,
\begin{itemize}
	\item \textit{TIC-TD}: the recursive method developed in Section 3.1 for computing Q-values with \textit{adjustments}; see Algorithm \ref{alg: TD-based PE}. Since BPI integrates PolEva and PolImp steps at all $k\in\mathcal{T}$, Algorithm \ref{alg: TD-based PE} is also integrated into this PolEva-PolImp-loop such that $r^{\bpi'}_k, f^{\bpi'}_k, g^{\bpi'}_k, Q^{\bpi'}_k$ at the previous iteration $k=t+1$ can be reused to compute $Q^{\bpi'}_t$.
	\item \textit{PolEva-specs}: every elements in the PolEva block that includes time-extended, $\bpi'$-based evaluation criteria (i.e.  $Q^{\bpi'}_t(x, u)$) and the use of \textit{TIC-TD}-based computation.
	\item \textit{\textit{termination}:} the \textit{while}-condition in the outer-loop, i.e. $\bpi' = \bpi$.
	\item \textit{non-termination}: the \textit{if}-condition inside PolImp block.
	\item \textit{consistent tie-break:} the element in the \textit{else}-block; with \textit{consistent tie-break} rule, \textit{non-termination-condition} characterizes \textit{termination}.
	\item \textit{strictly-improving:} $Q^{\bpi'}_t(x, u') > Q^{\bpi'}_t(x, u)$ for any old and new actions $u, u'$.
	\item \textit{action-specs}: the choice of new action $u'$ that is arbitrary \textit{strictly-improving} one.
\end{itemize}

Next, we probe some analytical properties of the BPI algorithm with an aim to obtain convergence results; specifically, to see whether the algorithm converges and if it does, to determine the property of its converged policy. We start by showing that the BPI algorithm satisfies the (weak) SPE-improving rule in Definition \ref{def: SPE-improving rule}.
\begin{proposition} \label{prop: for FIN 3.3-2}
	Let $\bpi, \bpi'$ be the old and new policies obtained through BPI. Then,
	\begin{equation} \label{eq7: for FIN 3.3-2}
		\forall k\in\mathcal{T},~{}{}_k\bpi' \succeq_{eq} {}{}_k\bpi
	\end{equation}
	and thus, $\bpi' \succeq_{eq} \bpi$.
\end{proposition}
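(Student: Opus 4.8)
The plan is to establish \eqref{eq7: for FIN 3.3-2} by backward induction on $k$, from $k=T-1$ down to $k=0$, leaning on the recursive structure of $\succeq_{eq}$ in Definition \ref{def: SPE-Improve-Policy-Seq}. The whole argument will rest on a single pointwise inequality that I will verify holds at every time step $k$ produced by BPI:
\begin{equation*}
	Q^{\bpi'}_k(x, \pi'_k(x)) \geq Q^{\bpi'}_k(x, \pi_k(x)), \quad \forall x \in \mathcal{X}_k.
\end{equation*}
This is immediate from the PolImp block of Algorithm \ref{alg: backward-policy-update}: for each fixed $x$, the \emph{if}-branch assigns $\pi'_k(x) \gets u'$ with $Q^{\bpi'}_k(x, u') > Q^{\bpi'}_k(x, \pi_k(x))$ (strict inequality), while the \emph{else}-branch assigns $\pi'_k(x) \gets \pi_k(x)$ (equality); either way the weak inequality above is satisfied.

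Before running the induction I would pin down exactly which policy the symbol $\bpi'$ denotes when $Q^{\bpi'}_k$ is evaluated inside step $k$ of the backward loop, and I expect this to be the main subtlety. Since the loop descends from $T-1$, by the time BPI reaches time $k$ the future block ${}{}_{k+1}\bpi'$ is already finalized, whereas the earlier block ${}{}^{k-1}\bpi'$ is only modified in later iterations. By the Remark following Definition \ref{def: Policy-dependent VFs}, $Q^{\bpi'}_k$ depends only on ${}{}_{k+1}\bpi'$ and the action variate at time $k$, and is independent of ${}{}^{k-1}\bpi'$. Hence the value $Q^{\bpi'}_k$ computed during step $k$ coincides with the action-value of the \emph{final} returned policy $\bpi'$, so the pointwise inequality recorded at that step stays valid for the output $\bpi'$. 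Settling this consistency is the crux; the remainder is bookkeeping.

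With the pointwise inequality available at every $k$, the induction closes quickly. For the base case $k=T-1$, the relation ${}{}_{T-1}\bpi' \succeq_{eq} {}{}_{T-1}\bpi$ unfolds via Definition \ref{def: SPE-Improve-Policy-Seq} to just the pointwise inequality at $T-1$, since the trailing clause involves the empty tail ${}{}_{T}\bpi'$ and is vacuously true. For the inductive step, assuming ${}{}_{k+1}\bpi' \succeq_{eq} {}{}_{k+1}\bpi$, I combine it with the pointwise inequality at $k$ and read off the defining clause to conclude ${}{}_k\bpi' \succeq_{eq} {}{}_k\bpi$. This yields \eqref{eq7: for FIN 3.3-2} for all $k\in\mathcal{T}$, and the final assertion $\bpi' \succeq_{eq} \bpi$ follows directly from the equivalence \eqref{eq1: SPE-Improving Equivalence} of Proposition \ref{prop: SPE-Improving Equivalence}.
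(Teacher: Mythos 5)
Your proposal is correct and follows essentially the same route as the paper's proof: the pointwise inequality $Q^{\bpi'}_k(x,\pi'_k(x)) \geq Q^{\bpi'}_k(x,\pi_k(x))$ extracted from the PolImp block, a backward induction on $k$ unfolding Definition \ref{def: SPE-Improve-Policy-Seq} (with the vacuous tail at $k=T-1$ as base case), and the final appeal to Proposition \ref{prop: SPE-Improving Equivalence}. Your explicit justification that the $Q^{\bpi'}_k$ computed mid-loop agrees with the action-value of the returned policy (via independence from past policies) is a point the paper leaves implicit, but it does not change the structure of the argument.
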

\begin{proof}
	We first note that
	\begin{equation} \label{eq5: for FIN 3.3-2}
		\forall k \in \mathcal{T}, \forall x \in \mathcal{X}_k, \quad Q^{\bpi'}_k(x, \pi'_k(x)) \geq Q^{\bpi'}_k(x, \pi_k(x)),
	\end{equation}
	implied by the PolEva step and action search step in BPI; see Algorithm \ref{alg: backward-policy-update}.
	
	Next, we show that (\ref{eq7: for FIN 3.3-2}) holds. We will use \eqref{eq5: for FIN 3.3-2} and backward induction to show
	\begin{equation} \label{eq4: for FIN 3.3-2}
		\forall x \in \mathcal{X}_k, Q^{\bpi'}_k(x, \pi'_k(x)) \geq Q^{\bpi'}_k(x, \pi_k(x)) \Rightarrow {}{}_k\bpi' \succeq_{eq} {}{}_k\bpi.
	\end{equation}
	
	(\textit{Base step}) For the base case of $k=T-1$, the premise in (\ref{eq4: for FIN 3.3-2}) states
	\begin{align*}
		\forall x \in \mathcal{X}_{T-1}, \quad Q^{\bpi'}_{T-1}(x, \pi'_{T-1}(x)) \geq Q^{\bpi'}_{T-1}(x, \pi_{T-1}(x)),
	\end{align*}
	which is equivalent to ${}{}_{T-1}\pi' \succeq_{eq} {}{}_{T-1}\pi$ by Definition \ref{def: SPE-Improve-Policy-Seq} and thus, the statement (\ref{eq4: for FIN 3.3-2}) holds.
	
	(\textit{Inductive step}) Suppose that the statement (\ref{eq4: for FIN 3.3-2}) holds for $k=t+1$. Hence, by \eqref{eq5: for FIN 3.3-2} for $k=t+1$, we have
	\begin{align}
		{}{}_{t+1}\bpi' \succeq_{eq} {}{}_{t+1}\bpi. \label{eq1: for FIN 3.3-2}
	\end{align}
	Then, at $k = t$, the premise of (\ref{eq4: for FIN 3.3-2}) states that
	\begin{align}
		\forall x \in \mathcal{X}_t, \quad Q^{\bpi'}_t(x, \pi'_t(x)) \geq Q^{\bpi'}_t(x, \pi_t(x)). \label{eq2: for FIN 3.3-2}
	\end{align}
	Combining (\ref{eq1: for FIN 3.3-2}) and (\ref{eq2: for FIN 3.3-2}), we have ${}{}_t\bpi' \succeq_{eq} {}{}_t\bpi$ by Definition \ref{def: SPE-Improve-Policy-Seq} and have thus shown that (\ref{eq4: for FIN 3.3-2}) holds for $k = t$.
	
	Finally, we can combine what we have from \eqref{eq5: for FIN 3.3-2} and the previously shown (\ref{eq4: for FIN 3.3-2}) to show that (\ref{eq7: for FIN 3.3-2}) holds; $\bpi' \succeq_{eq} \bpi$ is directly implied by Proposition \ref{prop: SPE-Improving Equivalence}.
\end{proof}

Note that Proposition \ref{prop: for FIN 3.3-2} is an implication of $\bpi'$\textit{-based PolEva-specs}, which manifests itself through the \textit{backward} update direction in BPI. Unfortunately, being (weakly) SPE-improving is still insufficient to establish monotonicity and this is due to its non-transitivity. To deal with this issue, we will define a lexicographical order on the policy set $\Pi^{MD}$. Before that, we present more properties pertaining to the relation and implication between the old and new policies obtained through BPI.

First, we present two results that characterize BPI's rules of \textit{strictly-improving action-specs} and \textit{consistent tie-break}.
\begin{corollary} \label{cor: implication-action-loop-2}
	Let $\bpi, \bpi'$ be the old and new policies obtained through BPI. Then, $\forall t \in \mathcal{T}, x \in \mathcal{X}_t$,
	\begin{align}
		(\, \exists u \in \mathcal{U}_t \text{ s.t. } Q^{\bpi'}_t(x, u) > Q_t^{\bpi'}(x, \pi_t(x))\,) \Rightarrow \pi'_t(x) \neq \pi_t(x).
	\end{align}
\end{corollary}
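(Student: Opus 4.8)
The plan is to read the statement directly off the branch structure of the PolImp block in Algorithm~\ref{alg: backward-policy-update}, since the premise of the corollary is verbatim the \textit{non-termination} condition (the \textit{if}-guard) evaluated at the pair $(k=t,x)$. First I would fix $t\in\mathcal{T}$ and $x\in\mathcal{X}_t$ and assume the premise, namely that there is some $u\in\mathcal{U}_t$ with $Q^{\bpi'}_t(x,u) > Q^{\bpi'}_t(x,\pi_t(x))$. Before invoking the algorithm, I would record one bookkeeping remark: by Definition~\ref{def: Policy-dependent VFs}, $Q^{\bpi'}_t(x,u) = V^{u\oplus_t\bpi'}_t(x)$ depends on $\bpi'$ only through its tail ${}{}_{t+1}\bpi'$ (the delaying operator discards ${}{}^{t}\bpi'$ and overrides the time-$t$ action by $u$). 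Since BPI processes times backward from $T-1$ to $0$, the tail ${}{}_{t+1}\bpi'$ is already assigned, and never revisited, by the time the inner loop reaches $k=t$; hence the $Q^{\bpi'}_t$ named in the corollary coincides with the quantity the algorithm actually compares at that step.

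With $Q^{\bpi'}_t$ thus well-defined, the assumed premise is precisely the guard of the \textit{if}-branch in the PolImp block. Therefore BPI executes the \textit{then}-branch and assigns $\pi'_t(x)\gets u'$, where $u'$ is a \textit{strictly-improving action}, i.e. some $u'\in\mathcal{U}_t$ satisfying $Q^{\bpi'}_t(x,u') > Q^{\bpi'}_t(x,\pi_t(x))$ (such a $u'$ exists by the premise; this is the \textit{action-specs} rule selecting an arbitrary strictly-improving action).

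To finish, I would argue by contradiction. Suppose $\pi'_t(x)=\pi_t(x)$. Since the assignment gives $\pi'_t(x)=u'$, this forces $u'=\pi_t(x)$, and substituting into the strict inequality above yields $Q^{\bpi'}_t(x,\pi_t(x)) > Q^{\bpi'}_t(x,\pi_t(x))$, which is impossible. Hence $\pi'_t(x)\neq\pi_t(x)$, establishing the claim.

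There is no genuine obstacle here: the corollary is essentially a restatement of how the \textit{if}-guard and the \textit{strictly-improving} assignment interact inside one inner-loop step. The only point requiring a moment's care is the bookkeeping remark in the first paragraph—confirming that the $Q^{\bpi'}_t$ appearing in the corollary is the same value the loop compares—which rests on the \textbf{backward} update direction of BPI together with the insensitivity of $Q^{\bpi'}_t$ to ${}{}^{t}\bpi'$.
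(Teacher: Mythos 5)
Your proposal is correct and follows essentially the same route as the paper, which disposes of this corollary in one line as a ``direct implication of \textit{strictly-improving action-specs}'': the premise is exactly the \textit{if}-guard, the \textit{then}-branch assigns a strictly-improving $u'$, and a strictly-improving action cannot equal $\pi_t(x)$ on pain of $Q^{\bpi'}_t(x,\pi_t(x)) > Q^{\bpi'}_t(x,\pi_t(x))$. Your additional bookkeeping remark—that the backward update direction makes ${}{}_{t+1}\bpi'$, and hence $Q^{\bpi'}_t$, well-defined at the moment the loop reaches $k=t$—is a sound elaboration of what the paper leaves implicit.
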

\begin{proof}
	Direct implication of \textit{strictly-improving action-specs}.
\end{proof}

\begin{corollary} \label{cor: implication-consistent-tie-break}
	Let $\bpi, \bpi'$ be the old and new policies obtained through BPI. Then, $\forall t \in \mathcal{T}, x \in \mathcal{X}_t$,
	\begin{equation} \label{eq1: implication-consistent-tie-break}
		Q^{\bpi'}_t(x, \pi'_t(x)) = Q^{\bpi'}(t, \pi_t(x)) \Rightarrow \pi'_t(x) = \pi_t(x).
	\end{equation}
\end{corollary}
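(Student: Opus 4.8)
The plan is to prove the implication by a direct case analysis on which branch of the PolImp block in Algorithm \ref{alg: backward-policy-update} is executed at the pair $(t,x)$. The key observation is that BPI assigns $\pi'_t(x)$ in exactly one of two mutually exclusive ways: either through the \emph{non-termination} (if-)branch, which selects a \emph{strictly-improving} action via \emph{action-specs}, or through the \emph{consistent tie-break} (else-)branch, which retains the old action. I would argue that the hypothesized equality $Q^{\bpi'}_t(x,\pi'_t(x)) = Q^{\bpi'}_t(x,\pi_t(x))$ rules out the first branch, thereby forcing the second, whose assignment is exactly the desired conclusion.

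First I would recall from Proposition \ref{prop: for FIN 3.3-2}, specifically \eqref{eq5: for FIN 3.3-2}, that $Q^{\bpi'}_t(x,\pi'_t(x)) \geq Q^{\bpi'}_t(x,\pi_t(x))$ holds unconditionally, so the hypothesized equality is genuinely the statement that no strict gain is realized at $(t,x)$. Next, I would suppose for contradiction that the if-condition held, i.e. there exists $u' \in \mathcal{U}_t$ with $Q^{\bpi'}_t(x,u') > Q^{\bpi'}_t(x,\pi_t(x))$. Then by \emph{action-specs} the algorithm sets $\pi'_t(x)$ to such a strictly-improving $u'$, whence $Q^{\bpi'}_t(x,\pi'_t(x)) > Q^{\bpi'}_t(x,\pi_t(x))$, contradicting the hypothesized equality. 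Therefore the if-condition must fail, the else-branch executes, and the \emph{consistent tie-break} rule assigns $\pi'_t(x) \gets \pi_t(x)$, establishing \eqref{eq1: implication-consistent-tie-break}.

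Since the argument is purely a reading of the assignment rules in Algorithm \ref{alg: backward-policy-update} combined with the unconditional inequality \eqref{eq5: for FIN 3.3-2}, I do not expect any genuine technical obstacle; this is the ``other direction'' companion to Corollary \ref{cor: implication-action-loop-2} and, like it, is essentially a definitional consequence of BPI's design. The only subtlety to be careful about is that the two branches are exhaustive and mutually exclusive, so that eliminating the if-branch truly forces the tie-break branch; once this is noted, the proof can be stated in a couple of lines.
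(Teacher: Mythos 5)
Your proof is correct and takes essentially the same route as the paper: the paper's own justification is precisely that the hypothesized equality forces the search into the \textit{else}-block (since the \textit{if}-branch would have produced a strictly-improving action, contradicting equality), so the \textit{consistent tie-break} assignment yields $\pi'_t(x) = \pi_t(x)$. Your version merely spells out the contradiction and the branch exhaustiveness that the paper leaves implicit; the appeal to \eqref{eq5: for FIN 3.3-2} is harmless framing but not needed.
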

\begin{proof}
	Direct implication of \textit{consistent tie-break} rule; if the premise of (\ref{eq1: implication-consistent-tie-break}) holds, then the search must have entered the \textit{else}-block and the conclusion follows.
\end{proof}

Then, we establish three lemmas concerning about the relations between two policies obtained through adjoint iterations of the BPI algorithm.
\begin{lemma} \label{lem: BWD tie-breaks}
	Let $\bpi, \bpi'$ be the old and new policies obtained through BPI. Then, for any $k \in \mathcal{T}$,
	\begin{align} \label{eq1: BWD tie-breaks}
		{}{}_k\bpi' \sim_{eq} {}{}_k\bpi \Leftrightarrow {}{}_k\bpi' = {}{}_k\bpi.
	\end{align}
\end{lemma}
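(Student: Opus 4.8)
The plan is to prove the two directions separately, with the reverse implication being essentially immediate and the forward implication resting entirely on the consistent tie-break rule captured in Corollary \ref{cor: implication-consistent-tie-break}. Throughout I would keep in mind that the relation $\sim_{eq}$ on truncated sequences in Definition \ref{def: SPE-Improve-Policy-Seq} is recursive, so the cleanest bookkeeping is to unroll it into a conjunction of pointwise statements indexed by $t \in \{k, k+1, \ldots, T-1\}$.

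For the $(\Leftarrow)$ direction, suppose ${}_k\bpi' = {}_k\bpi$, i.e. $\pi'_t = \pi_t$ as mappings for every $t \in \{k, \ldots, T-1\}$. Then for each such $t$ and each $x \in \mathcal{X}_t$ the two actions $\pi'_t(x)$ and $\pi_t(x)$ coincide, so trivially $Q^{\bpi'}_t(x, \pi'_t(x)) = Q^{\bpi'}_t(x, \pi_t(x))$. Reading the recursive definition of $\sim_{eq}$ in Definition \ref{def: SPE-Improve-Policy-Seq} from $t = T-1$ up to $t = k$, each conjunct is satisfied, giving ${}_k\bpi' \sim_{eq} {}_k\bpi$. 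I would note that this direction invokes no property of BPI; it is purely definitional.

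For the $(\Rightarrow)$ direction, I would first unroll the recursive definition of $\sim_{eq}$ on truncated sequences: by Definition \ref{def: SPE-Improve-Policy-Seq}, the relation ${}_k\bpi' \sim_{eq} {}_k\bpi$ expands into the conjunction, over all $t \in \{k, \ldots, T-1\}$, of the pointwise equalities $Q^{\bpi'}_t(x, \pi'_t(x)) = Q^{\bpi'}_t(x, \pi_t(x))$ holding for every $x \in \mathcal{X}_t$. Now for each fixed $t \geq k$ and each $x \in \mathcal{X}_t$, this equality is precisely the premise of Corollary \ref{cor: implication-consistent-tie-break}, whose conclusion yields $\pi'_t(x) = \pi_t(x)$. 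Since this holds for every state, $\pi'_t = \pi_t$ for every $t \in \{k, \ldots, T-1\}$, which is exactly ${}_k\bpi' = {}_k\bpi$. Here the consistent tie-break rule of BPI is what does all the work.

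The only real subtlety, and hence the step I expect to require the most care, is the bookkeeping in the forward direction: one must ensure the recursive $\sim_{eq}$ genuinely delivers the $Q$-equality at every layer $t \geq k$ (not merely at $t = k$), so that Corollary \ref{cor: implication-consistent-tie-break} can be applied layer by layer. I would make this rigorous as a backward induction on $t$ from $T-1$ down to $k$, mirroring the induction structure already employed in the proof of Proposition \ref{prop: for FIN 3.3-2}. Beyond this recursion-to-conjunction translation, the argument is routine, since no genuine estimate is involved and the tie-break corollary supplies the key implication directly.
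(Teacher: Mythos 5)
Your proposal is correct and follows essentially the same route as the paper's proof: the $(\Leftarrow)$ direction is purely definitional, and the $(\Rightarrow)$ direction rests on Corollary \ref{cor: implication-consistent-tie-break} applied at every layer $t \geq k$, formalized by the same backward induction on $t$ (the paper merely interleaves the corollary inside the induction rather than unrolling $\sim_{eq}$ into a conjunction first). No gap.
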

\begin{proof}
	$(\Rightarrow)$ We will use mathematical induction to show that for any $k \in \mathcal{T}$,
	\begin{equation} \label{eq10: BWD tie-breaks}
		{}{}_k\pi' \sim_{eq} {}{}_k\pi \Rightarrow {}{}_k\pi'={}{}_k\pi.
	\end{equation}
	
	As our base case, we set $k = T-1$. First, note that by Definition \ref{def: SPE-Improve-Policy-Seq}, the premise of (\ref{eq10: BWD tie-breaks}) is equivalent to $\forall x \in \mathcal{X}_{T-1}$, $Q^{\bpi'}_{T-1}(x, \pi'_{T-1}(x)) = Q^{\bpi'}_{T-1}(x, \pi_{T-1}(x))$, which implies that $\forall x \in \mathcal{X}_{T-1}, \pi'_{T-1}(x) = \pi_{T-1}(x)$ by Corollary \ref{cor: implication-consistent-tie-break} and thus, we have shown ${}{}_{T-1}\bpi'={}{}_{T-1}\bpi$.
	
	Next, we start our inductive argument: if relation (\ref{eq10: BWD tie-breaks}) applies for $k=t+1$, then it also applies for $k = t$. By assumption, we have
	\begin{equation} \label{eq2: BWD tie-breaks}
		{}{}_{t+1}\bpi' \sim_{eq} {}{}_{t+1}\bpi \Rightarrow {}{}_{t+1}\bpi'={}{}_{t+1}\bpi.
	\end{equation}
	At case $k = t$, by Definition \ref{def: SPE-Improve-Policy-Seq}, the premise of (\ref{eq10: BWD tie-breaks}) is equivalent to
	\begin{align}
		& \text{ (i) } \forall x \in \mathcal{X}_t, Q^{\bpi'}_t(x, \pi'_t(x)) = Q^{\bpi'}_t(x, \pi_t(x)); \label{eq4: BWD tie-breaks}\\
		& \text{ (ii) } {}{}_{t+1}\bpi' \sim_{eq} {}{}_{t+1}\bpi. \label{eq3: BWD tie-breaks}
	\end{align}
	By applying the assumption (\ref{eq2: BWD tie-breaks}), condition (\ref{eq3: BWD tie-breaks}) implies
	\begin{equation} \label{eq5: BWD tie-breaks}
		{}{}_{t+1}\bpi'={}{}_{t+1}\bpi.
	\end{equation}
	Then, by Corollary \ref{cor: implication-consistent-tie-break}, condition (\ref{eq4: BWD tie-breaks}) implies
	\begin{equation} \label{eq6: BWD tie-breaks}
		\forall x \in\mathcal{X}_t,~\pi'_t(x) = \pi_t(x).
	\end{equation}
	The conclusion that (\ref{eq10: BWD tie-breaks}) applies for $k = t$ follows from (\ref{eq5: BWD tie-breaks}) and (\ref{eq6: BWD tie-breaks}).
	
	$(\Leftarrow)$ We will next show the converse by induction that for any $k \in \mathcal{T}$ 
	\begin{equation} \label{eq11: BWD tie-breaks}
		{}{}_k\bpi'= {}{}_k\bpi \Rightarrow {}{}_k\bpi' \sim_{eq} {}{}_k\bpi.
	\end{equation}
	
	As our base case at $k = T-1$, we can rewrite the premise as $\forall x \in \mathcal{X}_{T-1}, \pi'_{T-1}(x) = \pi_{T-1}(x)$, which implies $\forall x \in \mathcal{X}_{T-1}$, $Q^{\bpi'}_{T-1}(x, \pi'_{T-1}(x)) = Q^{\bpi'}_{T-1}(x, \pi_{T-1}(x))$ and by Definition \ref{def: SPE-Improve-Policy-Seq}, we have ${}{}_{T-1}\bpi' \sim_{eq} {}{}_{T-1}\bpi$.
	
	Next, we start our inductive argument: if relation (\ref{eq11: BWD tie-breaks}) applies for $k = t+1$, then it also applies for $k = t$. By assumption, we have
	\begin{equation} \label{eq9: BWD tie-breaks}
		{}{}_{t+1}\bpi' = {}{}_{t+1}\bpi \Rightarrow {}{}_{t+1}\bpi' \sim_{eq} {}{}_{t+1}\bpi.
	\end{equation}
	At case $k = t$, the premise ${}{}_t\bpi'={}{}_t\bpi$ can be written as $\forall x \in \mathcal{X}_t, \pi'_t(x) = \pi_t(x)$,
	which implies that
	\begin{equation} \label{eq8: BWD tie-breaks}
		\forall x \in \mathcal{X}_t, \, Q^{\bpi'}_t(x, \pi'_t(x)) = Q^{\bpi'}_t(x, \pi_t(x)).
	\end{equation}
	By Definition \ref{def: SPE-Improve-Policy-Seq}, the conclusions of (\ref{eq9: BWD tie-breaks}) and (\ref{eq8: BWD tie-breaks}) imply ${}{}_t\bpi' \sim_{eq} {}{}_t\bpi$ and thus, we have shown (\ref{eq11: BWD tie-breaks}) holds for $k = t$.
\end{proof}

\begin{lemma} \label{lem: FIN 3.3-1}
	Let $\bpi, \bpi'$ be the old and new policies obtained through BPI. In the event of \textit{non-termination}, then there exists $k\in \mathcal{T}$ such that the following hold
	\begin{enumerate}
		\item ${}{}_k\bpi' \succeq_{eq} {}{}_k\bpi$;
		\item $\exists x \in \mathcal{X}_k$, $Q^{\bpi'}_k\left(x, \pi'_k(x)\right) > Q^{\bpi'}_k\left(x, \pi_k(x)\right)$.
	\end{enumerate}
\end{lemma}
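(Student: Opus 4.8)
The plan is to read off both conclusions from facts already available for BPI, pinning down a single index $k$ that witnesses them simultaneously. First I would note that Part 1 is essentially free: Proposition \ref{prop: for FIN 3.3-2} already gives ${}{}_k\bpi' \succeq_{eq} {}{}_k\bpi$ for \emph{every} $k \in \mathcal{T}$, so whatever index I eventually select for Part 2 will automatically satisfy the first claim. The whole burden therefore collapses onto Part 2, namely exhibiting one $k$ and one state $x \in \mathcal{X}_k$ at which the new policy strictly dominates the old, $Q^{\bpi'}_k(x, \pi'_k(x)) > Q^{\bpi'}_k(x, \pi_k(x))$.

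Next I would unpack the hypothesis of \emph{non-termination}. By the labelling of BPI's components, non-termination means that the \emph{if}-condition inside the PolImp block is met at some point of the backward sweep; equivalently, via the \emph{consistent tie-break} rule, it is the event $\bpi' \neq \bpi$. Taking the first reading directly, there exist $k \in \mathcal{T}$, $x \in \mathcal{X}_k$, and an action $u' \in \mathcal{U}_k$ with $Q^{\bpi'}_k(x, u') > Q^{\bpi'}_k(x, \pi_k(x))$, and at that branch the algorithm performs the assignment $\pi'_k(x) \gets u'$. Substituting $\pi'_k(x) = u'$ into the strict inequality yields Part 2 for this $k$, and Part 1 holds for the same $k$ by the previous paragraph.

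Alternatively, I could route Part 2 through the corollaries to avoid reasoning about the in-loop test: non-termination gives $\bpi' \neq \bpi$, hence some $(k,x)$ with $\pi'_k(x) \neq \pi_k(x)$; the contrapositive of Corollary \ref{cor: implication-consistent-tie-break} upgrades this to $Q^{\bpi'}_k(x, \pi'_k(x)) \neq Q^{\bpi'}_k(x, \pi_k(x))$, while the relation ${}{}_k\bpi' \succeq_{eq} {}{}_k\bpi$ (Part 1) unpacks by Definition \ref{def: SPE-Improve-Policy-Seq} to the weak inequality $Q^{\bpi'}_k(x, \pi'_k(x)) \geq Q^{\bpi'}_k(x, \pi_k(x))$. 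Together these force strict inequality. This variant is cleaner to cite because Corollary \ref{cor: implication-consistent-tie-break} is already phrased in terms of the completed policy $\bpi'$.

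The single point I would treat carefully — the only place where a genuine argument rather than a citation is needed — is the agreement between the $Q$-values the algorithm inspects when the \emph{if}-condition fires and the $Q^{\bpi'}_k$ in the statement, which is taken with respect to the \emph{finished} new policy $\bpi'$. Since BPI sweeps $k$ from $T-1$ down to $0$, the tail ${}{}_{k+1}\bpi'$ is already frozen when player $k$ is processed, and by the remark following Definition \ref{def: adjustment-functions} the value $Q^{\bpi'}_k(x, \cdot)$ depends only on ${}{}_{k+1}\bpi'$ (through $u \oplus_k \bpi'$), not on the still-undetermined past ${}{}^{k-1}\bpi'$ and not on $\pi'_k$ itself. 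Hence the in-loop value coincides with the final $Q^{\bpi'}_k$, legitimising the substitution above; once this is observed, both assertions follow with no further computation.
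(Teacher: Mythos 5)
Your proposal is correct and takes essentially the same approach as the paper: the paper also gets Part 1 for every $k$ from Proposition \ref{prop: for FIN 3.3-2}, and then proves Part 2 exactly as in your alternative route — non-termination gives $\bpi' \neq \bpi$, hence some $(k,x)$ with $\pi'_k(x) \neq \pi_k(x)$, the contrapositive of Corollary \ref{cor: implication-consistent-tie-break} turns this into $Q^{\bpi'}_k(x,\pi'_k(x)) \neq Q^{\bpi'}_k(x,\pi_k(x))$, and the weak inequality from Definition \ref{def: SPE-Improve-Policy-Seq} forces strictness. Your primary route (reading the strict improvement directly off the fired \emph{if}-condition) and your explicit check that the in-loop $Q$-values coincide with $Q^{\bpi'}_k$ because they depend only on the frozen tail ${}{}_{k+1}\bpi'$ are harmless refinements of the same argument.
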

\begin{proof}
	First, the first claim is always true by Proposition \ref{prop: for FIN 3.3-2}, which says
	\begin{equation} \label{eq2: FIN 3.3-1}
		\forall k, {}{}_k\bpi' \succeq_{eq} {}{}_k\bpi.
	\end{equation}
	For the second claim, by Definition \ref{def: SPE-Improve-Policy-Seq}, (\ref{eq2: FIN 3.3-1}) implies 
	\begin{equation} \label{eq3: FIN 3.3-1}
		\forall k\in\mathcal{T}, \forall x \in \mathcal{X}_k, \, Q^{\bpi'}_k(x, \pi'_k(x)) \geq Q^{\bpi'}_k(x, \pi_k(x)).
	\end{equation}
	Since \textit{non-termination} is assumed, we must have $\bpi' \neq \bpi$ such that $\exists k,~x \in \mathcal{X}_k$ and
	\begin{align}
		\pi'_k(x) \neq \pi_k(x) &\Rightarrow Q^{\bpi'}_k(x, \pi'_k(x)) \neq Q^{\bpi'}_k(x, \pi_k(x)) \tag*{(by Corollary \ref{cor: implication-consistent-tie-break})}\\
		&\Rightarrow Q^{\bpi'}_k(x, \pi'_k(x)) > Q^{\bpi'}_k(x, \pi_k(x)). \tag*{(by (\ref{eq3: FIN 3.3-1}))}
	\end{align}
	Hence, the second claim follows.
\end{proof}

\begin{lemma} \label{lem: FIN 3.3-2}
	Let $\bpi, \bpi'$ be the old and new policies obtained through BPI. In the event of \textit{non-termination}, then $\exists k^*\in\mathcal{T}$ s.t. the following holds
	\begin{eqnarray}
		\exists x \in \mathcal{X}_{k^*},~~
		Q^{\bpi'}_{k^*}(x, \bpi'_{k^*}(x)) & > & Q^{\bpi'}_{k^*}(x,\bpi_{k^*}(x)), \label{cond-i} \\
		{}{}_{k^*}\bpi' & \succeq_{eq} & {}{}_{k^*}\bpi, \label{cond-ii} \\
		{}{}_{k^*+1}\bpi' & \sim_{eq} & {}{}_{k^*+1}\bpi. \label{cond-iii}
	\end{eqnarray}
\end{lemma}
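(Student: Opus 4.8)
The plan is to take $k^*$ to be the \emph{largest} decision epoch at which a strict improvement occurs, after which all three conditions follow almost immediately from the results already in hand. Concretely, I would set
\[
k^* \doteq \max\{k \in \mathcal{T} : \exists x \in \mathcal{X}_k,~Q^{\bpi'}_k(x, \pi'_k(x)) > Q^{\bpi'}_k(x, \pi_k(x))\}.
\]
Lemma \ref{lem: FIN 3.3-1} guarantees that under \textit{non-termination} this set is nonempty, and since $\mathcal{T}$ is finite the maximum is attained, so $k^*$ is well defined.

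With this choice, conditions \eqref{cond-i} and \eqref{cond-ii} come for free. Condition \eqref{cond-i} is exactly the defining property of $k^*$ as a point admitting strict improvement, and \eqref{cond-ii} follows because Proposition \ref{prop: for FIN 3.3-2} supplies ${}_k\bpi' \succeq_{eq} {}_k\bpi$ for \emph{every} $k$, in particular for $k=k^*$.

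The substance of the argument is condition \eqref{cond-iii}. For each $k > k^*$, maximality of $k^*$ says that \emph{no} state admits a strict improvement, while Proposition \ref{prop: for FIN 3.3-2} (unwound through Definition \ref{def: SPE-Improve-Policy-Seq}) supplies the weak inequality $Q^{\bpi'}_k(x, \pi'_k(x)) \geq Q^{\bpi'}_k(x, \pi_k(x))$ for all $x \in \mathcal{X}_k$. Together these force the equality $Q^{\bpi'}_k(x, \pi'_k(x)) = Q^{\bpi'}_k(x, \pi_k(x))$ at every such $k$ and $x$, whence Corollary \ref{cor: implication-consistent-tie-break} (\textit{consistent tie-break}) yields $\pi'_k(x) = \pi_k(x)$. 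Hence ${}_{k^*+1}\bpi' = {}_{k^*+1}\bpi$, and applying the $(\Leftarrow)$ direction of Lemma \ref{lem: BWD tie-breaks} converts this equality of truncated policies into the desired ${}_{k^*+1}\bpi' \sim_{eq} {}_{k^*+1}\bpi$.

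The one place I expect a genuine (if minor) snag is the boundary case $k^* = T-1$, where the tail ${}_{k^*+1}\bpi = {}_T\bpi$ is an empty policy sequence lying outside $\mathcal{T}$; there \eqref{cond-iii} should be read as vacuously true, and I would state this convention explicitly rather than let Lemma \ref{lem: BWD tie-breaks} be invoked at an out-of-range index. Everything else is bookkeeping: choosing the maximal strictly-improving epoch isolates exactly one improving layer with a fully tie-broken (hence $\sim_{eq}$) tail above it, which is precisely the structure that Lemma \ref{lem: FIN 3.3-2} asserts and that the downstream lexicographic-monotonicity argument will consume.
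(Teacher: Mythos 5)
Your proposal is correct, and its skeleton coincides with the paper's: both take $k^*$ to be the largest index admitting a strict improvement (your set omits the conjunct ${}{}_k\bpi' \succeq_{eq} {}{}_k\bpi$ that the paper carries inside the $\max$, but by Proposition \ref{prop: for FIN 3.3-2} that conjunct holds at every $k$, so the two sets are identical), and both get \eqref{cond-i} and \eqref{cond-ii} immediately from this choice. Where you diverge is the proof of \eqref{cond-iii}. The paper stays entirely inside the order relations: it observes that maximality of $k^*$ means $\neg({}{}_k\bpi' \succ_{eq} {}{}_k\bpi)$ for all $k>k^*$, then argues by contradiction that $\neg\sim_{eq}$ together with $\succeq_{eq}$ at level $k^*+1$ would produce exactly such a $\succ_{eq}$, using nothing beyond Definition \ref{def: SPE-Improve-Policy-Seq}. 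You instead argue directly: maximality plus the weak inequality from Proposition \ref{prop: for FIN 3.3-2} forces pointwise value equality at every $(k,x)$ with $k>k^*$, which you convert into policy equality ${}{}_{k^*+1}\bpi' = {}{}_{k^*+1}\bpi$ via Corollary \ref{cor: implication-consistent-tie-break}, and then back into $\sim_{eq}$ via the $(\Leftarrow)$ direction of Lemma \ref{lem: BWD tie-breaks}. This is valid, but it is a heavier route: it invokes BPI-specific machinery (the \textit{consistent tie-break} rule) where the paper's contradiction argument needs only the relation definitions; in fact, once you have value equality at all levels above $k^*$, unfolding the recursive definition of $\sim_{eq}$ by backward induction from $T-1$ yields \eqref{cond-iii} directly, with no tie-break detour at all. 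On the other hand, your route front-loads the fact ${}{}_{k^*+1}\bpi' = {}{}_{k^*+1}\bpi$, which is precisely what Theorem \ref{thm: lex-monotone} later extracts from \eqref{cond-iii} via Lemma \ref{lem: BWD tie-breaks}, so nothing is wasted. Your caution about the boundary case $k^* = T-1$ is sensible but not a real snag: the tail conditions in Definitions \ref{def: SPE-Improve-Policy-Seq} and \ref{def: SPE-improving rule} are vacuous for empty truncations, which is the convention the paper implicitly uses as well.
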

\begin{proof}
	Let $k^*$ be the \textit{largest} index in the set of $k$'s realizing the two claims in Lemma \ref{lem: FIN 3.3-1}, i.e.
	\begin{equation} \label{eq:kstar}
		k^*=\max\{k\in \mathcal{T}:~\exists x \in \mathcal{X}_{k},
		Q^{\bpi'}_{k}(x, \bpi'_{k}(x)) > Q^{\bpi'}_{k}(x,\bpi_{k}(x)) \; \land \; {}{}_{k}\bpi' \succeq_{eq} {}{}_{k}\bpi\},
	\end{equation}
	where the set is not empty by Lemma \ref{lem: FIN 3.3-1}. This definition means that for any $k > k^*$,
	$$
	\neg \left(\, \exists x \in \mathcal{X}_{k}, 
	Q^{\bpi'}_{k}(x, \bpi'_{k}(x) > Q^{\bpi'}_{k}(x, \bpi_{k}(x)) \; \land \; {}{}_k\bpi' \succeq_{eq} {}{}_k\bpi \, \right)
	$$
	and by Definition \ref{def: SPE-Improve-Policy-Seq}, the above is equivalent to 
	\begin{equation} \label{eq5: FIN 3.3-2}
		\neg ({}{}_k\bpi' \succ_{eq} {}{}_k\bpi).
	\end{equation}
	Then, it is clear that \eqref{cond-i} and \eqref{cond-ii} hold by the conditions in the set of \eqref{eq:kstar}. Suppose that \eqref{cond-iii} is not true, i.e. $\neg \left({}{}_{k^*+1}\bpi' \sim_{eq} {}{}_{k^*+1}\bpi\right)$. By the second condition in the set of \eqref{eq:kstar}, $({}{}_{k^*+1}\bpi' \succ_{eq} {}{}_{k^*+1}\bpi)$, contradicting (\ref{eq5: FIN 3.3-2}). Therefore, we have shown the existence of $k^*$ by construction and the result follows.
\end{proof}


\subsubsection{Lex-Monotonicity} \label{sec:lexmono}
To characterize the monotonicity of the equilibrium policies, we introduce the lexicographical structure. Under which, we aim to show that two adjoint policies obtained through BPI are strictly monotonic, namely lex-monotonic.

In this section, we study the continuous state-action space setting and define three notations in which, while the subsequent results can be easily reduced to discrete setting by introducing the parallel definitions as in the remark below and thus the proof for the discrete setting is almost identical. In this regard, different settings manifest themselves through different lexicographic representations of a policy.

\begin{definition}[Basis of Policy $\mathcal{B}^{\bpi}$] \label{def: policy-basis}
	For any fixed $\bpi \in \Pi^{MD}$, we represent the basis of a policy $\mathcal{B}^{\bpi}$ by a $T$-dimensional vector function, whose $(k+1)$th entry is defined as $(\mathcal{B}^{\bpi})_k:~\mathcal{X}_k\mapsto \mathbb{R}$ given by $(\mathcal{B}^{\bpi})_k(x)=Q_k^{\bpi}(x, \pi_k(x))$ for $k\in\mathcal{T}$.
\end{definition}

\begin{definition}[Element-wise order $>_{e}$] \label{def: vec-order}
	Let $a, b$ be two functions where dom$(a) = $ dom$(b)$. Then, we say $a>_{e}b$ if $a(x) \geq b(x), \forall x \in \text{dom}(a)$ and $\exists x^* \text{ s.t. } a(x^*) > b(x^*)$.
\end{definition}

\begin{definition}[Lexicographic order $>_{lex}$] \label{def: lex-order}
	For any two policies $\bpi,\bpi'\in\Pi^{MD}$,
	let $k^*$ be the largest index $k \in \mathcal{T}$ such that $(\mathcal{B}^{\bpi})_k \neq (\mathcal{B}^{\bpi'})_k$. Then, we say $\mathcal{B}^{\bpi} >_{lex} \mathcal{B}^{\bpi'}$ if $(\mathcal{B}^{\bpi})_{k^*} >_e (\mathcal{B}^{\bpi'})_{k^*}$.
\end{definition}

\begin{remark}
    When the state space is discrete, the definitions above can be revised accordingly to accommodate the analyses below. Specifically, for each $\bpi \in \Pi^{MD}$, we define $\mathcal{B}^{\bpi}$ as $\mathbb{R}^d$-vector with $d = \sum_k |\mathcal{X}_k|$, whose entries are filled as follows.
	\begin{enumerate}
		\item For $k\in\mathcal{T}$, order state indices for $x \in \mathcal{X}_k$ and fix this order across updates.
		\item Then, fill in entries of the $\mathbb{R}^d$-vector $\mathcal{B}^{\bpi}$ with $(\mathcal{B}^{\bpi})_{k, x}\doteq Q_k^{\bpi}(x, \pi_k(x))$, according to the state order determined above in ascending time order, i.e. the $\big(\sum_{i=0}^{k-1}|\mathcal{X}_i|+j_k(x)\big)$-th entry of $\mathcal{B}^{\bpi}$ is $(\mathcal{B}^{\bpi})_{k, x}$, where $j_k(x)$ is the index of $x$ in $\mathcal{X}_k$.
	\end{enumerate}
	We also denote by $(\mathcal{B}^{\bpi})_k=(Q_k^{\bpi}(x, \pi_k(x)))_{x\in\mathcal{X}_k}$ the $\mathcal{X}_k$-dimensional vector, whose entries are filled according to a pre-fixed state order for any fixed $k$ (aligned with the first point).
	
	Then the element-wise order can be defined for two vectors similarly as follows: we say vectors $a>_e b$ if $a_j\ge b_j$ for any $j$ and there is a $j^*$ such that $a_{j^*}>b_{j^*}$. Subsequently, the definition of lexicographic order in the vector case simply follows Definition \ref{def: lex-order}.
\end{remark}

Note that the lexicographical order in Definition \ref{def: lex-order} is a variant of the conventional lexicographical order by the use of functions/vectors in place of scalars and the corresponding operator $>_{e}$. In the subsequent analyses, we will use the definitions above to show that BPI is lexicographically monotonic, where the SPE-improving property obtained in Proposition \ref{prop: for FIN 3.3-2} becomes a sufficient condition. Such lex-monotonicity result is parallel to the policy improvement theorem (PIT) in standard RL approaches.

We will here onward refer to the $k^*$ defined in Lemma \ref{lem: FIN 3.3-2} as our \textit{lex-index}, indicating a particular time index, after which all later time-state pairs have obtained their SPE policies. We are now ready to establish our lex-monotonicity result.
\begin{theorem}[Lex-monotonicity] \label{thm: lex-monotone}
	Let $\bpi, \bpi'$ be the old and new policies obtained through BPI. In the event of \textit{non-termination},
	$$
	\mathcal{B}^{\bpi'} >_{lex} \mathcal{B}^{\bpi}.
	$$
\end{theorem}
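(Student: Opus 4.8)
The plan is to let $k^*$ be the \emph{lex-index} furnished by Lemma~\ref{lem: FIN 3.3-2} under the \textit{non-termination} hypothesis, and to show that $k^*$ is exactly the largest index at which $\mathcal{B}^{\bpi'}$ and $\mathcal{B}^{\bpi}$ disagree, with the disagreement pointing in the direction required by $>_{lex}$. Thus the first step is to invoke Lemma~\ref{lem: FIN 3.3-2} to obtain $k^*\in\mathcal{T}$ satisfying \eqref{cond-i}, \eqref{cond-ii}, and \eqref{cond-iii}.

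The pivotal observation is that condition~\eqref{cond-iii}, namely ${}{}_{k^*+1}\bpi' \sim_{eq} {}{}_{k^*+1}\bpi$, upgrades to an equality of the tail policies through Lemma~\ref{lem: BWD tie-breaks}: ${}{}_{k^*+1}\bpi' = {}{}_{k^*+1}\bpi$, i.e. $\pi'_j = \pi_j$ for all $j \ge k^*+1$. Since, by the Remark following Definition~\ref{def: Policy-dependent VFs} (and the structure of the $Q$-recursion in Proposition~\ref{prop: policy-dependent-Q-recursion}), the action-value $Q^{\bpi}_k(x,u)$ depends on $\bpi$ only through the forward segment ${}{}_{k+1}\bpi$, this tail equality forces $Q^{\bpi'}_k \equiv Q^{\bpi}_k$ as functions of $(x,u)$ for every $k \ge k^*$.

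With this agreement of action-value functions in hand, the basis entries follow. For $k > k^*$ I would combine $\pi'_k = \pi_k$ with $Q^{\bpi'}_k = Q^{\bpi}_k$ to conclude $(\mathcal{B}^{\bpi'})_k = (\mathcal{B}^{\bpi})_k$, so the two bases coincide strictly above $k^*$. At $k = k^*$, using $Q^{\bpi}_{k^*} = Q^{\bpi'}_{k^*}$ I would rewrite $(\mathcal{B}^{\bpi})_{k^*}(x) = Q^{\bpi'}_{k^*}(x, \pi_{k^*}(x))$, giving the pointwise identity $(\mathcal{B}^{\bpi'})_{k^*}(x) - (\mathcal{B}^{\bpi})_{k^*}(x) = Q^{\bpi'}_{k^*}(x,\pi'_{k^*}(x)) - Q^{\bpi'}_{k^*}(x,\pi_{k^*}(x))$. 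Condition~\eqref{cond-ii}, unpacked via Definition~\ref{def: SPE-Improve-Policy-Seq}, renders this nonnegative for every $x$, while condition~\eqref{cond-i} renders it strictly positive for at least one $x$; hence $(\mathcal{B}^{\bpi'})_{k^*} >_e (\mathcal{B}^{\bpi})_{k^*}$, and in particular the bases differ at $k^*$.

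Finally I would assemble these facts: the largest index at which $\mathcal{B}^{\bpi'}$ and $\mathcal{B}^{\bpi}$ disagree is exactly $k^*$, and at that index the element-wise strict order holds in favour of $\bpi'$, which is by Definition~\ref{def: lex-order} precisely the statement $\mathcal{B}^{\bpi'} >_{lex} \mathcal{B}^{\bpi}$. The step I expect to be the main obstacle is the identification $Q^{\bpi}_{k^*} = Q^{\bpi'}_{k^*}$: one must be careful that conditions \eqref{cond-i} and \eqref{cond-ii} are phrased in terms of $Q^{\bpi'}$ whereas $(\mathcal{B}^{\bpi})_{k^*}$ is defined through $Q^{\bpi}$, and the entire argument hinges on the tail equality ${}{}_{k^*+1}\bpi' = {}{}_{k^*+1}\bpi$ bridging the two via the insensitivity of the time-$k^*$ action-values to the past policy.
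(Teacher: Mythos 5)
Your proposal is correct and follows essentially the same route as the paper's proof: invoke Lemma~\ref{lem: FIN 3.3-2} to get the lex-index $k^*$, upgrade \eqref{cond-iii} to tail-policy equality via Lemma~\ref{lem: BWD tie-breaks}, deduce $Q^{\bpi'}_k \equiv Q^{\bpi}_k$ for $k \ge k^*$ from the insensitivity of action-values to the past policy, and then split into the two claims (basis equality above $k^*$, strict element-wise dominance at $k^*$ from \eqref{cond-i} and \eqref{cond-ii}). In fact your bookkeeping of which condition feeds which step is cleaner than the paper's, which cites \eqref{cond-ii} where \eqref{cond-iii} is meant (and vice versa) in two places.
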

\begin{proof}
	Let $k^*$ be the \textit{lex-index} that satisfies \eqref{cond-i}-\eqref{cond-iii} in Lemma \ref{lem: FIN 3.3-2}. By Lemma \ref{lem: BWD tie-breaks}, (\ref{cond-ii}) implies
	\begin{align}
		{}{}_{k^*+1}\bpi' = {}{}_{k^*+1}\bpi, \label{cond-ii-b}
	\end{align}
	which then implies
	\begin{align}
		\forall k \geq k^*, \quad Q^{\bpi'}_{k}(x, u) = Q^{\bpi}_k(x, u),~\forall (x, u) \in \mathcal{X}_k \times \mathcal{U}_k. \label{eq1: lex-monotone}
	\end{align}
	
	\textit{Claim 1.} $\forall k \geq k^*+1, (\mathcal{B}^{\bpi'})_{k} = (\mathcal{B}^{\bpi})_{k}$ 
	
	Consider any $k \geq k^*+1$. By (\ref{cond-ii-b}), we have $\pi'_k(\cdot) = \pi_k(\cdot)$. By substituting them into (\ref{eq1: lex-monotone}), we have
	$$
	Q^{\bpi'}_{k}(x, \pi'_k(x)) = Q^{\bpi}_k(x, \pi_k(x)),~\forall x \in \mathcal{X}_k,
	$$
	which says that $(\mathcal{B}^{\bpi'})_{k}$ and $(\mathcal{B}^{\bpi})_{k}$ are equal for any $k \geq k^*+1$ that proves Claim 1.
	
	\textit{Claim 2.} At $k = k^*, (\mathcal{B}^{\bpi'})_{k^*} >_e (\mathcal{B}^{\bpi})_{k^*}$
	
	Set $x^*$ to be one $x$ that realizes (\ref{cond-i}). Then, we have
	\begin{align}
		Q^{\bpi'}_{k^*}(x^*,\pi'_{k^*}(x^*)) > Q^{\bpi'}_{k^*}(x^*, \pi_{k^*}(x^*)) = Q^{\bpi}_{k^*}(x^*, \pi_{k^*}(x^*)). \tag*{(by (\ref{eq1: lex-monotone}))}
	\end{align}
	Together with (\ref{cond-iii}), we have $(\mathcal{B}^{\bpi'})_{k^*} >_e (\mathcal{B}^{\bpi})_{k^*}$ by the definition of $>_e$.
	
	Finally, from Claims 1 and 2, we conclude that $\mathcal{B}^{\bpi'} >_{lex} \mathcal{B}^{\bpi}$.
\end{proof}

We note that by the \textit{strict} lex-monotonicity result in Theorem \ref{thm: lex-monotone}, we have shown that the mapping $\mathcal{B}$ is one-to-one, i.e. $\mathcal{B}^{\bpi'} = \mathcal{B}^{\bpi} \Leftrightarrow \bpi' = \bpi$, on the set of policies encountered in BPI's update. This property is central to the analysis in Section \ref{sec:contsa}, where value-based arguments are used to describe the movement of policy across updates.


\subsubsection{Discrete State-Action Space (Policy-based Analysis)} \label{sec:discretesa}
Under a discrete state-action setting, we leverage the lex-monotonicity result above to show that the BPI (Algorithm \ref{alg: backward-policy-update}) terminates in finite steps.
\begin{theorem}[Finite Termination]
	Assuming discrete state-action spaces, the BPI (Algorithm \ref{alg: backward-policy-update}) terminates in finite time.
\end{theorem}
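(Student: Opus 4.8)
The plan is to reduce finite termination to a no-infinite-ascending-chain argument powered by the strict lex-monotonicity of Theorem~\ref{thm: lex-monotone}. First I would record the structural fact that in the tabular setting each $\mathcal{X}_t$ and $\mathcal{U}_t$ is finite, so the policy set $\Pi^{MD}$ is finite, with $|\Pi^{MD}| = \prod_{t\in\mathcal{T}} |\mathcal{U}_t|^{|\mathcal{X}_t|}$. Consequently the image $\{\mathcal{B}^{\bpi} : \bpi \in \Pi^{MD}\}$ of the basis map of Definition~\ref{def: policy-basis} is a finite set of vectors, so there can be only finitely many distinct values of $\mathcal{B}$ across any run of BPI.

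Next I would verify that $>_{lex}$ of Definition~\ref{def: lex-order} is a strict partial order on these vectors, i.e. irreflexive and transitive. Irreflexivity is immediate, since $\mathcal{B}^{\bpi} >_{lex} \mathcal{B}^{\bpi}$ would require an index at which $\mathcal{B}^{\bpi}$ differs from itself. Transitivity is the only non-routine point: because the block order $>_e$ (Definition~\ref{def: vec-order}) is itself transitive but merely partial, I would compare two relations $\mathcal{B}^a >_{lex} \mathcal{B}^b$ and $\mathcal{B}^b >_{lex} \mathcal{B}^c$ by case-splitting on which of their top differing time-blocks $k_1,k_2$ is larger, checking in each case that the highest block at which $\mathcal{B}^a$ and $\mathcal{B}^c$ disagree is resolved by $>_e$ in favour of $\mathcal{B}^a$ (using transitivity of $>_e$ in the tie case $k_1 = k_2$).

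With these two ingredients I would then run the pigeonhole argument. Let $\bpi^{(0)}, \bpi^{(1)}, \dotsc$ be the successive policies produced by the outer while-loop, where $\bpi^{(n+1)}$ is the output of one full backward sweep applied to $\bpi^{(n)}$. Whenever the algorithm has not yet terminated at step $n$, we have $\bpi^{(n+1)} \neq \bpi^{(n)}$, so the non-termination hypothesis of Theorem~\ref{thm: lex-monotone} applies and gives $\mathcal{B}^{\bpi^{(n+1)}} >_{lex} \mathcal{B}^{\bpi^{(n)}}$. Thus the bases form a strictly increasing chain; by transitivity and irreflexivity of $>_{lex}$ its members $\mathcal{B}^{\bpi^{(0)}}, \mathcal{B}^{\bpi^{(1)}}, \dotsc$ are pairwise distinct (in particular the trajectory cannot cycle back to an earlier policy). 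Since they all lie in the finite set $\{\mathcal{B}^{\bpi} : \bpi \in \Pi^{MD}\}$, the chain has at most $|\Pi^{MD}|$ terms, so the while-loop must reach its termination condition $\bpi' = \bpi$ after finitely many iterations, which is the claim.

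I expect the transitivity verification of $>_{lex}$ to be the main obstacle, precisely because $>_e$ is only a partial order, so two basis blocks can be $>_e$-incomparable; the content is confirming that comparison at the highest differing block composes correctly across three policies. This is also exactly the point that forbids cycles: strict monotonicity alone guarantees $\bpi^{(n+1)} \neq \bpi^{(n)}$ for consecutive iterates, but ruling out a later return $\bpi^{(m)} = \bpi^{(n)}$ with $m > n$ requires the consecutive strict relations to compose into $\mathcal{B}^{\bpi^{(m)}} >_{lex} \mathcal{B}^{\bpi^{(n)}}$, which is precisely transitivity. (The one-to-one property of $\mathcal{B}$ noted after Theorem~\ref{thm: lex-monotone} then upgrades distinctness of bases to distinctness of policies, though the bound above already suffices.)
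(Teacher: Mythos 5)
Your proposal is correct and takes essentially the same route as the paper's proof: strict lex-monotonicity of consecutive BPI iterates from Theorem \ref{thm: lex-monotone}, irreflexivity and transitivity of $>_{lex}$ to rule out any cycling of bases, and finiteness of the policy set (hence of the basis set $\{\mathcal{B}^{\bpi}:\bpi\in\Pi^{MD}\}$) under the discrete assumption, giving termination by pigeonhole. The only difference is one of detail: you explicitly verify that $>_{lex}$ is a strict partial order via the case-split on the top differing time-blocks (handling the fact that $>_e$ is only partial), a step the paper asserts without proof as ``the transitivity of lex-order.''
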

\begin{proof}
	First, by Theorem \ref{thm: lex-monotone}, we have that for any $\bpi, \bpi'$ consecutive policies in BPU,
	\begin{align}
		\mathcal{B}^{\bpi'} >_{lex} \mathcal{B}^{\bpi}.
	\end{align}
	This implies that the BPI visits different basis across updates and specifically, there won't be any cycling of basis by the transitivity of lex-order. 
	
	Secondly, by assumption of discrete state-action spaces, we have a finitely many possible policies to visit. Moreover, since our space of basis is completely spanned by $\mathcal{B}^\Pi$, we also have finitely many basis to visit, i.e. $|\mathcal{B}^\Pi| < \infty$.
	
	From these two observations, finite termination directly follows.
\end{proof}

We now present the result concluding that the converged policy from BPI is a SPE policy. 
\begin{theorem} \label{thm: to-SPE-Policy}
	Let $\bpi, \bpi'$ be the old and new policies obtained through BPI. If $\bpi' = \bpi$, then
	\begin{align}
		\forall t\in\mathcal{T}, \forall x \in \mathcal{X}_t, Q^{\bpi}_t(x, \pi_t(x)) \geq Q^{\bpi}_t(x, u), \forall u \in \mathcal{U}_t. \label{eq4: to-SPE-policy}
	\end{align}
\end{theorem}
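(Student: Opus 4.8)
The plan is to show that convergence of BPI (the condition $\bpi' = \bpi$, i.e. \textit{termination}) forces the SPE inequality \eqref{eq4: to-SPE-policy} at every time-state pair. I would argue by contradiction: suppose $\bpi' = \bpi$ but \eqref{eq4: to-SPE-policy} fails, so there exist some $t \in \mathcal{T}$ and $x \in \mathcal{X}_t$ and an action $u \in \mathcal{U}_t$ with $Q^{\bpi}_t(x, u) > Q^{\bpi}_t(x, \pi_t(x))$. Since $\bpi' = \bpi$, we have $Q^{\bpi'} = Q^{\bpi}$ identically, so this says precisely that at time $t$ and state $x$ the \textit{non-termination} if-condition inside the PolImp block of Algorithm \ref{alg: backward-policy-update} is satisfied (there exists a strictly improving action $u'$).

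The key step is then to invoke the \textit{strictly-improving action-specs} rule, captured in Corollary \ref{cor: implication-action-loop-2}: whenever a strictly improving action exists at $(t,x)$, BPI assigns $\pi'_t(x) \neq \pi_t(x)$. This directly contradicts the termination hypothesis $\bpi' = \bpi$, which in particular forces $\pi'_t(x) = \pi_t(x)$ for all $t, x$. Hence no such $(t, x, u)$ can exist, and \eqref{eq4: to-SPE-policy} holds for every $t \in \mathcal{T}$ and $x \in \mathcal{X}_t$.

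I anticipate the only subtlety, rather than a genuine obstacle, to be the bookkeeping that ensures $Q^{\bpi'}_t = Q^{\bpi}_t$ everywhere when $\bpi' = \bpi$: this is immediate since the action-value functions in Definition \ref{def: Policy-dependent VFs} depend only on the policy itself, so equal policies yield identical $Q$-functions. With that identification in hand, the PolImp if-condition in the algorithm is stated in terms of $Q^{\bpi'}_k(x, u) > Q^{\bpi'}_k(x, \pi_k(x))$, which under $\bpi' = \bpi$ is exactly the negation of the SPE condition \eqref{eq4: to-SPE-policy}; the contrapositive of Corollary \ref{cor: implication-action-loop-2} then closes the argument. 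In short, \textit{termination} of BPI is equivalent to the non-existence of any strictly improving deviation, which is precisely the defining inequality of an SPE policy in Definition \ref{def: SPE-Policy}.
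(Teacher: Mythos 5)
Your proposal is correct and follows essentially the same route as the paper's proof: both argue by contradiction, use the termination hypothesis $\bpi' = \bpi$ to transfer the violating inequality from $Q^{\bpi}$ to $Q^{\bpi'}$, and then invoke Corollary \ref{cor: implication-action-loop-2} (\textit{strictly-improving action-specs}) to force $\pi'_t(x) \neq \pi_t(x)$, contradicting termination. Your explicit remark that $Q$-functions depend only on the (tail of the) policy, so equal policies yield identical $Q$-functions, is exactly the step the paper encodes via ${}{}_{t^*}\bpi' = {}{}_{t^*}\bpi$.
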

\begin{proof}
	Suppose otherwise, then the following must be true 
	\begin{align}
		\exists t\in\mathcal{T}, \exists x \in \mathcal{X}_{t}, \text{ s.t. } \exists u \in \mathcal{U}_t, Q^{\bpi}_{t}(x, u) > Q^{\bpi}_{t}(x, \pi_{t}(x)). \label{eq3: to-SPE-policy}
	\end{align}
	We focus on one such $t \doteq t^*$ and by assumption of this theorem (i.e. $\bpi' = \bpi$), we have
	\begin{align}
		{}{}_{t^*}\bpi' = {}{}_{t^*}\bpi. \label{eq1: to-SPE-policy}
	\end{align}
	By applying (\ref{eq1: to-SPE-policy}) to (\ref{eq3: to-SPE-policy}), we obtain
	\begin{align}
		\exists x \in \mathcal{X}_{t^*}, \text{ s.t. } \exists u \in \mathcal{U}_{t^*}, \quad Q^{\bpi'}_{t^*}(x, u) > Q^{\bpi'}_{t^*}(x, \pi_{t^*}(x)). \label{eq2: to-SPE-policy}
	\end{align}
	Now focus on one such $x \doteq x^*$. By Corollary \ref{cor: implication-action-loop-2}, this implies $\pi'_{t^*}(x^*) \neq \pi_{t^*}(x^*)$ which contradicts the theorem assumption of $\bpi' = \bpi$. Thus, we conclude that our supposition is false and that (\ref{eq4: to-SPE-policy}) must hold.
\end{proof}

Since we have shown that the BPI algorithm will converge in finite time, we have thus shown that this policy at convergence is a SPE policy by Theorem \ref{thm: to-SPE-Policy}. Note that this finite-termination result and all the results derived beforehand hold for \textit{arbitrary action-specs} since the main argument is to permute over the whole policy space $\Pi^{MD}$ that is discrete by assumption. Unfortunately, the proof of Theorem \ref{thm: to-SPE-Policy} does not claim about the convergence rate nor does it extend to more complicated setting such as continuous state-action space. Permuting argument will treat any \textit{action-specs} similarly, concluding the convergence rate to be at most the number of permutation there is. Such analysis is not tight since different \textit{action-specs} would lead to different rates. The limitation of policy-based analysis is even clearer through the continuous state-action case when permuting over infinite-dimensional spaces (i.e. $\left\|\Pi^{MD}\right\| = \infty$) will not conclude anything about finite termination/convergence.


\subsubsection{Continuous State-Action Spaces (Value-based Analysis)} \label{sec:contsa}
In this subsection, we will obtain convergence guarantees for continuous state-action spaces by applying value-based analysis, that is to show convergence by showing $\mathcal{B}' = \mathcal{B}$ rather than $\bpi' = \bpi$. These two termination conditions are interchangable as long as we have one-to-one mapping $\mathcal{B}$, which through Theorem \ref{thm: lex-monotone} has been shown to apply for any algorithm belonging to BPI class. As described in the preceding subsection, the insufficiency of policy-based analysis is caused by its permutative argument that is used to maintain generality on the choice of \textit{action-specs}. In value-based analysis, we are no longer able to keep this generality. In what follows, we will explore three important \textit{action-specs} in RL, verify that each belongs to BPI class, and derive new convergence results for each.
\paragraph{Full-sweep argmax.} We specify our \textit{action-specs} to
\begin{align}
	\pi'_k(x) \gets \argmax_{u \in \mathcal{U}_t} Q^{\bpi'}_t(x, u) \text{ (arbitrarily)} \label{action-specs-case-I}
\end{align}
for any $x\in\mathcal{X}_t$ and retain the \textit{non-termination-condition} of BPI, i.e.
\begin{align}
	\exists u' \in \mathcal{U}_k, \text{ s.t. } Q^{\bpi'}_k(x, u') > Q^{\bpi'}_k(x, \pi_k(x)). \label{non-termination-case-I}
\end{align}
Thus, full-sweep argmax belongs to BPI class and all the preceding analyses apply. In what follows, we will use value-based analysis to derive some convergence results.

To ease notation, we define the mappings $q^{{}{}_{t+1}\bpi}_{t,x}: \mathcal{U}_{t,x} \rightarrow \mathbb{R}$ and $q_{t,x,u}: \, {}{}_{t+1}\Pi^{MD} \rightarrow \mathbb{R}$ by
$$
q^{{}{}_{t+1}\bpi}_{t,x}(u) \doteq q_{t,x,u}\left({{}{}_{t+1}\bpi}\right) \doteq Q^{\bpi}_t(x, u).
$$
Moreover, for each iteration $i\in\mathbb{N}$ and any fixed $(t,x)\in\mathcal{T}\times \mathcal{X}_t$, we define $q^{(i)}_{t,x}(u) \doteq q^{{}{}_{t+1}\bpi^{(i)}}_{t,x}(u)$ and $u^{(i)}_{t,x} \doteq \pi^{(i)}_t(x)$, indicating the current action-values and the current action, respectively. 

Let us now consider the case when there are non-unique local-maximizers ${}{}_{t+1}\bpi^*$ such that given a fixed $t, x$, we will have multiple limiting action-value functions $q^{(\infty)}_{t,x}$. This will pose an issue in the update of $u^{(i)}_{t,x}$, whose convergence requires a well-defined limiting action-value $q^{(\infty)}_{t,x}$. In the subsequent analyses, we address such an issue by showing the existence of an iteration index $i^*_{t+1}$, at which termination to a \textit{unique} local-optima ${}{}_{t+1}\bpi^{(\infty)}$ is guaranteed to happen. Once we have such $i^*_{t+1}$, $\forall i \geq i^*_{t+1}$, we will be dealing with a \textit{unique} limiting action-value $q^{(\infty)}_{t,x} \doteq q^{{}{}_{t+1}\bpi^{(\infty)}}_{t,x}$ and by noting that locally optimal actions with respect to this \textit{unique} action-value are well-defined, the analysis on the update of $u^{(i)}_{t,x}$ can follow naturally. 

\begin{assumption} \label{assum:existsglobalopt}
	At each iteration $i\in\mathbb{N}$, for any pair $(k,x)\in\mathcal{T}\times \mathcal{X}_k$, there exists the global optimum of $q^{(i)}_{k,x}(\cdot)$ over $\mathcal{U}_k$, i.e. there is a map $u^{(i)}(k,x)\in \arg\max_{u\in \mathcal{U}_k}q^{(i)}_{k,x}(u)$.
\end{assumption}
Assumption \ref{assum:existsglobalopt} is related to the compactness of $\mathcal{U}_k$ and the continuity of $q^{(i)}_{k,x}: \mathcal{U}_k \rightarrow \mathbb{R}$, which is linked to the problem specification.

\begin{theorem}[Finite Termination]
	Suppose that Assumption \ref{assum:existsglobalopt} holds. There exists $i^*$ such that for all $i \geq i^*$ and $t \in \mathcal{T}$,
	\begin{align}
		\forall x \in \mathcal{X}_t, \left\|Q^{(i+1)}_t(x, \pi^{(i+1)}_t(x)) - Q^{(i)}_t(x, \pi^{(i)}_t(x))\right\| = 0. \label{termination-cond-case-I}
	\end{align}
	Moreover, we have $i^* = 1$.
	\label{thm: asymp-conv-case-I}
\end{theorem}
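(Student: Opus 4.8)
The plan is to show that, for the full-sweep argmax \textit{action-specs}, the outer while-loop collapses into a single backward induction that already returns an SPE policy, so the basis values freeze after one iteration. The structural fact I would isolate first is that, by Proposition \ref{prop: policy-dependent-Q-recursion} together with the adjustment functions of Definition \ref{def: adjustment-functions}, the action-value $Q^{\bpi}_t(x,u)$ depends on $\bpi$ only through its tail ${}{}_{t+1}\bpi$; in particular $Q_{T-1}(x,u)$ is policy-independent (empty tail). This is precisely what lets me identify the in-place-updated quantity $Q^{\bpi'}_t$ computed during the backward sweep with the final object $Q^{\bpi^{(1)}}_t$.

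Next I would run a backward induction on $t$ from $T-1$ down to $0$ inside the first sweep ($i=1$) to prove the key claim: after the sweep, $\pi^{(1)}_t(x)\in\argmax_{u\in\mathcal{U}_t}Q^{\bpi^{(1)}}_t(x,u)$ for every $t$ and $x$. When the sweep reaches time $t$, the components ${}{}_{t+1}\bpi^{(1)}$ have already been assigned and will not be touched again (the sweep only moves downward); since $Q_t$ sees the policy only through ${}{}_{t+1}\bpi$, the function maximized at $(t,x)$ is exactly $Q^{\bpi^{(1)}}_t(x,\cdot)$. Assumption \ref{assum:existsglobalopt} guarantees a maximizer exists, so the \textit{action-specs} assigns one whenever the \textit{non-termination-condition} fires, and when it does not fire the \textit{consistent tie-break} retains the incumbent action, which is then necessarily already a maximizer. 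Either branch yields a maximizer, establishing the claim and hence, by Definition \ref{def: SPE-Policy}, that $\bpi^{(1)}$ is an SPE policy.

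Finally I would close the loop. Because $\bpi^{(1)}$ is SPE, at the start of iteration $2$ we have $\bpi=\bpi'=\bpi^{(1)}$, and at every $(k,x)$ the test $Q^{\bpi'}_k(x,u')>Q^{\bpi'}_k(x,\pi_k(x))$ fails; the \textit{consistent tie-break} then keeps $\pi'_k(x)=\pi_k(x)$, so no component changes and ${}{}_{k+1}\bpi'$ stays equal to ${}{}_{k+1}\bpi^{(1)}$ throughout the sweep (leaving the evaluated $Q$ untouched). Hence $\bpi^{(2)}=\bpi^{(1)}$, and by induction $\bpi^{(i)}=\bpi^{(1)}$ for all $i\ge 1$, so $Q^{(i)}_t(x,\pi^{(i)}_t(x))$ is constant in $i$ for $i\ge1$ and \eqref{termination-cond-case-I} holds with $i^*=1$; the smallest such index is genuinely $1$ since $Q^{(1)}$ may differ from the initialization $Q^{(0)}$.

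The step I expect to be the real obstacle is the identification $Q^{\bpi'}_t=Q^{\bpi^{(1)}}_t$ during the in-place sweep: one must argue carefully that the backward ordering finalizes the tail ${}{}_{t+1}\bpi'$ before $t$ is processed, and invoke the ``$Q$ depends only on its tail'' property so that the mid-sweep maximization is genuinely over the final action-value. A secondary care point is tie-breaking: since the \textit{action-specs} is only ``arbitrarily'' a maximizer, I would phrase stabilization at the level of the basis values $(\mathcal{B}^{\bpi})_t(x)=Q^{\bpi}_t(x,\pi_t(x))$ of Definition \ref{def: policy-basis} rather than the actions themselves, which is exactly what \eqref{termination-cond-case-I} demands and what the \textit{consistent tie-break} secures.
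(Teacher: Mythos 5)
Your proof is correct, and it rests on the same two pillars as the paper's own argument: the fact that $Q^{\bpi}_t$ depends on $\bpi$ only through the tail ${}{}_{t+1}\bpi$, and the interplay of the full-sweep argmax \textit{action-specs} with \textit{consistent tie-break} under Assumption \ref{assum:existsglobalopt}. The decomposition, however, is genuinely different. You reason \emph{within} the first sweep: because the backward ordering finalizes ${}{}_{t+1}\bpi^{(1)}$ before time $t$ is processed, the function maximized at $(t,x)$ is already the final $Q^{\bpi^{(1)}}_t(x,\cdot)$, and both branches of the update (argmax assignment, or tie-break retaining an incumbent that nothing strictly beats) leave a maximizer in place; hence $\bpi^{(1)}$ is an SPE policy in the sense of Definition \ref{def: SPE-Policy}, after which the second sweep is idle and $\bpi^{(2)}=\bpi^{(1)}$. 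The paper never argues inside a single sweep; instead it compares iterations $1$ and $2$ directly by backward induction on the time index $k$ (base case: $q_{T-1,x}$ is iteration-independent, so tie-break pins the action; inductive step: equal tails give equal $q_{t,x}$, hence equal argmaxes and equal values), concluding only that $\mathcal{B}^{\bpi^{(2)}}=\mathcal{B}^{\bpi^{(1)}}$, and it recovers the SPE property of the limit afterwards via the injectivity of $\mathcal{B}$ from Theorem \ref{thm: lex-monotone} together with Theorem \ref{thm: to-SPE-Policy}. Your route buys a sharper explanation of \emph{why} two iterations suffice — the first sweep is literally a backward-induction computation of an SPE policy — and it bypasses Theorem \ref{thm: to-SPE-Policy} and the basis machinery entirely. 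The paper's route buys uniformity: its cross-iteration induction template transfers to the local-sweep argmax setting of Theorem \ref{thm: asymp-conv-case-II}, where your within-sweep argument would fail, since a single local sweep need not land on even a $\lambda$-local SPE policy.
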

\begin{proof}
	Assume any initial policy $\bpi^{(0)}$. We note that showing (\ref{termination-cond-case-I}) is equivalent to showing that given $i^* = 1$, $\forall k \in \mathcal{T}$,
	\begin{align}
		\forall x \in \mathcal{X}_k, \left\| q^{(i^*+1)}_{k,x}(u^{(i^*+1)}_{k,x}) - q^{(i^*)}_{k,x}(u^{(i^*)}_{k,x}) \right\| = 0. \label{eq1: asymp-conv-case-I}
	\end{align}
	
	(\textit{Base step.}) At the base case $k = T-1$, we have
	\begin{align}
		q^{(i+1)}_{T-1,x}(u) = q^{(i)}_{T-1,x}(u) = q^*_{T-1, x}(u), \forall x \in \mathcal{X}_{T-1}, \forall u \in \mathcal{U}_{T-1} \label{eq4: asymp-conv-case-I}
	\end{align}
	By Assumption \ref{assum:existsglobalopt}, a global optimum exists. Since the full-sweep argmax \textit{action-specs} prescribes $u^{(i+1)}_{T-1,x} \gets \argmax_{u \in \mathcal{U}_{T-1}} q^{(i+1)}_{T-1,x}(u)$ at $i = 0$, by \textit{consistent tie-break}, we must have $u_{T-1,x}^{(2)} = u_{T-1,x}^{(1)}, \forall x \in \mathcal{X}_{T-1}$. Combining this with (\ref{eq4: asymp-conv-case-I}), we have shown (\ref{eq1: asymp-conv-case-I}).
	
	(\textit{Inductive step.}) Set $i = 1$. Suppose that (\ref{termination-cond-case-I}) holds for $k = t+1$, we have ${}{}_{t+1}\bpi^{(i+1)} = {}{}_{t+1}\bpi^{(i)}$. By Lemma \ref{lem: BWD tie-breaks}, this implies
	\begin{align}
		\forall x \in \mathcal{X}_t, u\in \mathcal{U}_t,~q^{(i+1)}_{t,x}(u) = q^{(i)}_{t,x}(u). \label{eq2: asymp-conv-case-I}
	\end{align}
	Let us now consider any arbitrary $x \in \mathcal{X}_t$. By Assumption \ref{assum:existsglobalopt}, (\ref{eq2: asymp-conv-case-I}), and our \textit{action-specs},
	$$
	u^{(i)}_{t,x}=\argmax_{u \in \mathcal{U}_t} q^{(i)}_{t,x}(u)=\argmax_{u \in \mathcal{U}_t} q^{(i+1)}_{t,x}(u)=u^{(i+1)}_{t,x}.
	$$
	Therefore,
	\begin{align}
		\left\| q^{(i+1)}_{t,x}\left(u^{(i+1)}_{t,x}\right) - q^{(i)}_{t,x}\left(u^{(i)}_{t,x}\right) \right\| = \left\| q^{(i+1)}_{t,x}\left(u^{(i)}_{t,x}\right) - q^{(i)}_{t,x}\left(u^{(i)}_{t,x}\right) \right\| = 0
	\end{align}
	showing that (\ref{eq1: asymp-conv-case-I}) holds for $k = t$.
\end{proof}
By Theorem \ref{thm: asymp-conv-case-I}, we have $\mathcal{B}^{\bpi^{(2)}} = \mathcal{B}^{\bpi^{(1)}}$. By one-to-one $\mathcal{B}$, we have BPI's termination condition $\bpi^{(2)} = \bpi^{(1)}$. We may then apply Theorem \ref{thm: to-SPE-Policy} to conclude termination to (global) SPE-policy in just two iterations. 

Next, we reveal the fact that full-sweep argmax is infeasible in practice under continuous $\mathcal{U}_k$ and while discretization techniques may be applied, the argmax computation will quickly become intractable as the discretization dimension of $\mathcal{U}_k$ increases. In such situation, local search methods are often desirable to trade-off performance (i.e. allowing termination/convergence to \textit{local}\footnote{The `locality' here refers to the value-optimization landscape as a function of action variable $u$ at a fixed time $t$ and state $x$ and thus, is irrelevant to the `locally optimal' plan terminology of SPE policy, where 'locality' refers to the sequential structure.} SPE-policy) for tractability.
\begin{definition}[$\lambda$-Local SPE-Policy] \label{def: local-SPE-policy}
	Any policy $\bpi \in \Pi^{MD}$ is a local SPE-policy if it satisfies
	\begin{align*}
		\forall k\in\mathcal{T},~x\in \mathcal{X}_k, \quad Q^{\bpi}_k(x, \pi_k(x)) \geq Q^{\bpi}_k(x, u), \forall u \in \mathcal{N}_k(\pi_k(x), \lambda),
	\end{align*}
	where $\mathcal{N}_k(\pi_k(x), \lambda)$ is the neighbourhood of $\pi_k(x)$ with the set radius $\lambda > 0$, i.e. $$\mathcal{N}_k(\pi_k(x), \lambda)\doteq\{u\in\mathcal{U}_k:~|u-\pi_k(x)|<\lambda\}$$.
\end{definition}

\paragraph{Local-sweep argmax.} To capture this localized search aim, we modify the full-sweep argmax \textit{action-specs} as
\begin{align}
	\pi'_k(x) \gets \argmax_{u \in \mathcal{N}(\pi_k(x), \lambda)} Q^{\bpi'}_t(x, u) \text{ (arbitrarily)} \label{action-specs-case-II}
\end{align}
which needs to be accompanied by a modification to BPI's \textit{non-termination-condition} to
\begin{align}
	\exists u' \in \mathcal{N}(\pi_k(x), \lambda) \text{ s.t. } Q^{\bpi'}_k(x, u') > Q^{\bpi'}_k(x, \pi_k(x)) \label{non-termination-case-II}
\end{align}
We note that (\ref{non-termination-case-II}) is necessary to characterize its termination policy. Suppose that we retain (\ref{non-termination-case-I}) and the global optimum $u' \in \mathcal{U}_k$ is not in the current neighborhood $\mathcal{N}(\pi_k(x), \lambda)$, we may encounter the situations where (i) we have non-unique solution to the argmax problem in (\ref{action-specs-case-II}) and no termination happens due to inconsistent choices of solution happening in every consecutive iterations such that \textit{if-condition} is always satisfied, or (ii) we have termination either when there is no non-uniqueness issue or by coincidental outputting of the same solution in consecutive iterations in presence of non-uniqueness issue, which then gives the wrong conclusion that \textit{else-condition} has been satisfied and that the converged policy is a global SPE policy. Both cases are not desirable to our analysis. Moreover, the \textit{non-termination-condition} (\ref{non-termination-case-II}) has an additional advantage of requiring the PolEva computation only up to $\forall u \in \mathcal{N}(\pi_k(x), \lambda)$ which can reduce the computational burden in each iteration. Without (\ref{non-termination-case-II}), Corollaries \ref{cor: implication-action-loop-2} and \ref{cor: implication-consistent-tie-break} are countered by situation (ii) and (i), respectively. By modifying \textit{non-termination-condition} to (\ref{non-termination-case-II}), we can change the premise of Corollary \ref{cor: implication-action-loop-2} as
\begin{align}
	\exists u \in \mathcal{N}(\pi_t(x), \lambda) \text{ s.t. } Q^{\bpi'}_t(x, u) > Q^{\bpi'}_t(x, \pi_t(x)) \label{cor-18-premise-modif}
\end{align}
and recover both corollaries. The results up to Theorem \ref{thm: lex-monotone} then apply as they rely solely on these two corollaries. In what follows, we will show finite termination with value-based analysis, which can be validated once Theorem \ref{thm: lex-monotone} applies by retaining the injectivity of $\mathcal{B}$. However, we need a stronger assumption than Assumption \ref{assum:existsglobalopt}.

\begin{assumption} \label{assum:condqU}
	At each iteration $i\in\mathbb{N}$, for any pair $(k,x)\in \mathcal{T}\times \mathcal{X}_k$, $q^{(i)}_{k,x}(u)$ is continuous and bounded over $u\in\mathcal{U}_k$. Moreover, $\mathcal{U}_k$ is compact.
\end{assumption}

\begin{theorem}[Finite Termination] \label{thm: asymp-conv-case-II}
	Suppose that Assumption \ref{assum:condqU} holds. 

    For any fixed $t \in \mathcal{T}$, if for each $k \in _{t+1}\mathcal{T}$,
    \begin{align}
        \exists i^*_k < \infty \text{ s.t. } \forall i \geq i^*_k, \forall x \in \mathcal{X}_k, \quad \left\|Q^{(i+1)}_k\left(x, \pi^{(i+1)}_k(x)\right) - Q^{(i)}_k\left(x, \pi^{(i)}_k(x)\right)\right\| = 0, \label{eq13: asymp-conv-case-II}
    \end{align}
    then, for any fixed $x \in \mathcal{X}_t$,
    \begin{align}
        \exists i^*_{t,x} < \infty \text{ s.t. } \forall i \geq i^*_{t,x}, \quad \left\| Q_t^{(i+1)}(x, \pi^{(i+1)}_t(x)) - Q_t^{(i)}(x, \pi^{(i)}_t(x)) \right\| = 0. \label{eq14: asymp-conv-case-II}
    \end{align}
    Moreover, if for each $t \in \mathcal{T}$, $i^*_{t,x}$ is bounded in $x \in \mathcal{X}_t$\footnote{To iterate some instances when this assumption is met: (i) discrete $\mathcal{X}_t$, (ii) as in Theorem \ref{thm: asymp-conv-case-I}, and (iii) distance of initialization $u^{(0)}_{t,x}$ to the corresponding local optima $u^{(\infty)}_{t,x}$ is bounded in $x$.}, then the following holds
    \begin{align}
    	\exists i^* < \infty \text{ s.t. } \forall i \geq i^*, \left\|Q^{(i+1)}_t\left(x, \pi^{(i+1)}_t(x)\right) - Q^{(i)}_t\left(x, \pi^{(i)}_t(x)\right)\right\| = 0, \forall t \in \mathcal{T}, x \in \mathcal{X}_t. \label{eq15: asymp-conv-case-II}
    \end{align}
\end{theorem}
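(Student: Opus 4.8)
The plan is to prove both implications by a single backward induction over $t \in \mathcal{T}$, reading the first displayed implication as the inductive step and the footnoted boundedness hypothesis as the device that upgrades the per-state conclusion (\ref{eq14: asymp-conv-case-II}) to the uniform conclusion (\ref{eq15: asymp-conv-case-II}). The base case $t=T-1$ is subsumed by the first implication with premise (\ref{eq13: asymp-conv-case-II}) holding vacuously (since ${}_{T}\mathcal{T}=\emptyset$): there the action-value $q^{(i)}_{T-1,x}$ is the boundary object in (\ref{Q-DP-target}) and is policy-independent, hence frozen from the outset. For $t<T-1$ the first move is to collapse the tail. Setting $\hat\imath \doteq \max_{k \in {}_{t+1}\mathcal{T}} i^*_k$, which is finite as a maximum over the finite set ${}_{t+1}\mathcal{T}$, the hypothesis (\ref{eq13: asymp-conv-case-II}) says every later basis entry is constant for $i\ge \hat\imath$. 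Invoking Lemma \ref{lem: BWD tie-breaks} together with the injectivity of $\mathcal{B}$ established in Theorem \ref{thm: lex-monotone}, this forces the tail policy ${}_{t+1}\bpi^{(i)}$ to be constant for $i\ge\hat\imath$; and since $Q^{\bpi}_t(x,\cdot)$ depends on $\bpi$ only through ${}_{t+1}\bpi$, the action-value $q^{(i)}_{t,x}$ freezes to a single continuous limit $q^{(\infty)}_{t,x} \doteq q^{{}_{t+1}\bpi^{(\infty)}}_{t,x}$.

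With the action-value frozen, the analysis reduces to a one-dimensional hill-climbing recursion on a fixed function: for $i\ge\hat\imath$ we have $u^{(i+1)}_{t,x}=\argmax_{u\in\mathcal{N}(u^{(i)}_{t,x},\lambda)} q^{(\infty)}_{t,x}(u)$, with the consistent tie-break retaining $u^{(i)}_{t,x}$ whenever no strictly improving action lies in the neighbourhood. Because the centre lies in its own $\lambda$-ball, the value sequence $v_i\doteq q^{(\infty)}_{t,x}(u^{(i)}_{t,x})$ is non-decreasing, and by the boundedness in Assumption \ref{assum:condqU} it converges to $v^*\doteq\sup_i v_i$. The heart of the proof is to turn this convergence into termination in finitely many steps. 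By compactness of $\mathcal{U}_k$ (Assumption \ref{assum:condqU}) I would extract a subsequence $u^{(i_j)}_{t,x}\to\bar u\in\mathcal{U}_k$; continuity gives $q^{(\infty)}_{t,x}(\bar u)=v^*$. For $j$ large, $|u^{(i_j)}_{t,x}-\bar u|<\lambda$, so $\bar u\in\mathcal{N}(u^{(i_j)}_{t,x},\lambda)$ and hence $v_{i_j+1}\ge q^{(\infty)}_{t,x}(\bar u)=v^*$; combined with $v_{i_j+1}\le v^*$ this yields $v_{i_j+1}=v^*$. Thus $v$ attains its supremum at some finite index $I$; for $i\ge I$ no strictly improving neighbour can exist (it would exceed $v^*$), so consistent tie-break freezes the iterate, giving (\ref{eq14: asymp-conv-case-II}) with $i^*_{t,x}\doteq I$.

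To pass from the pointwise conclusion (\ref{eq14: asymp-conv-case-II}) to the uniform conclusion (\ref{eq15: asymp-conv-case-II}) I would invoke the footnoted hypothesis that $i^*_{t,x}$ is bounded in $x$: set $i^*_t\doteq\sup_{x\in\mathcal{X}_t} i^*_{t,x}<\infty$, so that the whole time-$t$ slice of the basis is frozen for $i\ge i^*_t$. This exactly reconstitutes the premise (\ref{eq13: asymp-conv-case-II}) one step earlier in the backward recursion and lets the induction proceed from $t=T-1$ down to $t=0$. Taking $i^*\doteq\max_{t\in\mathcal{T}} i^*_t$, again a maximum over the finite set $\mathcal{T}$, then delivers (\ref{eq15: asymp-conv-case-II}).

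The main obstacle is precisely the upgrade from asymptotic value convergence to \emph{exact} stabilization in finitely many iterations. Monotone-plus-bounded only buys a limit, and a naive hill-climb on a continuous function could in principle creep toward a local optimum forever; what rescues finiteness is the fixed search radius $\lambda$, which guarantees that the supremal-value limit point $\bar u$ eventually falls inside the current neighbourhood and is therefore actually attained in one further step. The remaining care is bookkeeping: bridging the value-equality in (\ref{eq13: asymp-conv-case-II}) with the $\sim_{eq}$ relation so that frozen later \emph{values} genuinely freeze the tail \emph{policy} (via Lemma \ref{lem: BWD tie-breaks} and the injectivity of $\mathcal{B}$), and recognising that the attained-supremum index $I$ is controllable uniformly in $x$ only under the stated boundedness hypothesis, without which (\ref{eq15: asymp-conv-case-II}) can fail even though (\ref{eq14: asymp-conv-case-II}) holds at every fixed state.
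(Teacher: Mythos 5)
Your proposal is correct and shares the paper's overall skeleton --- collapse the tail policy from the frozen tail values, reduce to hill-climbing on the frozen $q^{(\infty)}_{t,x}$, get value convergence from monotone-plus-bounded, show the supremum is attained at a finite iteration, then run the backward induction using the footnoted boundedness of $i^*_{t,x}$ --- but the crucial finite-attainment step is argued by a genuinely different route. The paper proves attainment by contradiction with a case analysis on whether the action sequence $\{u^{(i)}_{t,x}\}$ is monotonic, invoking the geometry of the union $\bigcup_i \mathcal{N}(u^{(i)}_{t,x},\lambda)$ (its boundedness and openness) and arguing that a closer later iterate ``should have been chosen'' earlier; several of these steps (e.g.\ openness ``by the extreme value theorem,'' and the final contradiction from the limit lying in an open union) are murky. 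Your argument instead extracts, by compactness of $\mathcal{U}_t$ and continuity of $q^{(\infty)}_{t,x}$ (both from Assumption \ref{assum:condqU}), a convergent subsequence $u^{(i_j)}_{t,x}\to\bar u$ with $q^{(\infty)}_{t,x}(\bar u)=v^*$, notes that eventually $\bar u\in\mathcal{N}(u^{(i_j)}_{t,x},\lambda)$, and concludes $v_{i_j+1}=v^*$ in one further step; this is cleaner, avoids the paper's questionable geometric claims, and makes transparent that the fixed radius $\lambda$ is exactly what converts asymptotic convergence into finite termination. Both proofs then freeze the iterate via \textit{consistent tie-break} and assemble (\ref{eq15: asymp-conv-case-II}) identically. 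One imprecision to fix: the tail-collapse step does not follow from ``injectivity of $\mathcal{B}$'' (which concerns the \emph{full} basis, while (\ref{eq13: asymp-conv-case-II}) only freezes its tail entries); the correct bridge is Lemma \ref{lem: FIN 3.3-2} --- under non-termination the lex-index $k^*$ carries a strict element-wise basis increase (Claim 2 in the proof of Theorem \ref{thm: lex-monotone}), so a frozen tail basis forces $k^*\le t$, whence condition (\ref{cond-iii}) and Lemma \ref{lem: BWD tie-breaks} give ${}_{t+1}\bpi^{(i+1)}={}_{t+1}\bpi^{(i)}$ --- which is the route the paper takes.
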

\begin{proof}

By Lemma \ref{lem: FIN 3.3-2}, (\ref{eq13: asymp-conv-case-II}) means that at iteration $i \geq i^*_{t+1}$, the lex-index $k^* \leq t$ and it implies ${}{}_{t+1}\bpi^{(i)} = {}{}_{t+1}\bpi^{(\infty)}$ and correspondingly,
\begin{align}
	\forall x \in \mathcal{X}_t, u \in \mathcal{U}_{t}, \quad q^{(i+1)}_{t,x}(u) = q^{(i)}_{t,x}(u) = q^{(\infty)}_{t,x}(u) \label{eq11: asymp-conv-case-II}
\end{align}
Suppose $k^* < t$, we can set $i^*_{t,x} = i^*_{t+1} < \infty, \forall x \in \mathcal{X}_t$, thus showing (\ref{eq14: asymp-conv-case-II}). Otherwise ($k^*=t$), $\forall i \geq i_{t+1}^*$ and for any fixed $x \in \mathcal{X}_t$, we have \footnotesize
\begin{align}
	\left\|q^{(i+1)}_{t,x}\left(u^{(i+1)}_{t,x}\right) - q^{(i)}_{t,x}\left(u^{(i)}_{t,x}\right) \right\| &\leq \left\| q^{(i+1)}_{t,x}\left(u^{(i+1)}_{t,x}\right) - q^{(i+1)}_{t,x}\left(u^{(i)}_{t,x}\right) \right\| + \left\| q^{(i+1)}_{t,x}\left(u^{(i)}_{t,x}\right) - q^{(i)}_{t,x}\left(u^{(i)}_{t,x}\right) \right\| \nonumber \\
	&= \left\| q^{(i+1)}_{t,x}\left(u^{(i+1)}_{t,x}\right)  - q^{(i+1)}_{t,x}\left(u^{(i)}_{t,x}\right) \right\| \nonumber \\
	&= \left\| q^{(\infty)}_{t,x}\left(u^{(i+1)}_{t,x}\right)  - q^{(\infty)}_{t,x}\left(u^{(i)}_{t,x}\right) \right\|, \nonumber
\end{align} \normalsize
where the second and third equations hold by (\ref{eq11: asymp-conv-case-II}). It thus remains to show that $\exists i^*_{t,x} \in [i^*_{t+1}, \infty)$ s.t. $\forall i \geq i^*_{t,x}$,
\begin{align}
    \left\| q^{(\infty)}_{t,x}\left(u^{(i+1)}_{t,x}\right)  - q^{(\infty)}_{t,x}\left(u^{(i)}_{t,x}\right) \right\| = 0.
\end{align}
First, we note that since $k^* = t$, $\exists x \in \mathcal{X}_t$ where the sequence $\left\{q^{(\infty)}_{t, x}\left(u^{(i)}_{t,x}\right): i \geq 0\right\}$ is increasing. By Assumption \ref{assum:condqU}, such sequence is bounded above by $\sup\left\{q^{(\infty)}_{t,x}(u): u \in \mathcal{U}_t\right\}$ and thus, convergent. This then implies
\begin{align}
	\lim_{i \rightarrow \infty} \left\| q^{(\infty)}_{t,x}\left(u^{(i+1)}_{t,x}\right) - q^{(\infty)}_{t,x}\left(u^{(i)}_{t,x}\right) \right\| = 0. \label{eq8: asymp-conv-case-II}
\end{align}
Next, we will show that the limit in (\ref{eq8: asymp-conv-case-II}) is attained at some finite iteration $i^*_{t,x}$. Suppose otherwise, the accumulation point
$$
\sup\left\{q^{(\infty)}_{t,x}\left(u^{(i)}_{t,x}\right): i \geq i^*_{t+1}\right\} = \sup\left\{q^{(\infty)}_{t,x}(u): u \in \bigcup_{i \geq i^*_{t+1}} \mathcal{N}\left(u^{(i)}_{t,x}, \lambda\right)\right\} \label{eq1: asymp-conv-case-II}
$$
is never attained. Since by Assumption \ref{assum:condqU}, $\mathcal{U}_{t,x}$ is bounded,
$$\bigcup_{i \geq i^*_{t+1}} \mathcal{N}\left(u^{(i)}_{t,x}, \lambda \right) = \bigcup_{i \geq i^*_{t+1}} \left(u^{(i)}_{t,x} - \lambda, u^{(i)}_{t,x} + \lambda\right)$$ 
must also be bounded. Moreover, by Assumption \ref{assum:condqU}, $q^{(\infty)}_{t,x}$ is continuous and implies by the extreme value theorem that $\bigcup_{i \geq i^*_{t+1}} \mathcal{N}\left(u^{(i)}_{t,x}, \lambda \right)$ is open.

We first consider the case when the sequence $\left\{u^{(i)}_{t,x} : i \geq i^*_{t+1}\right\}$ is non-monotonic with respect to $u^{(i^*_{t+1})}_{t,x}$. Therefore, $\exists i^* \geq [i^*_{t+1}, \infty)$ s.t. $\left\| u^{(i^* + 1)}_{T-1,x} - u^{(i^*)}_{T-1,x} \right\| > \left\| u^{(i^* + 2)}_{T-1,x} - u^{(i^*)}_{T-1,x} \right\|$ and $u^{(i^* + 2)}_{T-1,x} \in \mathcal{N}\left(u^{(i^*)}, \lambda\right)$. This leads to a contradiction as in iteration $i^* + 1$, $u^{(i^* + 2)}$ should have been chosen then instead of $u^{(i^* + 1)}$.

Consider next bounded and monotonic $\left\{u^{(i)}_{t,x} : i \geq i^*_{t+1}\right\}$. Then, $\lim_{i \rightarrow \infty} u^{(i)}_{t,x}$ exists and
$$
\forall \epsilon > 0, \exists i^* \in [i^*_{t+1}, \infty) \text{ s.t. } \forall i \geq i^*, \left\| u^{(i+1)}_{t,x} - u^{(i)}_{t,x}\right\| < \epsilon.
$$
Set $\epsilon = \frac{\lambda}{2}$. Thus, we must have
$$
\left\| u^{(i^*+2)}_{t,x} - u^{(i^*)}_{t,x}\right\| \leq \left\| u^{(i^*+2)}_{t,x} - u^{(i^* + 1)}_{t,x} \right\| + \left\| u^{(i^* + 1)}_{t,x} - u^{(i^*)}_{t,x} \right\| < \lambda,
$$
such that $u^{(i^*+2)}_{t,x} \in \mathcal{N}\left(u^{(i^*)}_{t,x}, \lambda \right)$. If $u^{(i^*+2)}_{t,x} \neq u^{(i^*+1)}_{t,x}$, we may apply similar argument as in the non-monotonic case to conclude contradiction. Otherwise, we must have $\lim_{i \rightarrow \infty} u^{(i)}_{t,x} = u^{(i^*+1)}_{t,x} \in \bigcup_{i \geq i^*_{t+1}} \mathcal{N}\left(u^{(i)}_{t,x}, \lambda \right)$ which contradicts $\bigcup_{i \geq i^*_{t+1}} \mathcal{N}\left(u^{(i)}_{t,x}, \lambda \right)$ being open. 
Therefore, our supposition must be false: the $0$-limit in (\ref{eq8: asymp-conv-case-II}) must be attained at some finite iteration $i^*_{t,x} \geq i^*_{t+1}$ which concludes that (\ref{eq14: asymp-conv-case-II}) holds.

Showing (\ref{eq15: asymp-conv-case-II}) is equivalent to showing that (\ref{eq13: asymp-conv-case-II}) holds for all $k \in \mathcal{T}$. We prove the latter by induction.

(Base case.) At $k = T-1$, (\ref{eq11: asymp-conv-case-II}) holds for all $i \geq 0$ such that for any fixed $x \in \mathcal{X}_{T-1}$,
\begin{align*}
    \left\|q^{(i+1)}_{T-1,x}\left(u^{(i+1)}_{T-1,x}\right) - q^{(i)}_{T-1,x}\left(u^{(i)}_{T-1,x}\right) \right\| = \left\| q^{(\infty)}_{T-1,x}\left(u^{(i+1)}_{T-1,x}\right)  - q^{(\infty)}_{T-1,x}\left(u^{(i)}_{T-1,x}\right) \right\|.
\end{align*}
By setting $i^*_{T} = 0$, we can apply the proof for (\ref{eq14: asymp-conv-case-II}) (i.e. $k = t$) to show that $i^*_{T-1,x} < \infty$ for any fixed $x \in \mathcal{X}_{T-1}$. Since we have $i^*_{T-1,x}$ bounded in $x \in \mathcal{X}_{T-1}$ by assumption, we can set $i^*_{T-1} = \sup\{i^*_{T-1,x}: x \in \mathcal{X}_{T-1}\} < \infty$ to conclude that (\ref{eq13: asymp-conv-case-II}) holds for $k = T-1$.

(Inductive step.) Suppose (\ref{eq13: asymp-conv-case-II}) holds for $k \in _{t+1}\mathcal{T}$, this is exactly the assumption for (\ref{eq14: asymp-conv-case-II}), to which we have shown the existence of $i^*_{t,x} < \infty$ for any fixed $x \in \mathcal{X}_t$. Similarly by the assumption of $i^*_{t,x}$ bounded in $x \in \mathcal{X}_t$, we can set $i^*_t = \sup \{i^*_{t,x}: x \in \mathcal{X}_t\} < \infty$ thus showing that (\ref{eq13: asymp-conv-case-II}) holds for $k = t$.

Finally, by noting that $i^*_0 \geq i^*_{1} \geq \dotso \geq i^*_{T-1}$, we can set $i^* = i^*_0$ to show (\ref{eq15: asymp-conv-case-II}).
\end{proof}

We will next re-derive a result similar to Theorem \ref{thm: to-SPE-Policy}, whose conclusion becomes unprovable once (\ref{eq2: to-SPE-policy}) is no longer the premise of Corollary \ref{cor: implication-action-loop-2} (i.e. \textit{modified} to (\ref{cor-18-premise-modif})). To rectify this issue, we will modify its conclusion to reflect the $\lambda$-local SPE policy; see Definition \ref{def: local-SPE-policy}.
\begin{theorem}[Converged policy is $\lambda$-local SPE-policy]
	If $\bpi' = \bpi$, then
	\begin{align}
		\forall t\in \mathcal{T}, \forall x \in \mathcal{X}_t, \quad Q^{\bpi}_k(x, \pi_k(x)) \geq Q^{\bpi}_k(x, u), \forall u \in \mathcal{N}(\pi_k(x), \lambda). \label{eq1: to-local-SPE-policy}
	\end{align}
	\label{thm: to-local-SPE-policy}
\end{theorem}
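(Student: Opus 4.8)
The plan is to argue by contradiction, transcribing almost verbatim the proof of Theorem \ref{thm: to-SPE-Policy}, with the single adjustment that every action comparison is now confined to the neighbourhood $\mathcal{N}(\pi_t(x), \lambda)$ and that we invoke the \emph{modified} Corollary \ref{cor: implication-action-loop-2}, i.e. the one whose premise has been swapped to (\ref{cor-18-premise-modif}) to accommodate the local-sweep \textit{action-specs} (\ref{action-specs-case-II})--(\ref{non-termination-case-II}). So the skeleton of the argument is unchanged; only the quantifier range over $u$ shrinks from $\mathcal{U}_t$ to $\mathcal{N}(\pi_t(x), \lambda)$.

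First I would negate (\ref{eq1: to-local-SPE-policy}): suppose there exist a time $t^* \in \mathcal{T}$, a state $x^* \in \mathcal{X}_{t^*}$, and an action $u \in \mathcal{N}(\pi_{t^*}(x^*), \lambda)$ such that
\[
Q^{\bpi}_{t^*}(x^*, u) > Q^{\bpi}_{t^*}(x^*, \pi_{t^*}(x^*)).
\]
Next I would apply the termination hypothesis $\bpi' = \bpi$, which in particular gives ${}{}_{t^*}\bpi' = {}{}_{t^*}\bpi$; since the action-value function at $t^*$ depends on the policy only through its tail ${}{}_{t^*+1}\bpi$ (Definition \ref{def: Policy-dependent VFs} and the ensuing remark) and the action chosen at $t^*$, the two policies induce identical action-values at $t^*$. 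Substituting $\bpi'$ for $\bpi$ therefore turns the displayed strict inequality into
\[
Q^{\bpi'}_{t^*}(x^*, u) > Q^{\bpi'}_{t^*}(x^*, \pi_{t^*}(x^*)), \qquad u \in \mathcal{N}(\pi_{t^*}(x^*), \lambda).
\]

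This last line is precisely the modified premise (\ref{cor-18-premise-modif}) of Corollary \ref{cor: implication-action-loop-2}, whose conclusion reads $\pi'_{t^*}(x^*) \neq \pi_{t^*}(x^*)$. That directly contradicts the standing assumption $\bpi' = \bpi$, so the supposition must be false and (\ref{eq1: to-local-SPE-policy}) holds for every $t \in \mathcal{T}$ and $x \in \mathcal{X}_t$. (I would also silently correct the apparent typo in the statement, reading the bound index $k$ as $t$ throughout.)

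The step I expect to carry all the weight is \emph{not} inside this proof but upstream of it: the entire argument rests on Corollary \ref{cor: implication-action-loop-2} remaining valid once its premise is localized to $\mathcal{N}(\pi_k(x), \lambda)$, which is exactly what the text preceding the theorem asserts when it explains why the \textit{non-termination-condition} must be modified to (\ref{non-termination-case-II}). Thus the only genuine obstacle is to be sure that restricting both the action search (\ref{action-specs-case-II}) and the non-termination test (\ref{non-termination-case-II}) to the same neighbourhood re-establishes the corollary (and, in parallel, Corollary \ref{cor: implication-consistent-tie-break}); granting that, the contradiction argument above is a routine re-localization of the global case, and no new estimates are needed.
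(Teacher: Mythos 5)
Your proposal is correct and follows essentially the same route as the paper's own proof: contradiction via negating the local SPE condition, using $\bpi'=\bpi$ (hence ${}{}_{t^*}\bpi'={}{}_{t^*}\bpi$) to replace $Q^{\bpi}$ by $Q^{\bpi'}$ in the strict inequality, and then invoking the \emph{modified} Corollary \ref{cor: implication-action-loop-2} with premise (\ref{cor-18-premise-modif}) to conclude $\pi'_{t^*}(x^*)\neq\pi_{t^*}(x^*)$, contradicting termination. Your side remarks — that the statement's $k$ should read $t$, and that the real load is carried upstream by re-establishing the corollary under the localized \textit{non-termination-condition} (\ref{non-termination-case-II}) — are both accurate and consistent with the paper's presentation.
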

\begin{proof}
	Suppose otherwise,
	\begin{align*}
		\exists k, x \text{ s.t. } \exists u \in \mathcal{N}(\pi_k(x), \lambda), \quad Q^{\bpi}_k(x, u) > Q^{\bpi}_k(x, \pi_k(x)).
	\end{align*}
	Focus on one such $k$. By assumption that $\bpi' = \bpi$, we have ${}{}_k\bpi' = {}{}_k\bpi$ which then implies
	$$
	\exists x \in \mathcal{X}_k \text{ s.t. } \exists u \in \mathcal{N}(\pi_k(x), \lambda), \quad Q^{\bpi'}_k(x, u) > Q^{\bpi'}_k(x, \pi_k(x)).
	$$
	By \textit{modified} Corollary \ref{cor: implication-action-loop-2} with (\ref{cor-18-premise-modif}), this implies that $\pi'_k(x) \neq \pi_k(x)$ which contradicts $\bpi' = \bpi$. Thus, supposition is false and (\ref{eq1: to-local-SPE-policy}) must hold.
\end{proof}


\subsection{Chapter Summary}
Through this section, we have introduced the BPI as a new class of policy iteration algorithms to learn SPE policy under finite-horizon TIC objective specified in Section \ref{sec: prelim}. We obtained monotonicity results for general state-action case that circumvent the use of PIT. In Section \ref{sec:discretesa}, we dealt with discrete state-action and proved the correctness of BPI in converging to a (global) SPE policy by policy-based analysis that is, by permuting over all possible policies in the discrete search space $\Pi^{MD}$. In Section \ref{sec:contsa}, we generalize this result to continuous state-action, where we turn to value-based analysis as permutative argument no longer applies. This necessitates further specification of BPI's \textit{action-specs}, which we exemplified through through cases: (i) full-sweep argmax, and (ii) local-sweep argmax. In each case, we proved convergence to (global/local) SPE policy.

Next, we highlight several defining rules of BPI that have played a central role in our analyses in general and across different \textit{action-specs}. 

\paragraph{$\bpi'$-based PolEva-specs.} This rule captures the game-theoretic nature of BPI and mainly distinguishes BPI from standard RL, in which $\bpi$-based PolEva-specs is used. In particular, it contributes in establishing lex-monotonicity as a sufficient condition through Proposition \ref{prop: for FIN 3.3-2}. Intuitively, lex-monotonicity guarantees that if at indexes in ${}{}_{k^*+1}\mathcal{T}$, the distance to the (global/local) SPE-policy contracts, then at the lex-index $k^*$ it must also contract (by \textit{strictly-improving action-specs}). Note that by definition, the lex-index $k^*$ is determined with $\bpi'$. For instance, in full-sweep argmax, the lex-index $k^*$ for the old-new policy pair $(\bpi^{(0)}, \bpi^{(1)})$ is $0$ and \textit{not} $T-1$. This guarantee is related to the speed of convergence, which in our case consists of two components: (i) how fast is the lex-index $k^*$ moving to $0$, and (ii) how long does it take to finish updating for one particular $k^*$. The faster the update \textit{at} $k^*$, the more advantageous is this rule over $\bpi$-based PolEva. For instance, in the extreme case when full-sweep argmax is used, BPI converges in just two iterations while $\bpi$-based PolEva can only be guaranteed to converge in $T$ iterations. Referring to Theorem \ref{thm: asymp-conv-case-II}, BPI attains $0$-limit at some finite iteration $i^*_0 = i^*_1 = \dotso = i^*_{T-1} = 1$. In contrast, $\bpi$-based PolEva will only reflect a \textit{current} iteration's changes in future policies in the \textit{next} iteration, i.e. $i^*_{t} = i^*_{t+1} + 1, \forall t < T-1$.

\paragraph{\textit{Strictly-improving} action-specs.} This rule imposes \textit{strict} SPE-improving update in each iteration, preventing stagnancy unless it is \textit{SPE-optimal} as described by Corollary \ref{cor: implication-action-loop-2}. This rule is especially important in characterizing  \textit{non-termination} as \textit{strict} lex-monotonicity i.e. $\mathcal{B}' >_{lex} \mathcal{B}$. This allows the use of $\mathcal{B}' = \mathcal{B}$ to characterize \textit{termination} which serves as the basis of value-based analysis in Section 3.2.2. Across different \textit{action-specs}, this rule is tightly connected to the \textit{non-termination}-condition. To illustrate, we may revisit the local-sweep argmax case, where the \textit{strictly-improving} rule alone is insufficient to establish \textit{strict} lex-monotonicity unless accompanied with a corresponding modification of \textit{non-termination}-condition.

\paragraph{Consistent tie-break.} This rule prevents each player $t$'s oscillation of policies when the values $Q^{{}{}_{t+1}\bpi'}_t(x,\pi_t(x))$ are equal and contributes to lex-monotonicity as sufficient conditions in the form of Corollary \ref{cor: implication-consistent-tie-break} and Lemma \ref{lem: BWD tie-breaks}. The need of such rule in SPERL is motivated by the dependence of each players' \textit{SPE-optimality} on the choice of \textit{other} players. Consider the case when at some iteration $i\in\mathbb{N}$, SPE policy $\bpi^{(i)}$ has been found such that by BPI, it remains to go through one more iteration to reach \textit{termination}, i.e. $\bpi^{(i+1)} = \bpi^{(i)}$. Now, suppose that there is a player $t+1$ which from its perspective, the action $u'$ and $u$ have the same values. Without \textit{consistent tie-break} rule, $t+1$ may shift his action choice, i.e. $\pi^{(i)}_{t+1}(x) = u$ and $\pi^{(i+1)}_{t+1}(x) = u'$, which by the \textit{adjustment terms} in (\ref{PE-Q-true-recursion}),
\begin{align}
	{}{}_{t+1}\bpi^{(i+1)} \sim_{eq} {}{}_{t+1}\bpi^{(i)} \not\Rightarrow Q^{{}{}_{t+1}\bpi^{(i+1)}}_{t}(x, u) = Q^{{}{}_{t+1}\bpi^{(i)}}_{t}(x, u), \quad \forall x \in \mathcal{X}_t, u \in \mathcal{U}_t. \label{eq1: consistent-tie-break-importance}
\end{align}
Once we have non-equality (i.e. the conclusion of (\ref{eq1: consistent-tie-break-importance})), such shift will break the \textit{SPE-optimality} of $\bpi^{(i)}$ and will cause \textit{earlier} players in ${}{}^t\mathcal{T}$ to re-adjust to a different SPE-policy. This process can then repeat itself causing the algorithm to never \textit{terminate}. Moreover, by noting that re-adjustment may not happen at once, e.g., local-sweep argmax, force-terminating the algorithm may lead to a non-SPE-optimal policy. In contrast, standard RL approaches do not usually impose such rule since the dependence of $t$-agent's evaluation $Q^{\bpi}_t(x, u)$ to the players in ${}{}_{t+1}\mathcal{T}$ in standard RL problems is fully encoded by $Q^{\bpi}_{t+1}(\cdot, \pi_{t+1}(\cdot)) = \max_u Q^{\bpi}_{t+1}(\cdot, u)$ without \textit{adjustment terms}. Thus, as long as the action choices are (locally/globally) argmax, $Q^{\bpi}_t(x, u)$ is invariant to the choice of $\pi_{t+1}$ and (\ref{eq1: consistent-tie-break-importance}) will never happen. Finally, we note that in SPERL, if we can ensure a \textit{unique} solution to any argmax operation, (\ref{eq1: consistent-tie-break-importance}) can also be prevented; for instance, in local-sweep argmax, when $\lambda$ is sufficiently small or in full-sweep argmax, or when we have \textit{unique} global SPE policy.

\begin{remark}[Performance of the converged SPE policy]
	As illustrated in the paragraph above, each action choice $u'$ matters to which SPE (if non-unique) a search algorithm will converge to. And while BPI's consistent tie-break rule is supported by game-theoretic arguments, in reality, different choices of $u'$ may affect the actual control performance.
	\label{remark: choice-of-u'}
\end{remark}
One drawback of the algorithms covered in this section is the assumed full-sweep over the state-spaces $\{\mathcal{X}_t: t \in \mathcal{T}\}$ that is unrealistic in practice. In the next section, we will propose several SPERL training algorithms that relax such an assumption.
	
	\section{Training Algorithms} \label{sec: training-algo}
	In this section, we will focus on relaxing the full-sweep assumptions on the state-spaces $\{\mathcal{X}_t: t \in \mathcal{T}\}$ by incorporating standard RL simulation methods into BPI. We consider three types of methods, namely (i) tabular Q-learning, (ii) Q-learning with function approximators, and (iii) gradient-based methods. For each method, we will first set up new prediction objectives that build on BPI's PolEva step with particular attention drawn to the training of \textit{adjustment functions}. Then, we specify how to adapt BPI's key rules, specifically $\bpi'$-based PolEva-specs and \textit{consistent tie-break}, while noting that \textit{strictly-improving action-specs} automatically applies by the default setup.


\subsection{Tabular Q-learning}
Here, we derive a SPERL version of the standard finite-horizon Q-learning presented in \cite{Harada1997}. Consider a SPERL agent that consists of $\mathcal{T}$ child agents and define tabular representations $\hat{Q}_t, \hat{f}_t, \hat{g}_t, \hat{r}_t$ for each agent $t$ i.e.
\begin{alignat}{3}
	\Hat{Q}_t(x, u) &\approx && \; Q^{{}{}_{t+1}\bpi'}_t(x, u),
	\label{Q-MC-target}\\
	\Hat{r}_t(x, u, \tau, m, y) &\approx  && \; r^{{}{}_{t+1}\bpi', \tau, m, y}_t(x, u),
	\label{r-MC-target} \\
	\Hat{f}_t(x, u, \tau, y) &\approx && \; f^{{}{}_{t+1}\bpi', \tau, y}_t(x, u),
	\label{f-MC-target}\\
	\Hat{g}_t(x, u) &\approx && \; g^{{}{}_{t+1}\bpi'}_t(x, u). \label{g-MC-target}
\end{alignat}
The superscript $\bpi'$ denotes the SPERL agent's policy obtained after applying BPI with $\bpi$. We can then apply the TIC-TD PolEva derived in Section 3.1 and obtain a bootstrapped version of the DP targets defined in (\ref{r-DP-target})-(\ref{Q-DP-target}) as follows
\begin{align}
	\xi^r_t(x, u, \tau, m, y) \doteq
	\begin{cases}
		\mathcal{R}_{T-1,T-1}(y,X_T,u), &\text{if $m=\tau=t=T-1$},\\
		\mathcal{R}_{t, t}(y, x, u), &\text{if $m=\tau=t, \forall t<T-1$},\\
		\Hat{r}_{t+1}(X_{t+1}, \pi'_{t+1}(X_{t+1}), \tau, m, y), \, &\text{if $m\not=t$, $\forall t<T-1$},
	\end{cases} \label{r-TD-target}
\end{align}
\begin{align}
	\xi^f_t(x, u, \tau, y) \doteq
	\begin{cases}
		\mathcal{F}_{\tau}(y, X_T), &\text{if $t = T-1$},\\
		\Hat{f}_{t+1}(X_{t+1}, \pi'_{t+1}(X_{t+1}), \tau, y), \, &\text{otherwise},
	\end{cases} \label{f-TD-target}
\end{align}
\begin{align}    
	\xi^g_t(x, u) \doteq
	\begin{cases}
		X_T, &\text{if $t = T-1$},\\
		\Hat{g}_{t+1}(X_{t+1}, \pi'_{t+1}(X_{t+1})), \, &\text{otherwise},
	\end{cases} \label{g-TD-target}
\end{align}
\begin{align}
	\xi^Q_t(x, u) \doteq 
	\begin{cases}
		\hat{r}_t(x, u, t, t, x) + \hat{f}_t(x, u, t, x) + \mathcal{G}_{t}(x, \hat{g}_t(x, u)), &\text{if $t = T - 1$},\\
		\hat{r}_t(x, u, t, t, x) + \Hat{Q}_{t+1}(X_{t+1}, \pi'_{t+1}(X_{t+1})) - (\Delta \Hat{r}_t + \Delta \Hat{f}_t + \Delta \Hat{g}_t), \, &\text{otherwise},
	\end{cases} \label{Q-TD-target}
\end{align}
where
\begin{align}
	\Delta \Hat{r}_t &\doteq \sum^{T-1}_{m = t+1} \left(\Hat{r}_{t+1}(X_{t+1}, \pi'_{t+1}(X_{t+1}), t+1, m, X_{t+1}) - \Hat{r}_t(x, u, t, m, x) \right), \label{r-adjust-TD}\\
	\Delta \Hat{f}_t &\doteq \Hat{f}_{t+1}(X_{t+1}, \pi'_{t+1}(X_{t+1}), t+1, X_{t+1}) - \Hat{f}_t(x, u, t, x), \label{f-adjust-TD}\\
	\Delta \Hat{g}_t &\doteq \mathcal{G}_{t+1}(X_{t+1}, \hat{g}_{t+1}(X_{t+1}, \pi'_{t+1}(X_{t+1}))) - \mathcal{G}_t(x, \hat{g}_t(x, u)). \label{g-adjust-TD}
\end{align}
Finally, we follow a generalized policy iteration to perform BPI update i.e. $\forall t,x$,
\begin{align}
	\pi'_t(x) \gets \argmax_{u \in \mathcal{U}_t} \hat{Q}_t(x, u), \text{ (with \textit{consistent tie-break}) }\label{GPI-greedy-improvement}
\end{align}
where $\hat{Q}_t(x, u)$ is used in place of the unknown $Q^{\bpi'}_t(x, u)$. Supposing the use of on-policy training, we note some similarities between (\ref{GPI-greedy-improvement}) and the local-sweep argmax (\ref{action-specs-case-II}) in that for any fixed $t,x$, the values of $\hat{Q}_t(x,u)$ can only be accurate on the actions visited in the set of simulated trajectories which are analogous to $\mathcal{N}(\pi_t(x), \lambda)$. We further note that the \textit{consistent tie-break} rule is imposed explicitly in (\ref{GPI-greedy-improvement}). We summarize the discussion into SPERL Q-learning algorithm above; see Algorithm \ref{alg: SPERL Q-learning} in Appendix \ref{app:general}.

\begin{remark}[Sampling for $\tau, m, y$.]
	To make our approach more scalable, in Algorithm \ref{alg: SPERL Q-learning} in Appendix \ref{app:general}, we identify which $\tau, m, y$ are relevant to the prediction of $\hat{Q}_t(x, u)$ for a \textit{fixed} $t, x, u$. For instance, consider the parameters $\tau, y$ in $\hat{f}_t(x, u; \tau, y)$. Referring to (\ref{Q-TD-target}), we want our estimated $\hat{f}_t(x, u; \tau, y)$ to be accurate at $\tau = t, y = x$. By TIC-adjusted TD-based PolEva-specs for $\hat{f}_t$ prediction, we then need accurate estimates of $\hat{f}_{k}(X_{k}, U_{k}; \tau = t, y = x)$ for $k \geq t+1$. By inverting this observation fixing the $k$ instead, we can then derive the importance region $\tau \leq k - 1$ and correspondingly $y \in \cup_{\tau \leq k - 1} \mathcal{X}_{\tau}$. \label{remark: tabular-input-space-extension}
\end{remark}


\subsection{Q-learning with Function Approximation} \label{sec: q-learning w/ FA}
This subsection focuses on addressing the drawback of tabular representations that are usually limited to small, discrete state-action spaces by adapting the use of function approximators. Here, we adapt the steps used by \cite{Sutton2018} in extending the standard (infinite-horizon) Q-learning (see \cite{Watkins1992}) to handle infinite-dimensional state-action spaces. Consider $\w$-parameterized approximators $Q^{\w}_t, f^{\w}_t, r^{\w}_t, g^{\w}_t$
and set each agent $t$'s prediction objective analogous to Bellman-error minimization i.e. minimizing $J(\varphi_t^{\w}) \doteq \Vert \varphi^{\bpi'}_t(\cdot) - \hat{\varphi}_t(\cdot; \text{w}(t; \varphi)) \Vert_{\mathcal{D}_t^{\varphi}}$ for $\varphi \in \{Q, f, r, g\}$, over the parameter space $\text{W}^{\varphi} \subset \mathbb{R}^{d^{\varphi}}$ where $\text{w}(t; \varphi)$ takes values on. The weighted-norm $\Vert \, \cdot \Vert_{\mathcal{D}^{\varphi}_t}$ is defined on the input space of each $\varphi$ such that $\mathcal{D}^Q_t =\mathcal{D}^g_t \doteq \mathcal{X}_t \times \mathcal{U}_t, \mathcal{D}^f_t \doteq \mathcal{X}_t \times \mathcal{U}_t \times {}{}_t\mathcal{T} \times \mathcal{Y}_t,$ and $\mathcal{D}^r_t \doteq \mathcal{X}_t \times \mathcal{U}_t \times {}{}_t\mathcal{T} \times \mathcal{M}_t \times \mathcal{Y}_t$ where with a little abuse of notation, $\mathcal{X}_t, \mathcal{U}_t, {}{}_t\mathcal{T}, \mathcal{M}_t, \mathcal{Y}_t$ now represents arbitrary density functions defined on the full spaces $\mathcal{X}, \mathcal{U}, \mathcal{T}, \mathcal{T}, \mathcal{X}$ as a measure of approximation quality.

Intuitively, due to $\text{dim}(\text{w}(t; \varphi))$ being much smaller than the dimension of the $\varphi$'s full input space, we want our approximation to be accurate at some important regions at the sacrifice of irrelevant regions. Since $J(Q_t^{\w})$ is an analog of what we have in standard RL, we focus instead on the \textit{adjustment functions}' prediction objective, specifically in dealing with the $\tau, m, y$ in $\hat{f}$ and $\hat{r}$. As in Remark \ref{remark: tabular-input-space-extension}, we can first attempt to set $\mathcal{Y}_t \doteq \rho(\cup_{0 \leq \tau \leq t}\mathcal{X}_{\tau})$, ${}{}_t\mathcal{T} \doteq \rho(\{0, \dotso, t\})$, and $\mathcal{M}_t \doteq \rho(\{t, \dotso, T-1\})$ for some density function $\rho(\cdot)$ that measures the relative importance of any points/regions in the full input space in approximating $\hat{\varphi}_t$ or solving $\text{w}^*(t; \varphi)$ for $\varphi \in \{f, r\}$. However, such aggregation may seem unnatural in some cases; for instance, setting $\rho(\cdot)$ for ${}{}_t\mathcal{T}$ and $\mathcal{M}_t$ to be uniform is a natural choice under such aggregation but that is essentially saying that each element $\tau$ or $m$ contributes uniformly to $\text{w}^*(t; f)$ or $\text{w}^*(t; r)$. 

To address this issue, we introduce weight tables $\text{w}(t, \tau; f)$ and $\text{w}(t, \tau, m; r)$ and modify our approximators such that $\hat{f}_t(x,u,y; \text{w}(t, \tau; f)) \approx f^{\bpi'}_t(x,u,\tau,y)$ and $\hat{r}_t(x,u,y; \text{w}(t, \tau, m; r)) \approx r^{\bpi'}_t(x,u,\tau, m, y)$. According to this setup, our TD-based prediction objectives are as follows:
\begin{align*}
	J(f_t^{\w}) &\doteq \Vert \mathbb{E}[\xi^{f}_t(\cdot, \cdot, \tau, \cdot; \bpi')] - \hat{f}(\cdot; \text{w}(t, \tau; f)) \Vert_{\mathcal{D}^{f}_{t, \tau}} \\
	J(r_t^{\w}) &\doteq \Vert \mathbb{E}[\xi^{r}_t(\cdot, \cdot, \tau, m, \cdot; \bpi')] - \hat{r}(\cdot; \text{w}(t, \tau, m; r)) \Vert_{\mathcal{D}^{r}_{t, \tau}} \\
	J(g_t^{\w}) &\doteq \Vert \mathbb{E}[\xi^{g}_t(\cdot, \cdot; \bpi')] - \hat{g}(\cdot; \text{w}(t; g)) \Vert_{\mathcal{D}^{g}_{t, \tau}} \\
	J(Q_t^{\w}) &\doteq \Vert \mathbb{E}[\xi^{Q}_t(\cdot, \cdot; \bpi')] - \hat{Q}(\cdot; \text{w}(t; Q)) \Vert_{\mathcal{D}^{Q}_{t, \tau}}
\end{align*}
where $\mathcal{D}^Q_{t, \tau} = \mathcal{D}^g_{t, \tau} \doteq \mathcal{X}_t \times \mathcal{U}_t$ and $\mathcal{D}^f_{t, \tau} = \mathcal{D}^r_{t, \tau} \doteq \mathcal{X}_t \times \mathcal{U}_t \times \mathcal{X}_{\tau}$.

The prediction objectives abovecan then be solved using any least-squares solver. Once we have our approximate action-value function $Q^{\w}$, we can then apply BPI's PolImp rules. In the case of discrete action spaces, these rules can be specified similarly as in (\ref{GPI-greedy-improvement}). With continuous action spaces, our choice of approximator needs to be restricted to ensure feasible computation of local-optima. This is possible, e.g., when model-based approximators are available or when domain knowledge allows the identification of twice-differentiable linear features that are amenable to direct argmax solving. These restrictions are however undesirable as they restrict the addressable class of problems. Therefore, in the next subsection, we present gradient-based methods that are applicable to broader problem settings. 


\subsection{Gradient-based Methods: Deterministic Policy Gradient and Actor-Critic} \label{sec: gradient-based methods}
Gradient-based methods are common tools in standard RL to deal with continuous action spaces, which aim to train a parameterized policy separate from the action-value estimates. Here, we will derive a SPERL version of deterministic\footnote{The choice to present deterministic instead of a stochastic version is made to avoid much deviation from the control-theoretic definition of SPE policy in Section \ref{sec: prelim}.} policy gradient along the line of \cite{Silver2014}. We consider a finite-horizon $\boldsymbol\theta$-parameterized policy $\bpi^{\boldsymbol\theta}$ where we assume separate policy representation $\hat{\pi}_t(x ; \theta(t))$ for each agent $t$. Such a separation is consistent with $\bpi'$-based PolEva-specs in BPI, where each agent $t$ is only allowed to vary its policy $\pi_t$ while assuming fixed \textit{future players'} policies at ${}{}_{t+1}\bpi'$. For each agent $t$, we incorporate BPI's PolImp by applying simple chain rules to $\nabla_{\theta(t)} Q^{{}{}_{t+1}\bpi'}_t(x, \pi_t(x; \theta(t)))$ and obtain the corresponding deterministic gradient-ascent rule, i.e.
\begin{align}
	\theta^{l+1}(t) &=  \theta^{l}(t) + \alpha \nabla_{\theta(t)} Q^{\bpi^{l+1}}_t(x_t, \hat{\pi}_t(x_t; \theta^{l}(t))) \nonumber \\
	&= \theta^l(t) + \alpha \nabla_{\theta}\hat{\pi}_t(x_t; \theta)\vert_{\theta = \theta^l(t)} \nabla_u Q^{\bpi^{l+1}}_t(x_t,u)\vert_{u = \hat{\pi}_t(x_t; \theta^l(t))}. \label{action-value-grad-ascent-4}
\end{align}
We note the similarities of (\ref{action-value-grad-ascent-4}) to local-sweep argmax rule (\ref{action-specs-case-II}), except for here, optimization is done over $\Theta_t$ instead of $\mathcal{U}_t$ and $\lambda \downarrow 0$. We can also observe that while consistent tie-break rule does not explicitly appear anywhere in (\ref{action-value-grad-ascent-4}), it is implicitly imposed by letting $\lambda \downarrow 0$. For the gradient-ascent rule (\ref{action-value-grad-ascent-4}) to be implementable in practice, the \textit{true} action-value gradient $\nabla_u Q^{{}{}_{t+1}\bpi'}_t(x, u)$, given continuation policy ${}{}_{t+1}\bpi' \doteq \{\hat{\pi}_t(\cdot; \theta'(t)): \forall t \geq t+1\}$, must be approximated. We follow \cite{Silver2014} to instead approximate $Q^{{}{}_{t+1}\bpi'}_t(x, u)$ to which the results from Section \ref{sec: q-learning w/ FA} can be applied and SPERL Deterministic Actor-Critic algorithm can be obtained; see Algorithms \ref{alg: SPERL Deterministic Actor-Critic}--\ref{alg: update-Q} in Appendix \ref{app:general}, where we also discuss about the choice of critic approximator, the critic parameter update, and the use of replay buffer.


\subsection{Chapter Summary}
In this section, we have adapted standard RL simulation methods into BPI, addressing the main drawback of the version presented in Section \ref{sec: SPERL-foundation}. Two SPERL training algorithms were derived for two different model assumptions, discrete and continuous state-action spaces. The adaptation of TIC-adjusted TD-based methods to evaluate policy and some training procedures were discussed. We emphasize that the key to adapting BPI's rules is to realize the $\bpi'$-based PolEva-specs. In all three subsections, we have demonstrated that once we have a prediction framework, such rule can be integrated seamlessly into all methods we consider by simply imposing a \textit{backward} policy update direction; see for instance, Algorithm \ref{alg: SPERL Q-learning} in Appendix \ref{app:general}. While a thorough investigation on the training algorithms is not the focus of this paper, we exemplify our insights into the training under the SPERL framework with a financial example in the next section.
	
	
	\section{An Illustrative Example: Dynamic Mean-Variance Portfolio Selection} \label{sec: illustrative-example}
	This section focuses on illustrating an end-to-end derivation of training algorithm under the SPERL framework with an application of dynamic mean-variance (MV) portfolio selection. 


\subsection{Problem Formulation}
We consider a portfolio management problem with a fixed investment horizon $T_{\text{inv}} < \infty$ that can be discretized into $T > 0$ decision periods in $\mathcal{T} \doteq \{0, 1, \dotso, T-1\}$. We denote by $\Delta t$ the timestep or the length of each period such that $T_{\text{inv}} = T \Delta t$. For simplicity, we assume a market environment consisting of one risky and one riskless asset. Given a standard one-dimensional Brownian motion $\{W_{t}: 0 \leq t \leq T\}$, our risky asset price follows 
\begin{align} \label{eq:stockpricemodel}
	S_{t+\Delta t} - S_t = S_t (\mu \Delta t + \sigma \sqrt{\Delta W_t}), \quad \forall t \in \mathcal{T}
\end{align}
with $S_0 = s_0 > 0$, $\mu \in \mathbb{R}$, and $\sigma > 0$ denoting the initial price at $t = 0$, annualized mean return, and annualized stock volatility, respectively. The riskless asset has a constant annualized interest rate $r_{\text{ann.}}>0$.

An agent's state $X^u_t \in \mathbb{R}$ defines her wealth at time $t$ and agent's action $u_t \in \mathbb{R}$ signifies how much wealth she puts into the risky asset with the remaining wealth $X^u_t-u_t$ being invested into riskless asset. Given the above market environment model, the wealth process can then be described by the following stationary linear dynamics
\begin{align}
	X^{u}_{t+1} = (1 + r) X^u_t + u_t (Y_{t+1} - r), \quad \forall t \in \mathcal{T} \label{dyn: MV-const-risk}
\end{align}
with normalized wealth at time $0$, i.e. $X_0 = 1$, period rate $r = r_{\text{ann.}}\Delta t$, and $\{Y_t\}_{t \in \mathcal{T}}$ a sequence of i.i.d random variables with the following attributes
\begin{align}
	\mathbb{E}[Y_t] = \mu \Delta t, \quad \hbox{Var}(Y_t) = \sigma^2\Delta t, \quad \forall t \in \mathcal{T}. \label{unknowns: MV-const-risk}
\end{align}

The agent's objective is to select a dynamic portfolio $\bpi=\{\pi_0, \pi_1, \ldots, \pi_{T-1}\}=\{u_0, u_1, \ldots, u_{T-1}\}$ that strikes the best balance between the expected value (reward) of the terminal wealth $\mathcal{E}[X_T]$ and the variance (risk) of the terminal wealth $\hbox{Var}(X_T)$. Hence, the performance criterion at time $t\in \mathcal{T}$ takes the form
\begin{equation} \label{obj: MV-const-risk}
	V^{\bpi}_t(x)\doteq \mathbb{E}_{t,x}[X_T^{\bpi}]-\frac{\gamma}{2}\hbox{Var}_{t,x}(X_T^{\bpi}) = \mathbb{E}_{t, x} \left[X_T^{\bpi} - \frac{\gamma}{2}\left(X_T^{\bpi}\right)^2\right] + \frac{\gamma}{2}\left(\mathbb{E}_{t, x}[X_T^{\bpi}]\right)^2,
\end{equation}
which by the general form in (\ref{TIC-objective}), we have
\begin{align}
	\mathcal{G}_{\tau}(y, x) = \frac{\gamma}{2}x^2, \quad \mathcal{F}_{\tau}(y, x) = x - \frac{\gamma}{2}x^2 \label{TIC-sources: MV-const-risk}
\end{align}
implying the existence of $\mathcal{G}$-type of TIC. Since we have continuous state-action spaces, we will apply the SPERL Deterministic Actor-Critic algorithm proposed in Section \ref{sec: gradient-based methods} to train an SPE policy that solves (\ref{obj: MV-const-risk}). We remark here that in this example, our only unknowns are the transition model parameters in (\ref{unknowns: MV-const-risk}), which also defines our risky asset model \eqref{eq:stockpricemodel}.


\subsection{Model-based Function Approximators}
In the next subsection, we describe both policy and critic approximators that we use to train our algorithm.

\subsubsection{Critic Approximators}
To address the estimation of $Q^{\bpi'}_t(x, u)$ in the gradient-ascent rule (\ref{action-value-grad-ascent-4}), we derive model-based linear representations for both $\hat{Q}_t(x, u)$ and $\hat{g}_t(x, u)$. Referring to the boundary conditions at $t = T-1$, \footnotesize
\begin{align}
	Q^{\bpi}_{T-1}(x, u)
	&\doteq \mathbb{E}_{T-1,x}[X^u_T] - \frac{\gamma}{2} \hbox{Var}_{T-1,x}[X^u_T] \nonumber \\
	&= \mathbb{E}_{T-1,x}\left[ (1+r)x + (Y_T - r) u\right] - \frac{\gamma}{2} \hbox{Var}_{T-1,x}\left[ (1+r)x + (Y_T - r) u \right] \tag*{(by (\ref{dyn: MV-const-risk}))} \\
	&= (1+r)x + (\mathbb{E}[Y_T] - r) u - \frac{\gamma}{2} \hbox{Var}[Y_T] u^2 \label{q-ansatz: T-1}\\
	g^{\bpi}_{T-1}(x, u)
	&\doteq \mathbb{E}_{T-1,x}[X^u_T]=\mathbb{E}_{T-1,x}\left[ (1+r)x + (Y_T - r) u\right] \tag*{(by (\ref{dyn: MV-const-risk}))} \\
	&= (1+r)x + (\mathbb{E}[Y_T] - r) u \label{g-ansatz: T-1}
\end{align} \normalsize
and noting the linear-quadratic setting of this example, for any arbitrary policy $\bpi$, $Q^{\bpi}$ and $g^{\bpi}$ will have the following form
\begin{align}
	Q^{\bpi}_t(x, u) &= A_t u^2 + B_t u + C_t x + D_t \label{Q-ansatz: MV-const-risk}\\
	g^{\bpi}_t(x, u) &= a_t u + b_t x + c_t \label{g-ansatz: MV-const-risk}
\end{align}
We can thus set our critic approximators according to (\ref{Q-ansatz: MV-const-risk})-(\ref{g-ansatz: MV-const-risk}) i.e.
\begin{align}
	\hat{Q}_t(x, u; \text{w}(t)) &\doteq \text{w}_3(t; Q) u^2 + \text{w}_2(t; Q) u + \text{w}_{1}(t; Q) x + \text{w}_{0}(t; Q) \label{Q-lin-FA: MV-const-risk} \\
	\hat{g}_t(x, u; \text{w}(t)) &\doteq \text{w}_2(t; g) u + \text{w}_{1}(t; g) x + \text{w}_{0}(t; g) \label{g-lin-FA: MV-const-risk}
\end{align}

\subsubsection{Policy Approximators}
The obtained forms in (\ref{Q-ansatz: MV-const-risk})-(\ref{g-ansatz: MV-const-risk}) can further give us clues to set up model-based policy approximators; in particular, we observe that the action-value gradient $\nabla_u Q^{\bpi}_t(x, u)$ is independent of the state $x$, i.e. 
\begin{align}
	\nabla_u Q^{\bpi'}_t(x_t, u) = \nabla_u Q^{\bpi'}_t(\tilde{x}_t, u), \forall x_t, \tilde{x}_t \in \mathcal{X}_t, x_t \neq \tilde{x}_t, \label{eq2: state-invariant-policy}
\end{align}
which means that no matter what state agent $t$ is in, any state $x_t$ will give the same signal about what direction of improvement (towards the optimal action) to take. Such indifference to $x_t$ can then be exploited to set state-invariant policy approximators, i.e. $\forall t \in \mathcal{T}$,
\begin{align}
	\hat{\pi}_t(x) \doteq \theta(t),\quad \forall x \in \mathcal{X}_t \label{eq1: state-invariant-policy}
\end{align}


\subsection{Training Procedures}
In this subsection, we specify in detail how we train the approximators above as outlined in Algorithm \ref{alg: SPERL-MV} below.

\small
\begin{algorithm}
	\SetAlgoLined
	\SetKwInOut{Input}{Input}
	\SetKwInOut{Output}{Output}
	\Input{MarketEnv($\mu, \sigma, r, x_0, \Delta t, T, \gamma$), Hyperparameters($L, B, \lambda, \kappa, \alpha_{\text{w}}, \alpha_{\theta}, {\color{red}\dotso}$)}
	\Output{Approximate SPE-policy $\bpi^{\boldsymbol\theta}$}
	Initialize critic parameters $\w$, actor parameters $\boldsymbol\theta$, replay memory $\mathcal{D} \leftarrow \emptyset$\;
	
	\For{$l \gets 0$ \KwTo $L$}{
		\For{$b \gets 1$ \KwTo $B$}{ \label{alg:mv-startgentracj}
			Set $X_0 \gets 1$\;
			Generate trajectory $X_0, U_0, X_1, U_1, \dotso, X_{T-1}, U_{T-1}, X_{T} \sim \bpi^{(l)}_{\lambda\text{-unif}}$\;
			\For{$t \gets 0$ \KwTo $T-1$}{
				\For{$\tau \gets t$ \KwTo $0$}{
					$\mathcal{D} \gets$ $\mathcal{D}\ \cup \left\{\left(t, \tau, X_t, U_t, X_{\tau}, X_{t+1}\right)\right\}$
				}
			}
		}	\label{alg:mv-endgentracj}
		\For{$t \gets T-1$ \KwTo $0$}{
			\eIf{$t = T-1$}{
				Initialize $\Xi^g_{\cdot, \cdot}, \Xi^Q_{\cdot, \cdot} \gets \emptyset$\;
				Sample mini-batch $\tilde{\mathcal{D}}_{\cdot, \cdot} \sim$ Replay$\left(\cdot, \cdot, \mathcal{D}, \kappa \right)$ \; \label{alg:mv-minibatch}
				\For{$(t, \tau, x, u, y, X^{x, u}) \in \tilde{\mathcal{D}}_{\cdot, \cdot}$}{
					Transform $(x, u, X^{x,u})$ to $(1, u, X^{1, u})$ \; \label{alg:mv-starttransform}
					$\xi_t^g \gets X^{1,u}$\;
					Set $\Xi^g_{\cdot, \cdot} \gets \Xi^g_{\cdot, \cdot} \cup (\cdot, 1, u, \xi^g_t)$\; \label{alg:mv-endtransform}
				}
				Solve $\text{w}^* \gets \argmin_{\text{w}} \sum_{\Xi^g_{\cdot, \cdot}} \left(\xi_t^g - \hat{g}_t(x, u; \text{w})\right)^2$ (with ALS)\; \label{alg:mv-argming}
				Update $\text{w}_{2}(t; g) \gets \text{w}_{2}(t; g) + \alpha_{\text{w}} \left( \text{w}^*_2 - \text{w}_2(t; g)\right)$ (with EMA) \; \label{alg:mv-updatew2g}
				Update $\text{w}_1(t; g) \gets (1+r)$ \; \label{alg:mv-updatew1g}
				
				\For{$(t, \tau, x, u, y, X^{x, u}) \in \tilde{\mathcal{D}}_{\cdot, \cdot}$}{
					Transform $(x, u, X^{x,u})$ to $(1, u, X^{1, u})$ \;
					$\xi_t^Q \gets X^{1,u} - \frac{\gamma}{2} (X^{1,u})^2 + \frac{\gamma}{2} \hat{g}^2_t(1, u; \text{w}(t; g))$\;
					Set $\Xi^Q_{\cdot, \cdot} \gets \Xi^Q_{\cdot, \cdot} \cup (\cdot, 1, u, \xi^Q_t)$\;
				}
				
				Solve $\text{w}^* \gets \argmin_{\text{w}} \sum_{\Xi^Q_{\cdot, \cdot}} \left(\xi_t^Q - \hat{Q}_t(x, u; \text{w})\right)^2$ (with ALS)\; \label{alg:mv-argminQ}
				
				Update $\text{w}_{2,3}(t; Q) \gets \text{w}_{2,3}(t; Q) + \alpha_{\text{w}} \left( \text{w}^*_{2,3} - \text{w}_{2,3}(t; Q)\right)$  (with EMA)\; \label{alg:mv-updatew2Q}
				
				Update $\text{w}_1(t; Q) \gets (1+r)$\; \label{alg:mv-updatew1Q}
				
			}{
				Update $\text{w}_{1,2,3}(t; g), \text{w}_{1,2,3}(t; Q)$ following (\ref{eq1-param-recursion: MV-const-risk})-(\ref{eq2-param-recursion: MV-const-risk}) \; \label{alg:mv-updatewg}
			}
			$\theta(t) \gets \theta(t) + \alpha_{\theta}  \nabla_u \hat{Q}_t(1, u; \text{w}(t; Q))|_{u = \theta(t)} $\;
		}
	}
	\caption{SPERL Dynamic MV Portfolio Selection}
	\label{alg: SPERL-MV}
\end{algorithm}
\normalsize

\subsubsection{Trajectory Generation}
We refer to lines \ref{alg:mv-startgentracj}--\ref{alg:mv-endgentracj} in Algorithm \ref{alg: SPERL-MV} as experience collection step. Experiences here are collected at every iteration $l$ in the form of wealth trajectories of length $T$ with initial state $X_0$ normalized to $1$. At each time $t$ and given the corresponding wealth $X_t$, we sample the next state $X_{t+1}$ from a MarketEnv simulator under a uniform exploratory policy $\pi^{(l)}_{t, \lambda\text{-unif}}$ that samples action $U_t \sim \text{Unif}(\pi_t(X_t; \theta^{(l)}(t)) - \lambda, \pi_t(X_t; \theta^{(l)}(t)) + \lambda)$. We note that such exploration schedule is possible since our training is offline (i.e. the environment that we interact with is a simulator and \textit{not} the real market). All $B$ generated trajectories are then stored into the replay buffer $\mathcal{D}$ in a tupled form as specified in Section \ref{sec: gradient-based methods}.

\subsubsection{Critic Training} \label{sec: critic-training}
At $t = T-1$, we follow the steps in Algorithm \ref{alg: SPERL Deterministic Actor-Critic} in Appendix \ref{app:general} to solve the prediction problems
\begin{align*}
	\hat{Q}_{T-1}(x, u; \text{w}(T-1))
	&\approx Q^{\bpi'}_{T-1}(x, u) \doteq \mathbb{E}_{T-1,x}[X^u_T] - \frac{\gamma}{2} \hbox{Var}_{T-1,x}[X^u_T],\\
	\hat{g}_{T-1}(x, u; \text{w}(T-1))
	&\approx g^{\bpi'}_{T-1}(x, u) \doteq \mathbb{E}_{T-1,x}[X^u_T].
\end{align*}
Note that since we only have two unknown model parameters by (\ref{unknowns: MV-const-risk}), some critic parameters can be fixed by (\ref{q-ansatz: T-1})-(\ref{g-ansatz: T-1}) at
\begin{align}
	\text{w}_0(T-1; Q) =  \text{w}_0(T-1; g) &= 0, \cr
	\text{w}_1(T-1; Q) =  \text{w}_1(T-1; g) &= (1+r) \label{eq2-knowns: MV-const-risk}
\end{align}
to let the critic training focus on the remaining unknown parameters,
\begin{align}
	\text{w}_2(T-1; g) &\approx (\mathbb{E}[Y_T] - r) = \mu \Delta t - r, \label{eq1: critic-training} \\
	\text{w}_2(T-1; Q) &\approx (\mathbb{E}[Y_T] - r) = \mu \Delta t - r, \label{eq2: critic-training}\\
	\text{w}_3(T-1; Q) &\approx  - \frac{\gamma}{2} \hbox{Var}[Y_T] = -\frac{\gamma}{2} \sigma^2\Delta t. \label{eq3: critic-training}
\end{align}
\paragraph{Parametric Recursions} 
Moreover, by noting that the approximation scheme for (\ref{eq1: critic-training})-(\ref{eq3: critic-training}) has learnt all our unknowns, we can exploit the same model knowledge and assumptions to perform parametric recursions. This technique reduces the problem of estimating $\{Q^{\bpi'}_t: t \in \mathcal{T}\}$ to only $Q^{\bpi'}_{T-1}$ by converting the remaining $T-1$ estimations to simple computation problems.

Let us first recover some statistics about our state dynamics from the estimated critics in (\ref{eq1: critic-training})-(\ref{eq3: critic-training}) that is to be used in the subsequent parametric recursion derivation,
\begin{align}
	\hbox{Var}_{T-1,x}[X^u_T] &\approx \frac{2}{\gamma}( \hat{g}_{T-1}(x, u; \text{w}(T-1)) - \hat{Q}_{T-1}(x, u; \text{w}(T-1))), \label{var-infer: MV-const-risk}\\
	\mathbb{E}_{T-1,x}[X^u_T] &\approx \hat{g}_{T-1}(x, u; \text{w}(T-1)). \label{mu-infer: MV-const-risk}
\end{align}
Next, we rewrite the Q-recursion from Proposition \ref{prop: policy-dependent-Q-recursion}, after keeping only the $g$-term by MV TIC-source specifications with $Q^{\bpi'}$ replaced by their approximators $\hat{Q}$ as follows \footnotesize
\begin{alignat}{3}
	\hat{Q}_t(x, u; \text{w}'(t)) &= &&\; \mathbb{E}_{t,x}\left[\hat{Q}_{t+1}(X^u_{t+1}, \theta'(t+1); \text{w}'(t+1))\right] - \frac{\gamma}{2} \hbox{Var}_{t,x}\left[\hat{g}_{t+1}(X^u_{t+1}, \theta'(t+1); \text{w}'(t+1))\right] \nonumber \\
	&= &&\; \mathbb{E}_{t,x}\left[\text{w}'_3(t+1; Q) \theta'(t+1)^2 + \text{w}'_2(t+1;Q) \theta(t+1) + \text{w}'_1(t+1; Q) X^u_{t+1} + \text{w}'_0(t+1; Q)\right] \nonumber \\
	& &&- \frac{\gamma}{2} \hbox{Var}_{t,x}\left[ \text{w}'_2(t+1; g) \theta'(t+1) + \text{w}'_1(t+1; g) X^u_{t+1} + \text{w}'_0(t+1; g) \right] \tag*{(by (\ref{Q-ansatz: MV-const-risk})-(\ref{g-ansatz: MV-const-risk}))} \\
	&= && \; \text{w}'_3(t+1; Q) \theta'(t+1)^2 + \text{w}'_2(t+1;Q) \theta'(t+1) + \text{w}'_1(t+1; Q) \mathbb{E}_{t,x} \left[X^u_{t+1}\right] + \text{w}'_0(t+1; Q) \nonumber \\
	& &&- \frac{\gamma}{2} (\text{w}'_1(t+1; g))^2 \hbox{Var}_{t,x}\left[ X^u_{t+1} \right] \tag*{(by deterministic coefficients)} \\
	&= && \; \text{w}'_3(t+1; Q) \theta'(t+1)^2 + \text{w}'_2(t+1;Q) \theta'(t+1) + \text{w}'_1(t+1; Q) \mathbb{E}_{T-1,x} \left[X^u_{T}\right] + \text{w}'_0(t+1; Q) \nonumber \\
	& &&- \frac{\gamma}{2} \text{w}'_1(t+1; g)^2 \hbox{Var}_{T-1,x}\left[ X^u_T \right] \tag*{(by stationary transitions (\ref{dyn: MV-const-risk}))} \\
	&= && \; \left( \text{w}'_1(t+1; Q) - \text{w}'_1(t+1; g)^2 \right) \hat{g}_{T-1}(x, u; \text{w}'(T-1)) + \text{w}'_1(t+1; g)^2 \hat{Q}_{T-1}(x, u; \text{w}'(T-1)) \nonumber \\
	& && + \text{w}'_3(t+1; Q) \theta'(t+1)^2 + \text{w}'_2(t+1;Q) \theta'(t+1) + \text{w}'_0(t+1; Q). \tag*{(by (\ref{var-infer: MV-const-risk})-(\ref{mu-infer: MV-const-risk}))}
\end{alignat}\normalsize
Applying similar steps as the above, we obtain the following \footnotesize
\begin{align}
	\hat{g}_t(x, u; \text{w}'(t)) &= \mathbb{E}_{t,x}\left[ \hat{g}_{t+1}(X^u_{t+1}, \theta'(t+1); \text{w}'(t+1)) \right] \nonumber \\
	&= \mathbb{E}_{t,x}\left[ \text{w}'_2(t+1; g) \theta'(t+1) + \text{w}'_1(t+1;g) X^u_{t+1} + \text{w}'_0(t+1; g) \right] \tag*{(by (\ref{g-ansatz: MV-const-risk}))} \\
	&= \text{w}'_2(t+1; g) \theta'(t+1) + \text{w}'_1(t+1;g)  \mathbb{E}_{t,x}\left[X^u_{t+1}\right] + \text{w}'_0(t+1; g)  \tag*{(by deterministic coefficients)} \\
	&= \text{w}'_2(t+1; g) \theta'(t+1) + \text{w}'_1(t+1;g)  \mathbb{E}_{T-1,x}\left[X^u_T\right] + \text{w}'_0(t+1; g) \tag*{(by stationary transitions (\ref{dyn: MV-const-risk}))} \\
	&= \text{w}'_2(t+1; g) \theta'(t+1) + \text{w}'_0(t+1; g) + \text{w}'_1(t+1;g)  \hat{g}_{T-1}(x, u; \text{w}'(T-1)). \tag*{(by (\ref{mu-infer: MV-const-risk}))}
\end{align} \normalsize
By matching coefficients on the LHS and RHS in the last line in each parametric recursion derivation, we obtain a formula\footnote{The parameters $\text{w}'_0(t; g), \text{w}'_0(t; Q)$ are irrelevant to the value of $\nabla_u \hat{Q}_t(x, u)$ and have thus been omitted.} to replace lines \ref{alg:dac-startupdatew}--\ref{alg:dac-endupdatew} in Algorithm \ref{alg: SPERL Deterministic Actor-Critic} for $t < T-1$, 
\begin{alignat}{3}
	\text{w}_1(t; Q) &\gets &&\; ( \text{w}_1(t+1; Q) - \text{w}^2_1(t+1; g) ) \text{w}_1(T-1; g) + \text{w}^2_1(t+1; g) \text{w}_1(T-1; Q), \label{eq1-param-recursion: MV-const-risk}\\
	\text{w}_2(t; Q) &\gets &&\; \text{w}_1(t+1; Q) \text{w}_2(T-1; g) + \text{w}^2_1(t+1; g) \left( \text{w}_2(T-1; Q) - \text{w}_2(T-1; g) \right),\\
	\text{w}_3(t; Q) &\gets &&\; \text{w}^2_1(t+1; g) \text{w}_3(T-1; Q),\\
	\text{w}_1(t; g) &\gets &&\; \text{w}_1(t+1;g) \text{w}_1(T-1; g),\\
	\text{w}_2(t; g) &\gets &&\; \text{w}_1(t+1;g) \text{w}_2(T-1; g). \label{eq2-param-recursion: MV-const-risk}
\end{alignat}
To see the use of the formulas above, please refer to line \ref{alg:mv-updatewg} in Algorithm \ref{alg: SPERL-MV}.

\subsubsection{Actor Training} \label{sec: actor-training}
To train our policy parameters $\{\theta(t): t \in \mathcal{T}\}$, we will adopt the gradient-ascent rule in (\ref{action-value-grad-ascent-4}). By substituting the state-invariant approximator in (\ref{eq1: state-invariant-policy}) and replacing the true $Q^{\bpi'}_t(x, u)$ with its current estimate $\hat{Q}_t(x, u)$, we obtain
\begin{align}
	\theta(t) \gets \theta(t) + \alpha_{\theta} \sum_{x \in \mathcal{\tilde{D}}_{t, \cdot}} \nabla_u \hat{Q}_t(x, u; \text{w}'(t)) \vert_{u = \theta(t)}, \quad \forall t \in \mathcal{T}. \label{GA-rule1: MV-const-risk}
\end{align}

\paragraph{State-invariant Policy} By exploiting the state-invariance property in (\ref{eq2: state-invariant-policy}), we can simplify the rule (\ref{GA-rule1: MV-const-risk}) while improving the accuracy of the update direction.

Let us revisit the rule before substitution of critic estimates,
\begin{align}
	\theta(t) \gets \theta(t) + \alpha_{\theta} \sum_{x \in \mathcal{\tilde{D}}_{t, \cdot}} \nabla_u Q^{\bpi'}_t(x, u) \vert_{u = \theta(t)}, \quad \forall t \in \mathcal{T}. \label{GA-rule: MV-const-risk}
\end{align}
By the independence of action-value gradient to $x$, we have
$$
\sum_{x \in \mathcal{\tilde{D}}_{t, \cdot}} \nabla_u Q^{\bpi'}_t(x, u) \propto \nabla_u Q^{\bpi'}_t(x, u),
$$
which allows us to arbitrarily choose any $x \in \mathcal{\tilde{D}}_{t, \cdot}$ and substitute the latter into (\ref{GA-rule: MV-const-risk}). However, such an arbitrary substitution may no longer apply when $\hat{Q}_t(x, u)$ is used in place of $Q^{\bpi'}_t(x, u)$ as the deterministic actor-critic prescribes due to the possible discrepancy in the accuracy of $\hat{Q}_t(x, u)$ at different $x \in \mathcal{\tilde{D}}_{t, \cdot}$, making the choice of $x$ matters. We deal with this issue by focusing our critic estimation to one particular $x$ that we simply set to $1$.

In what follows, we discuss how focusing critic approximation to $x = 1$ necessitates modifications to Algorithms \ref{alg: SPERL Deterministic Actor-Critic}-\ref{alg: update-Q} or the methods detailed in Section \ref{sec: critic-training}.
\begin{itemize}
	\item At $t = T-1$, our modification mainly happens inside Algorithms \ref{alg: update-g}-\ref{alg: update-Q} concerning how to avoid ``throwing away" the samples collected with $x \neq 1$ in estimating $\hat{Q}_t(1, u)$. This can be done by transforming each collected experience tuple $(x, u, X^{x,u})$ to $(1, u, X^{1,u})$ according to (\ref{dyn: MV-const-risk}), i.e. $X^{1,u} \doteq X^{x,u} - (1+r) (x - 1)$. We then adjust our TD-targets $\xi_t^g, \xi_t^Q$ by substituting any $x$ with $1$. This discussion is summarized into lines \ref{alg:mv-starttransform}--\ref{alg:mv-endtransform} in Algorithm \ref{alg: SPERL-MV}. Moreover, once we no longer care about $x$, keeping an estimate importance distribution of $\mathcal{X}_{T-1}$ is no longer necessary. This warrants the use of all 3-tuples $(x, u, X^{x, u})$ from any period $t$ in estimating $\hat{Q}_{T-1}(1, u)$ that we indicate by dropping the subscripts $t$ from all mini-batches notation; for instance, compare between line \ref{alg:mv-minibatch} in Algorithm \ref{alg: SPERL-MV} and line \ref{alg:updateQ-minibatch} in Algorithm \ref{alg: update-Q}.
	
	\item Next, still at $t = T-1$, we record some changes in the number of trainable parameters for both $\hat{Q}_{T-1}(1, u)$ and $\hat{g}_{T-1}(1,u)$ as we collapse the coefficients of $x$, i.e. $\text{w}_1(T-1; g)$ and $\text{w}_1(T-1; Q)$ into intercepts; see lines \ref{alg:mv-updatew2g} and \ref{alg:mv-updatew2Q} in Algorithm \ref{alg: SPERL-MV}. We can then disentangle the merged parameters $\text{w}_0(T-1; \cdot)$ and $\text{w}_1(T-1; \cdot)$ by applying (\ref{eq2-knowns: MV-const-risk}) noting that $r$ is a known parameter; see lines \ref{alg:mv-updatew1g} and \ref{alg:mv-updatew1Q} in Algorithm \ref{alg: SPERL-MV}.
	
	\item At $t < T-1$, our trainable parameters stay the same: $\text{w}_1(t; Q), \text{w}_2(t; Q), \text{w}_3(t; Q)$ for $Q$ and $\text{w}_1(t; g), \text{w}_2(t; g)$ for $g$. Since we have recovered all the necessary parameters at $t = T-1$ to perform parametric recursion, no modifications to the derived formula (\ref{eq1-param-recursion: MV-const-risk})-(\ref{eq2-param-recursion: MV-const-risk}) are required; see line \ref{alg:mv-updatewg} in Algorithm \ref{alg: SPERL-MV}. 
\end{itemize}

\subsubsection{Improving Training Stability}
In this subsection, we group the training components in Algorithm \ref{alg: SPERL-MV} that deal particularly with in-training stability issues.

\paragraph{Replay Specifications.} Here, we specify a replay technique to regulate the mini-batch sampling in line \ref{alg:mv-minibatch} of Algorithm \ref{alg: SPERL-MV} that will be used in solving the least-squares problems in lines \ref{alg:mv-argming} and \ref{alg:mv-argminQ}. At each iteration $l$, we separate \textit{current} experiences (referring to the new batch generated by lines \ref{alg:mv-startgentracj}--\ref{alg:mv-endgentracj}) from \textit{past} experiences. We include all \textit{current} experiences into the mini-batch $\tilde{\mathcal{D}}^{(l)}_{\cdot, \cdot}$ and then, sample randomly without replacement from \textit{past} experiences in $\kappa:1$ proportion to the size of the \textit{current} experiences. Hyperparameter involved in this replay technique is the \textit{resample constant} $\kappa$.

\paragraph{Least-squares Solver.} To solve the argmin functions in lines \ref{alg:mv-argming} and \ref{alg:mv-argminQ}, we will use a type of regression that has been modified to account for the special attributes of noise model (\ref{dyn: MV-const-risk}) that breach the assumption of residuals independence to input variables in ordinary least squares (OLS), causing severe instability issues in critic parameter estimation.

For each $\varphi \in \{g, Q\}$, we consider the corresponding OLS regression model for $\varphi_t(x, u; \text{w})$
\begin{align}
	\xi^{\varphi} = \boldsymbol\phi^{\varphi} \cdot \text{w}^{\varphi} + e^{\varphi} \label{OLS-model}
\end{align}
with $\xi^{\varphi}, \boldsymbol\phi^{\varphi},$ and $e^{\varphi}$ representing \textit{target} variable, \textit{input} variables, and residuals, respectively. To illustrate the aforementioned noise attributes, we focus on $\varphi = g$, where $\xi^g (x, u) = \tilde{\mathbb{E}}_{T-1,x}[X^u_T]$. Note that in the above, we have re-defined the \textit{target} variable definitions from the \textit{one-sample} estimator $X^{x,u}$ to \textit{mini-batch} estimator $\tilde{\mathbb{E}}_{T-1,x}[X^u_T]$ to reflect the actual implementation of line \ref{alg:mv-argming} in Algorithm \ref{alg: SPERL-MV}. We then compute the following
\begin{align}
	(e^g)^2(x, u) &= \hbox{Var}[\xi^g(x, u)]=\hbox{Var}[\tilde{\mathbb{E}}_{T-1,x}[X^u_T]]=\hbox{Var}[\tilde{\mathbb{E}}[(1+r)x + u(Y - r)]] \nonumber\\
	&=\hbox{Var}[u\tilde{\mathbb{E}}[(Y - r)]]=u^2 \frac{\sigma^2 \Delta W_t}{N_{x,u}}, \label{model-based-res-end}
\end{align}
where $N_{x,u}$ represents the number of sampled tuples $(x, u, X^{x,u})$ used in estimating $\text{w}^g$. Referring back to (\ref{g-lin-FA: MV-const-risk}), we have $\boldsymbol\phi^g = (x, u, 1)$ and thus, the \textit{homoscedasticity} requirement on the residuals $e^g$ is only met when the number of samples in the mini-batch $N_{x,u} \approx \infty$; this is unrealistic in practice.

To mitigate the aforementioned heteroscedasticity's effect on training stability, we propose an adaptive correction to our OLS regression model, namely \textit{adaptive least squares (ALS)}, by performing the following steps; see \cite{Sterchi2017} for empirical evidence.
\begin{enumerate}
	\item For each $\varphi \in \{g, Q\}$, rewrite the original OLS model in (\ref{OLS-model}) as
	\begin{align}
		\xi^{\varphi}_{\text{ols}} = \boldsymbol\phi^{\varphi}_{\text{ols}} \cdot \text{w}^{\varphi}_t + e^{\varphi}_{\text{ols}} \label{OLS-model-2}
	\end{align}
	and denote by $\hat{\xi}^{\varphi}_{\text{ols}}$ the fitted solution.
	\item Derive model-based features $\boldsymbol\phi^{e}$ for the OLS squared residuals $(e^{\varphi}_{\text{ols}})^2$ as exemplified in (\ref{model-based-res-end}); thus, $\boldsymbol\phi^{e^{g}} = (u^2)$ and $\boldsymbol\phi^{e^{Q}} = (u^2, u^4)$. 
	\item Perform OLS regression on the \textit{target-input} variables $((\xi^{\varphi}_{\text{ols}} - \hat{\xi}^{\varphi}_{\text{ols}})^2, \boldsymbol\phi^{e^{\varphi}})$ without fitting any intercepts and denote by $(\hat{e}^{\varphi}_{\text{ols}})^2$ the fitted residual values. As we may get $0$ or negative fitted values due to some noisy estimates, we proceed by keeping only the \textit{positive} fitted values.
	\item Transform the original \textit{target-input} variable in (\ref{OLS-model-2}) by
	\begin{align*}
		(\xi^{\varphi}_{\text{als}}, \boldsymbol\phi^{\varphi}_{\text{als}}) \gets \left(\frac{\xi^{\varphi}_{\text{ols}}}{\sqrt{(\hat{e}^{\varphi}_{\text{ols}})^2}}, \frac{\boldsymbol\phi^{\varphi}_{\text{ols}}}{\sqrt{(\hat{e}^{\varphi}_{\text{ols}})^2}}\right).
	\end{align*}
	\item Perform OLS regression with the \textit{transformed} \textit{target-input} variables $ (\xi^{\varphi}_{\text{als}}, \boldsymbol\phi^{\varphi}_{\text{als}})$ without fitting any intercepts.
\end{enumerate}

\paragraph{Smoothing Regularization.} Finally, to tame the variance\footnote{This technique of slowing the update of parameters is commonly used in standard RL with function approximation to ensure TD-error remains small across iterations; see \cite{Fujimoto2018} for instance.} of (mini-batch) critic estimation at each iteration $l$, we apply exponential moving average (EMA) by setting the critic learning rate $\alpha^{(l)}_{\text{w}} = 2/(l+1)$; see lines \ref{alg:mv-updatew2g} and \ref{alg:mv-updatew2Q} in Algorithm \ref{alg: SPERL-MV}.


\subsection{Experiments}
In this subsection, we perform simulation study, where we deploy our algorithm in two different types of MarketEnv with annualized mean $\mu \in \{20\%, -20\%\}$, annualized volatility $\sigma = 30\%$, and annualized risk-free rate $r_{\text{ann.}} = 2\%$. We normalize initial wealth $x_0$ to $1$, set the investment horizon $T_{\text{inv.}}$ to $1$ year with timestep $\Delta t = 1/100$, and fix the mean-variance criterion parameter $\gamma = 1.2$. In each MarketEnv, we evaluate our algorithm by its financial performance and learning curves of both critic parameters $\w$ and policy parameter $\boldsymbol\theta$.

\subsubsection{Training Setup}
For both experiments, we set the total training episodes $L = 5000$, trajectory generation size $B = 5$, exploratory policy parameter $\lambda = 1.5$, and resample constant $\kappa = 1$. We note that such setup of $B$ and $\kappa$ then implies a mini-batch size $|\tilde{\mathcal{D}}_{\cdot, \cdot}| = 1000$ after appending \textit{past} experiences and including samples from all time periods $t < T-1$ as specified in Section \ref{sec: actor-training}. We initialize our critic parameters $\w$ near the true analytical parameters and actor parameters $\boldsymbol\theta$ to $0$. We fix the learning rate for actor parameter update $\alpha_{\theta} = 2$ and use EMA learning rate $\alpha^{(l)}_{\text{w}} = 2/(l + 1)$ for our critic parameter update. 

\subsubsection{Results and Discussions}
\paragraph{Financial Performance} For evaluation purpose, at each iteration $l$, a new price trajectory (different from the one used in training our actor-critic parameters) is generated from MarketEnv. A non-randomized policy $\bpi^{(l)}$ is then used to generate a wealth trajectory from which the terminal wealth $X^{(l)}_T$ is recorded. In Figures \ref{fig: finperf-pos} and \ref{fig: finperf-neg}, we plot the learning curves of sample mean and sample standard deviation (stdev) of terminal wealth $X_T$ that are computed by aggregating $X^{(l)}_T$ over 50 non-overlapping episodes. From these two figures, we can observe that our algorithm converges in $\approx 20$ aggregated episodes in both MarketEnv setups. Moreover, the mean and stdev of return at convergence, i.e. $(35\%, 50\%)$ in MarketEnv$(\mu = 20\%)$ and $(45\%, 60\%)$ in MarketEnv$(\mu = -20\%)$, are within a reasonable range of Sharpe ratio.

\begin{figure}[!ht]
	\centering
	\includegraphics[width=15.2cm, trim={1cm 0 1cm 1cm}, clip]{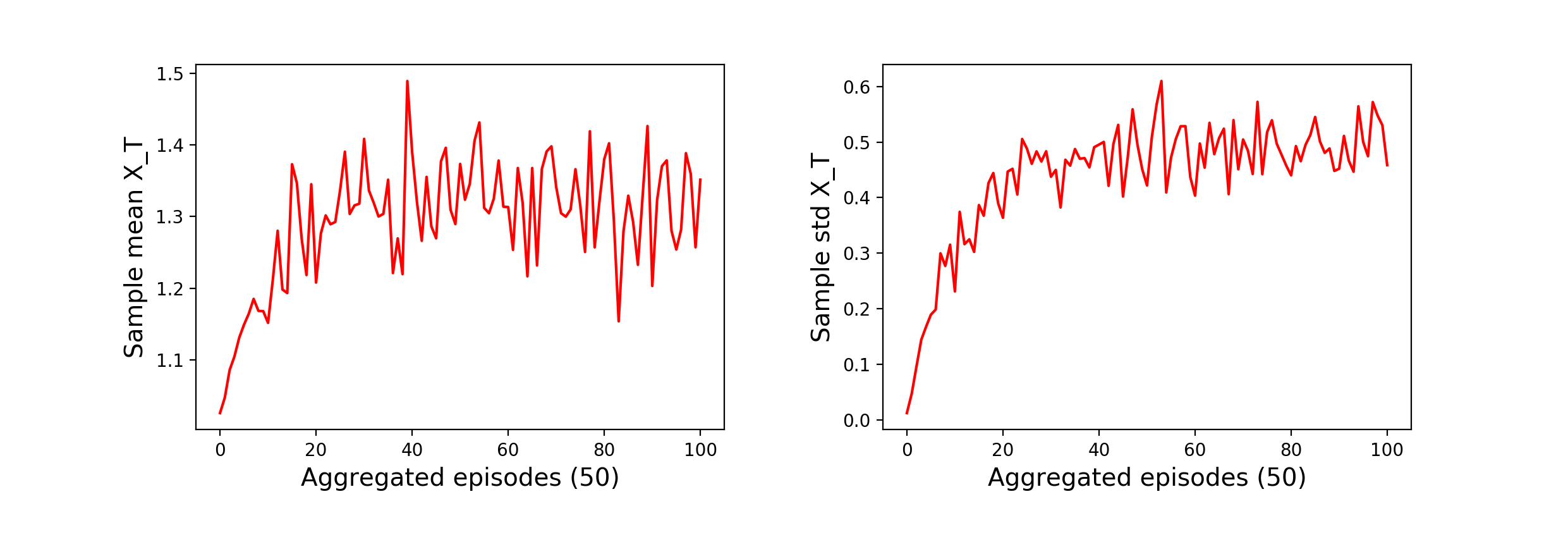}
	\caption{Sample mean and stdev of terminal wealth $(\mu = 20\%)$}
	\label{fig: finperf-pos}
\end{figure}
\begin{figure}[!ht]
	\centering
	\includegraphics[width=15.2cm, trim={1cm 0 1cm 1cm}, clip]{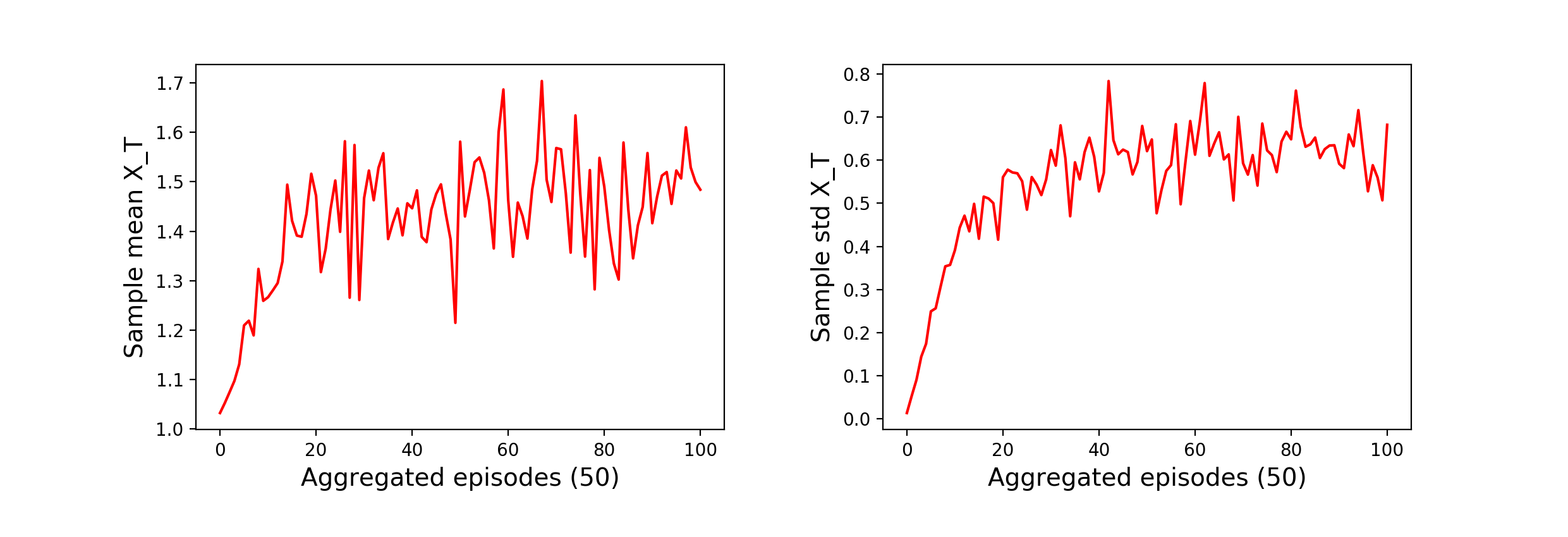}
	\caption{Sample mean and stdev of terminal wealth $(\mu = -20\%)$}
	\label{fig: finperf-neg}
\end{figure}

\paragraph{Parameter Learning Curves} In the interest of model parameter identification, we record the learning curves of critic parameters $\text{w}(T-1)$; see Figures \ref{fig: wout-pos} and \ref{fig: wout-neg}. First, we clarify that we only present the learning curves at $t = T-1$ because with the use of parametric recursion technique, one directly links the learning performance for earlier time periods to the last time period $t = T-1$. We compare the learning curve of our proposed algorithm (`EMA') to the ground truth (`TRUE') that is computed by substituting the MarketEnv parameters $\mu, \sigma$ to (\ref{eq1: critic-training})-(\ref{eq3: critic-training}). Similarly as in the terminal wealth curve, we observe that convergence happens in about 20 aggregated episodes. Moreover, to illustrate how our smoothing choice stabilizes the noisiness of $\text{w}(T-1)$ updates, we also include the parameter learning curve of a contending stabilization moving average technique over past 20 periods, MA($q = 20$), which clearly fall short of EMA.

\begin{figure}[!ht]
	\centering
	\includegraphics[width=15.2cm, trim={1cm 1cm 1cm 1cm}, clip]{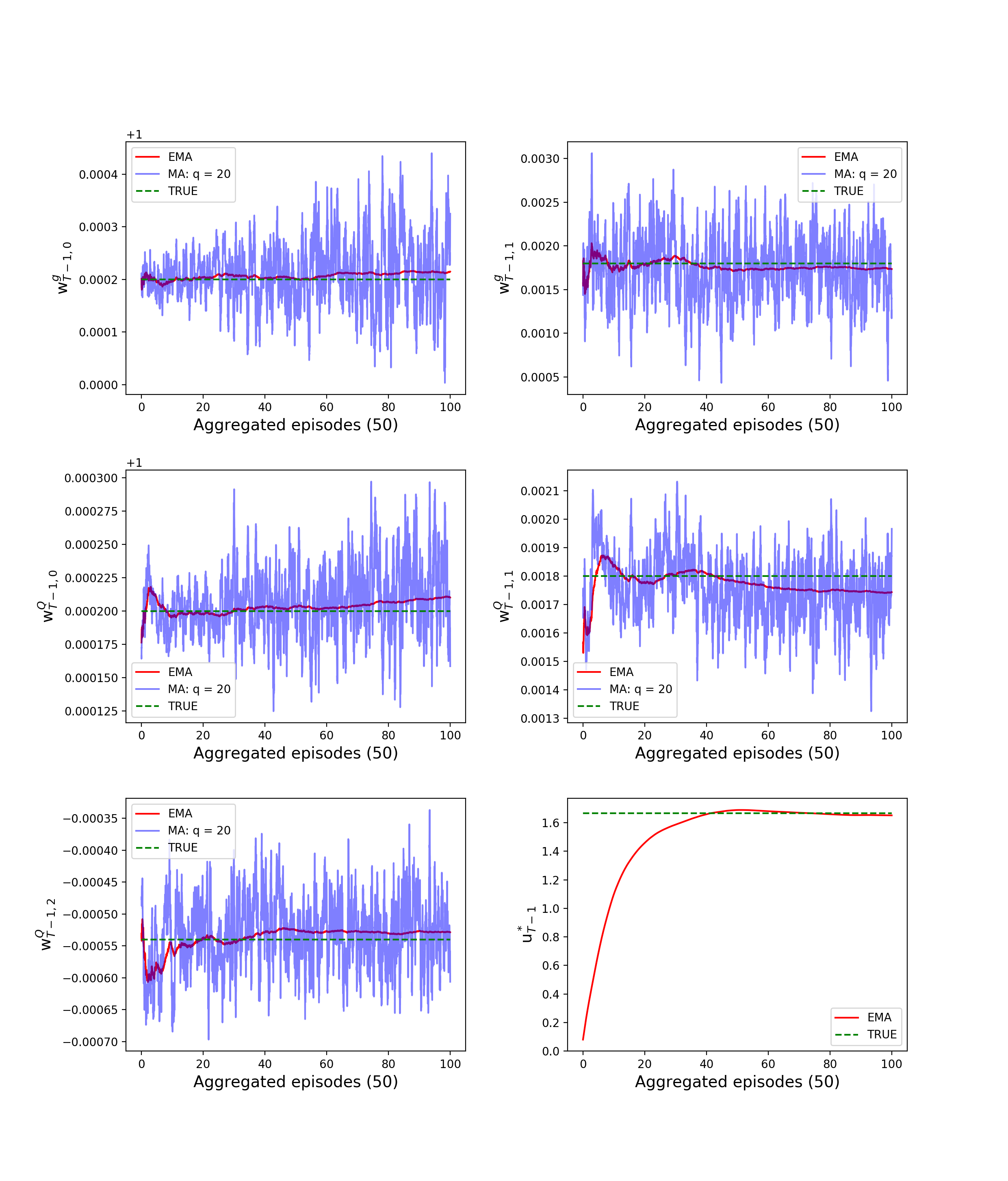}
	\caption{Critic and actor parameter learning curves at $t = T-1$ under MarketEnv$(\mu = 20\%)$}
	\label{fig: wout-pos}
\end{figure}
\begin{figure}[!ht]
	\centering
	\includegraphics[width=15.2cm, trim={1cm 1cm 1cm 1cm}, clip]{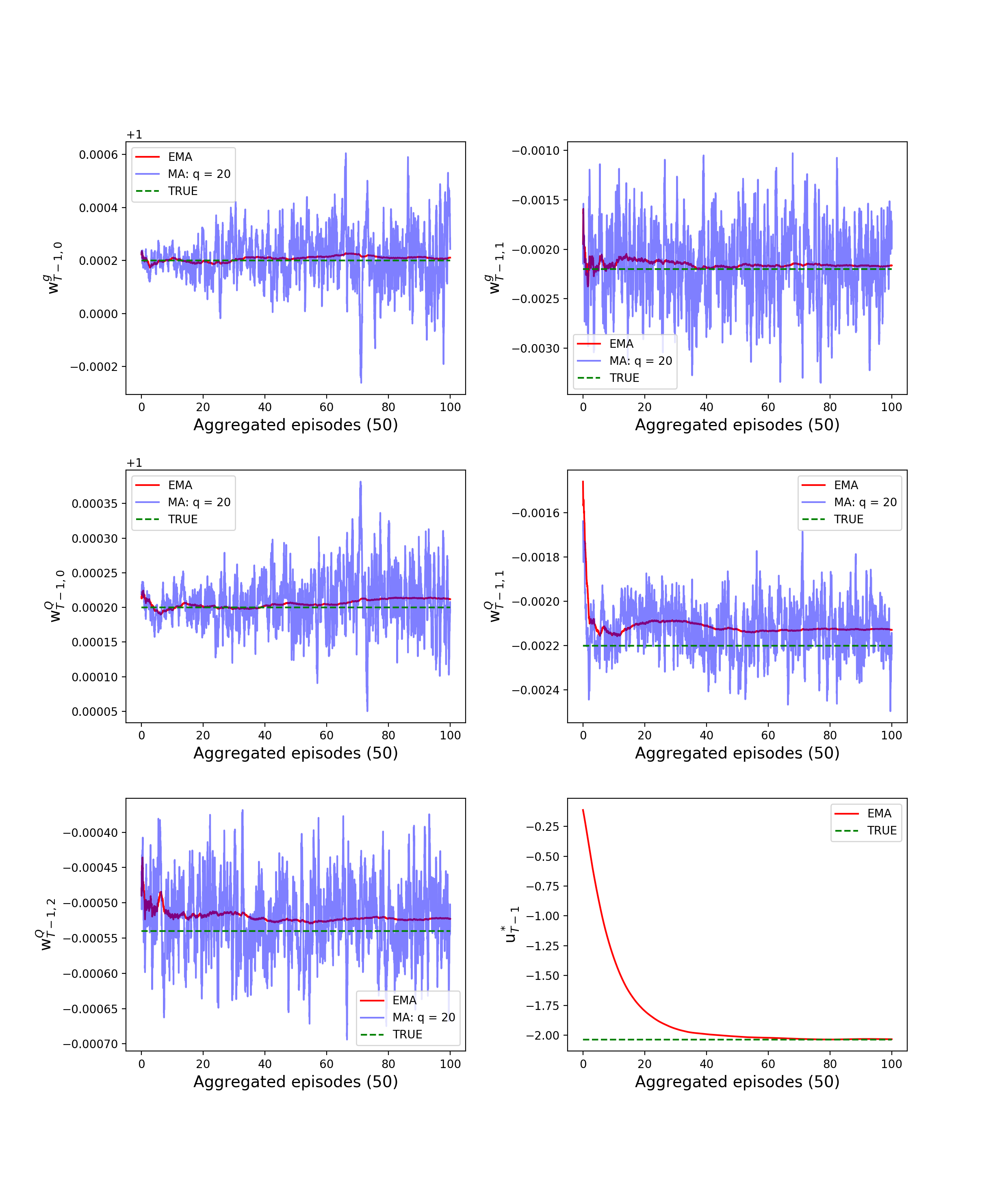}
	\caption{Critic and actor parameter learning curves at $t = T-1$ under MarketEnv$(\mu = -20\%)$}
	\label{fig: wout-neg}
\end{figure}

\paragraph{Optimal Strategy} Finally, in the last plot of both Figures \ref{fig: wout-pos} and \ref{fig: wout-neg}, we present the learning curves of policy $\pi_{T-1}(\cdot)$ that concurrently represents our actor parameter $\theta(T-1)$ by the state-invariant approximator \ref{eq2: state-invariant-policy}. We compare the policy at convergence with the ground truth (`TRUE') where as derived in \cite{Bjoerk2014}, 
\begin{align*}
	u^*_{t} = \frac{(\mu\Delta t - r)^2}{\gamma \sigma^2 \Delta t}(1+r)^{T - (t+1)}, \quad \forall t \in \mathcal{T}.  
\end{align*}
Thus, we conclude that the gradient-based update will converge to the ground truth in about 40 aggregated episodes. To further illustrate how this result translates to other time periods, we provide the learning curves of $u^*_t$ at $t = 0, \frac{T-1}{2}$ in Appendix \ref{app:mv}.


\subsection{Chapter Summary}
In this section, we applied SPERL deterministic actor-critic training framework and specified training procedures that suit the problem specifications. In particular, we have used model-based function approximators and perform model-based reductions to both actor and critic training problems. We note some connections to control-based approaches in the derivations of parametric recursion formulas (\ref{eq1-param-recursion: MV-const-risk})-(\ref{eq2-param-recursion: MV-const-risk}), that are analogous to the derivations of analytic equilibrium control except for our use of action-value function $Q$ in place of value function $V$. This is natural under the restrictive problem specifications that we made at the beginning, i.e. (\ref{unknowns: MV-const-risk}) being our only unknowns, which implies that all the remaining model assumptions are at our disposal. The usage of these assumptions can be observed at each step of converting $Q$-recursion to parametric recursion. While such reduction can help us achieve sample and training efficiency, using too many model assumptions may be undesirable in practice, especially in more complex domain, where model is generally unavailable. In such an event, we have no choice but to get stuck in the first step of conversion and use model-free training as presented in Section \ref{sec: training-algo}. This observation illustrates SPERL advantage over the current analytic equilibrium control approaches.
	
	\section{Conclusions and Future Works}\label{sec: conclusion-future}
	In this paper, we studied the search of SPE policy in finite-horizon TIC problems as a RL problem, which forms the proposed SPERL framework. By drawing insights from the extended DP theory, we proposes a new class of policy iteration algorithm, which we refer to as BPI, as a SPERL solver. We further conducted detailed analyses on BPI's update rules and correspondingly showed some desirable properties, such as update (lex-)monotonicity and convergence to SPE policy, which in turn address some existing challenges in TIC-RL domain. To demonstrate how BPI can be used in practice, we discussed several ways of pairing BPI with standard RL simulation methods, resulting in two main training frameworks: SPERL Q-learning and SPERL deterministic actor-critic. We then illustrate a full training algorithm derivation under the SPERL deterministic actor-critic framework through a mean-variance analysis example. The experimental results are plausible and show the efficiency of the proposed algorithms.

Noting that some training and implementation details are still left to generalities, promising future research directions are to investigate on these training matters, examining on more complex domain problems, and benchmarking with other TIC-RL algorithms, especially those that do not belong to either globally optimal or SPE policy class. Moreover, the learning algorithms in this paper provide a practical solution towards the search of SPE policy other than the analytical solution, especially when the latter is not available in practice or a model-free environment. Since TIC is considered as a key feature to better revealing human's preferences, it will be interesting to explore applications of this SPERL framework.
	
	
	
	\newpage
	
	\appendix
	\section{SPERL Training Algorithms for General TIC Problems} \label{app:general}
	\subsection{SPERL Q-learning}

\begin{algorithm}[H]
	\SetAlgoLined
	\SetKwInOut{Input}{Input}
	\SetKwInOut{Output}{Output}
	\SetKwInOut{Init}{Initialize}
	\SetKwInOut{Loop}{Loop}
	
	\Input{ Env, Hyperparameters$(\alpha, \epsilon)$}
	\Output{ Approximate SPE Q-function $\hat{Q}_t(x, u), \forall t,x,u$}
	\Init{ $\hat{Q}, \hat{r}, \hat{g}, \hat{f}$; $\bpi' \gets \emptyset$; $\pi_t(x) \gets \arg\max_u \hat{Q}(x,u), \forall t, x$}
	
	\While{$\bpi' \neq \bpi$}{
		Update $\bpi \gets \bpi'$\;
		Choose $X_0$ randomly\;
		Generate trajectory $X_0, U_0, X_1, U_1, \dotso, X_{T-1}, U_{T-1}, X_T \sim \bpi_{\epsilon\text{\textit{-greedy}}}$\; \label{alg:Q-gentraj}
		\For{$t \gets T-1$ \KwTo $0$}{
			\For{$\tau \in \{t, t-1, \dotso, 0\}$, $y \in \{X_t, X_{t-1}, \dotso, X_0\}$}{
				\For{$m \gets t$ \KwTo $T-1$}{
					Compute $\xi_t^r \gets \xi_t^r(X_t, U_t, \tau, m, y)$ by (\ref{r-TD-target})\; \label{alg:Q-computexir}
					$\hat{r}_t(X_t, U_t, \tau, m, y) \gets \hat{r}_t(X_t, U_t, \tau, m, y) + \alpha(\xi_t^r - \hat{r}_t(X_t, U_t, \tau, m, y))$\;
				}
				Compute $\xi_t^f \gets\xi_t^f(X_t, U_t, \tau, y)$ by (\ref{f-TD-target})\;
				$\hat{f}_t(X_t, U_t, \tau, y) \gets \hat{f}_t(X_t, U_t, \tau, y) + \alpha(\xi_t^f - \hat{f}_t(X_t, U_t, \tau, y))$\;
			}
			Compute $\xi_t^g \gets \xi_t^g(X_t, U_t)$ by (\ref{g-TD-target})\;
			$\hat{g}_t(X_t, U_t) \gets \hat{g}_t(X_t, U_t) + \alpha(\xi_t^g - \hat{g}_t(X_t, U_t))$\;
			Compute $\xi_t^Q \gets \xi_t^Q(X_t, U_t)$ by (\ref{Q-TD-target})\;
			$\hat{Q}_t(X_t, U_t) \gets \hat{Q}_t(X_t, U_t) + \alpha(\xi_t^Q - \hat{Q}_t(X_t, U_t))$\;
			Compute $u' \gets \arg\max_u \hat{Q}_t(X_t, u)$\;
			\eIf{$\hat{Q}_t(X_t, u') > Q_t(X_t, \pi_t(X_t))$}{
				$\pi'_t(X_t) \gets u'$}{
				$\pi'_t(X_t) \gets \pi_t(X_t)$\;}}}
	\caption{SPERL $Q$-learning}
	\label{alg: SPERL Q-learning}
\end{algorithm}
\paragraph{The case of random rewards.} In the presence of random rewards as shortly remarked in Remark \ref{remark: random-rewards}, we can sample $R_t$ and modify our \textit{trajectory generation} in line \ref{alg:Q-gentraj} to 
\begin{align*}
	X_0, U_0, R_1, X_1, U_1, \dotso, X_{T-1}, U_{T-1}, R_T, X_T.    
\end{align*}
Since these random rewards are the strict attributes of adjustment function $r$, it remains to modify its target computation in line \ref{alg:Q-computexir} to account for $R_t$ that is, by assigning 
\begin{align*}
	\xi_t^r \gets \mathcal{H}(\tau, y, R_t)
\end{align*}

\subsection{SPERL Deterministic Actor-Critic}
\begin{algorithm}[H]
	\SetAlgoLined
	\SetKwInOut{Input}{Input}
	\SetKwInOut{Output}{Output}
	\Input{Env, Hyperparameters($\alpha, \epsilon$)}
	\Output{Approximate SPE-policy $\bpi^{\boldsymbol\theta}$}
	Initialize critic parameters $\w$, actor parameters $\boldsymbol\theta$, replay memory $\mathcal{D} \leftarrow \emptyset$\;
	
	\For{$l \gets 0$ \KwTo $L$}{
		\For{$b \gets 1$ \KwTo $B$}{
			Choose $X_0$ randomly\;
			Generate trajectory $X_0, U_0, X_1, U_1, \dotso, X_{T-1}, U_{T-1}, X_{T} \sim \bpi^{\boldsymbol\theta}_{\epsilon\text{\textit{-greedy}}}$\;
			\For{$t \gets 0$ \KwTo $T-1$}{
				\For{$\tau \gets t$ \KwTo $0$}{
					$\mathcal{D} \gets$ $\mathcal{D}\ \cup \left\{\left(t, \tau, X_t, U_t, X_{\tau}, X_{t+1}\right)\right\}$
				}
			}
		}
		\For{$t \gets T-1$ \KwTo $0$}{
			\For{$\tau \gets t$ \KwTo $0$}{ \label{alg:dac-startupdatew}
				\For{$m \gets t$ \KwTo $T-1$}{
					$\text{w}(t, \tau, m; r) \gets \text{UPDATE-}r(\w, \alpha, \boldsymbol\theta, t, \tau, m, \mathcal{D})$\;}
				$\text{w}(t, \tau; f) \gets \text{UPDATE-}f(\w, \alpha, \boldsymbol\theta, t, \tau, \mathcal{D})$\;
			}
			$\text{w}(t; g) \gets \text{UPDATE-}g(\w, \alpha, \boldsymbol\theta, t, \mathcal{D})$\;
			$\text{w}(t; Q) \gets \text{UPDATE-}Q(\w, \alpha, \boldsymbol\theta, t, \mathcal{D})$\; \label{alg:dac-endupdatew}
			
			$\theta(t) \gets \theta(t) + \alpha_{\theta} \sum_{\tilde{\mathcal{D}}_{t, \cdot}} \left( \nabla_{\theta} \hat{\pi}_t(x; \theta)\vert_{\theta = \theta(t)} \nabla_u \hat{Q}(x, u; \text{w}(t; Q))|_{u = \hat{\pi}_t(x; \theta(t))} \right)$\;
		}
	}
	\caption{SPERL Deterministic Actor-Critic}
	\label{alg: SPERL Deterministic Actor-Critic}
\end{algorithm}

\begin{algorithm}
	\SetAlgoLined
	\SetKwInOut{Input}{Input}
	\SetKwInOut{Output}{Output}
	\Input{$\w, \alpha, \boldsymbol\theta, t, \tau, m, \mathcal{D}$}
	\Output{$\text{w}'(t, \tau, m; r)$}
	Initialize $\Xi_{t, \tau} \gets \emptyset$\;
	
	Sample mini-batch $\tilde{\mathcal{D}}_{t, \tau} \sim$ Replay$\left(t, \tau, \mathcal{D}\right)$\;
	\For{$(t, \tau, x, u, y, X^{x, u}) \in \tilde{\mathcal{D}}_{t, \tau}$}{
		\eIf{m = t}{
			$\xi^r_t \gets \mathcal{R}_{\tau, t}(y, x, u)$}{
			$\xi^r_t \gets \hat{r}_{t+1}(X^{x, u}, \hat{\pi}_{t+1}(X^{x, u}; \theta(t+1)), y; \text{w}(t+1, \tau, m; r))$}
		Set $\Xi_{t, \tau} \gets \Xi_{t, \tau} \cup (t, \tau, x, u, y, \xi^r_t)$\;
	}
	Solve $\text{w}^* \gets \argmin_{\text{w}} \sum_{\Xi_{t, \tau}} \left(\xi_t^r - \hat{r}_t(x, u, y; \text{w})\right)^2$\;
	$\text{w}'(t, \tau, m; r) \gets \text{w}(t, \tau, m; r) + \alpha \left( \text{w}^* - \text{w}(t, \tau, m; r)\right)$\;
	\caption{UPDATE-$r$}
	\label{alg: update-r}
\end{algorithm}

\begin{algorithm}
	\SetAlgoLined
	\SetKwInOut{Input}{Input}
	\SetKwInOut{Output}{Output}
	\Input{$\w, \alpha, \boldsymbol\theta, t, \tau, \mathcal{D}$}
	\Output{$\text{w}'(t, \tau; f)$}
	
	Initialize $\Xi_{t, \tau} \gets \emptyset$\;
	
	Sample mini-batch $\tilde{\mathcal{D}}_{t, \tau} \sim$ Replay$\left(t, \tau, \mathcal{D}\right)$\;
	\For{$(t, \tau, x, u, y, X^{x, u}) \in \tilde{\mathcal{D}}_{t, \tau}$}{
		\eIf{$t = T-1$}{
			$\xi^f_t \gets \mathcal{F}_{\tau}(y, X^{x,u})$}{
			$\xi^f_t \gets \hat{f}_{t+1}(X^{x,u}, \hat{\pi}_{t+1}(X^{x,u}; \theta(t+1)), y; \text{w}(t+1, \tau; f))$}
		Set $\Xi_{t, \tau} \gets \Xi_{t, \tau} \cup (t, \tau, x, u, y, \xi^f_t)$\;
	}
	Solve $\text{w}^* \gets \argmin_{\text{w}} \sum_{\Xi_{t, \tau}} \left(\xi_t^f - \hat{f}_t(x, u, y; \text{w})\right)^2$\;
	$\text{w}'(t, \tau; f) \gets \text{w}(t, \tau; f) + \alpha \left( \text{w}^* - \text{w}(t, \tau; f)\right)$\;
	\caption{UPDATE-$f$}
	\label{alg: update-f}
\end{algorithm}

\begin{algorithm}
	\SetAlgoLined
	\SetKwInOut{Input}{Input}
	\SetKwInOut{Output}{Output}
	\Input{$\w, \alpha, \boldsymbol\theta, t, \mathcal{D}$}
	\Output{$\text{w}'(t; g)$}
	
	Initialize $\Xi_{t, \cdot} \gets \emptyset$\;
	
	Sample mini-batch $\tilde{\mathcal{D}}_{t, \cdot} \sim$ Replay$\left(t, \cdot, \mathcal{D}\right)$\;
	\For{$(t, \tau, x, u, y, X^{x, u}) \in \tilde{\mathcal{D}}_{t, \cdot}$}{
		\eIf{$t = T-1$}{
			$\xi^g_t \gets X^{x,u}$}{
			$\xi^g_t \gets \hat{g}_{t+1}(X^{x,u}, \hat{\pi}_{t+1}(X^{x,u}; \theta(t+1)); \text{w}(t+1; g))$}
		Set $\Xi_{t, \cdot} \gets \Xi_{t, \cdot} \cup (t, x, u, \xi^g_t)$\;
	}
	Solve $\text{w}^* \gets \argmin_{\text{w}} \sum_{\Xi_{t, \cdot}} \left(\xi_t^g - \hat{g}_t(x, u; \text{w})\right)^2$\;
	$\text{w}'(t; g) \gets \text{w}(t; g) + \alpha \left( \text{w}^* - \text{w}(t; g)\right)$\;
	\caption{UPDATE-$g$}
	\label{alg: update-g}
\end{algorithm}

\begin{algorithm}
	\SetAlgoLined
	\SetKwInOut{Input}{Input}
	\SetKwInOut{Output}{Output}
	\Input{$\w, \alpha, \boldsymbol\theta, t, \mathcal{D}$}
	\Output{$\text{w}'(t ; Q)$}
	
	Initialize $\Xi_{t, \cdot} \gets \emptyset$\;
	Sample mini-batch $\tilde{\mathcal{D}}_{t, \cdot} \sim$ Replay$\left(t, \cdot, \mathcal{D}\right)$\; \label{alg:updateQ-minibatch}
	
	\For{$(t, \tau, x, u, y, X^{x, u}) \in \tilde{\mathcal{D}}_{t, \cdot}$}{
		\eIf{$t = T-1$}{
			$\xi^Q_t \gets \hat{r}_t(x, u, x; \text{w}(t,t,t; r)) + \hat{f}_t(x,u,x; \text{w}(t,t; f)) + \mathcal{G}_t(x, \hat{g}_t(x, u; \text{w}(t; g)))$}{
			
			Set $\Delta \hat{r}_t \gets 0$\;
			\For{$m \gets t+1$ \KwTo $T-1$}{
				$\Delta \hat{r}_t \gets \Delta \hat{r}_t + \hat{r}_{t+1}(X^{x,u}, \hat{\pi}_{t+1}(X^{x,u}; \theta(t+1)), X^{x,u}; \text{w}(t+1, t+1, m; r))$\\
				$\qquad \quad \, - \hat{r}_t(x, u, x; \text{w}(t, t, m; r))$\; }
			
			$\Delta \hat{f}_t \gets \hat{f}_{t+1}(X^{x,u}, \hat{\pi}_{t+1}(X^{x,u}; \theta(t+1)), X^{x,u}; \text{w}(t+1, t+1; f))$\\
			$\qquad \quad \, - \hat{f}_t(x, u, x; \text{w}(t,t; f))$\;
			
			$\Delta \hat{g}_t \gets \mathcal{G}_{t+1}(X^{x,u}, \hat{g}_{t+1}(X^{x,u}, \hat{\pi}_{t+1}(X^{x,u}; \theta(t+1)); w(t+1; g))$\\
			$\qquad \quad \, - \mathcal{G}_t(x, \hat{g}_t(x, u; w(t; g))$\;
			
			$\xi^Q_t \gets \hat{r}_t(x, u, x; \text{w}(t,t,t; r)) + \Hat{Q}_{t+1}(X^{x,u}, \hat{\pi}_{t+1}(X^{x,u}; \theta(t+1)); \text{w}(t+1; Q))$\\
			$\quad \quad \; \; - (\Delta \Hat{r}_t + \Delta \Hat{f}_t + \Delta \Hat{g}_t)$\;
		}
		
		Set $\Xi_{t, \cdot} \gets \Xi_{t, \cdot} \cup (t, x, u, \xi^Q_t)$\;
	}
	Solve $\text{w}^* \gets \argmin_{\text{w}} \sum_{\Xi_{t, \cdot}} \left(\xi_t^Q - \hat{Q}_t(x, u; \text{w})\right)^2$\; \label{alg:updateQargmin}
	
	$\text{w}'(t; Q) \gets \text{w}(t; Q) + \alpha_{\text{w}} \left( \text{w}^* - \text{w}(t; Q)\right)$\; \label{alg:updateQGD}
	\caption{UPDATE-$Q$}
	\label{alg: update-Q}
\end{algorithm}

We discuss in SPERL context several implementation essentials that are common in dealing with function approximators and critic estimation.

\paragraph{Choice of Critic Approximator.} 
In Algorithms \ref{alg: SPERL Deterministic Actor-Critic}--\ref{alg: update-f}, we have incorporated tabularized weight representations for $f, r$; see Section \ref{sec: q-learning w/ FA} for details. Beyond this, we do not restrict how input space should be segregated or aggregated. For instance, neural networks and linear approximators can both be used to represent $\hat{f}_t(x, u, y; \text{w})$ depending on how amenable are the prediction problems at hand.

\paragraph{Critic Parameter Update.} We illustrate how critic parameters $\w$ are updated by the last two lines in Algorithms \ref{alg: update-r}--\ref{alg: update-Q}. For instance, we refer to Algorithm \ref{alg: update-Q}. To solve the $\argmin$ function in line \ref{alg:updateQargmin}, any least-squares solvers, such as stochastic gradient descent, Batch gradient descent, or simple regression, can be used. Line \ref{alg:updateQGD} then provides some flexibility to incorporate smoothening of parameter updates, i.e. $\alpha_{\text{w}} < 1$.

\paragraph{Replay Buffer.}
In the case of noisy input-target pairs to be used in critic estimation, keeping a replay buffer can help stabilize training by replaying past experiences. In our example algorithms, experiences are collected in the form of tuple $(t, \tau, x, u, y, X^{x,u})$, where $X^{x, u}$ denotes the \textit{next state} encountered after hitting state $x$ and acting $u$. The notation $X^{x, u}$ marks our use of the stationary transition probability assumption in Section \ref{sec: finite-horizon-TIC}. To contrast with the state-action-reward-state action (SARSA) experiences collection in standard RL context, we need to collect additional information about $t$ for our finite-horizon model and $\tau, y = x_{\tau}$ for our adjustment functions $r, f$; see Algorithms \ref{alg: update-r} and \ref{alg: update-f} for illustration on how these information (especially the latter) are used. Any replay techniques can then be used on the pool of experiences $\mathcal{D}$ in place of the function ``Replay" in Algorithms \ref{alg: update-r}-\ref{alg: update-Q}. In the case of on-policy sampling, we can simply replay the latest collected data in $\mathcal{D}$.
	
	\section{Experimental Results of Mean-Variance Analysis} \label{app:mv}
	\begin{figure}[H]
	\centering
	\includegraphics[width=15.2cm, trim={1cm 0 1cm 1cm}, clip]{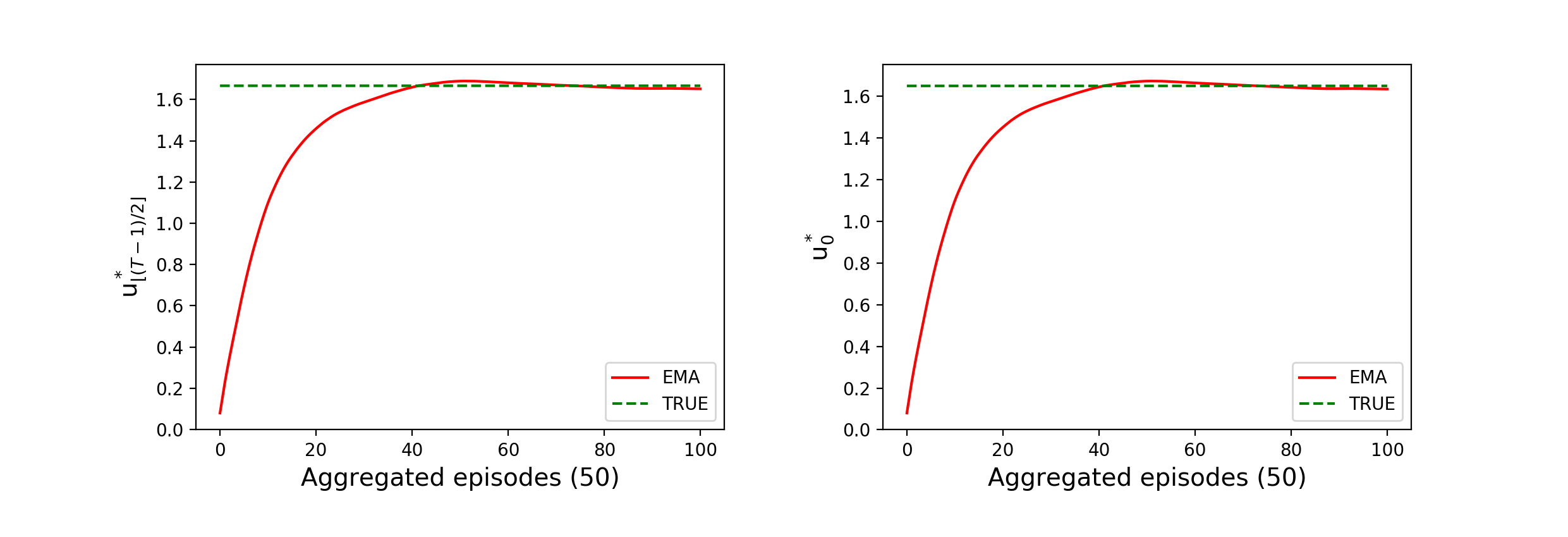}
	\caption{Actor learning curve $(\mu = 20\%)$}
	\label{fig: uopt_pos}
\end{figure}

\begin{figure}[H]
	\centering
	\includegraphics[width=15.2cm, trim={1cm 1cm 1cm 1cm}, clip]{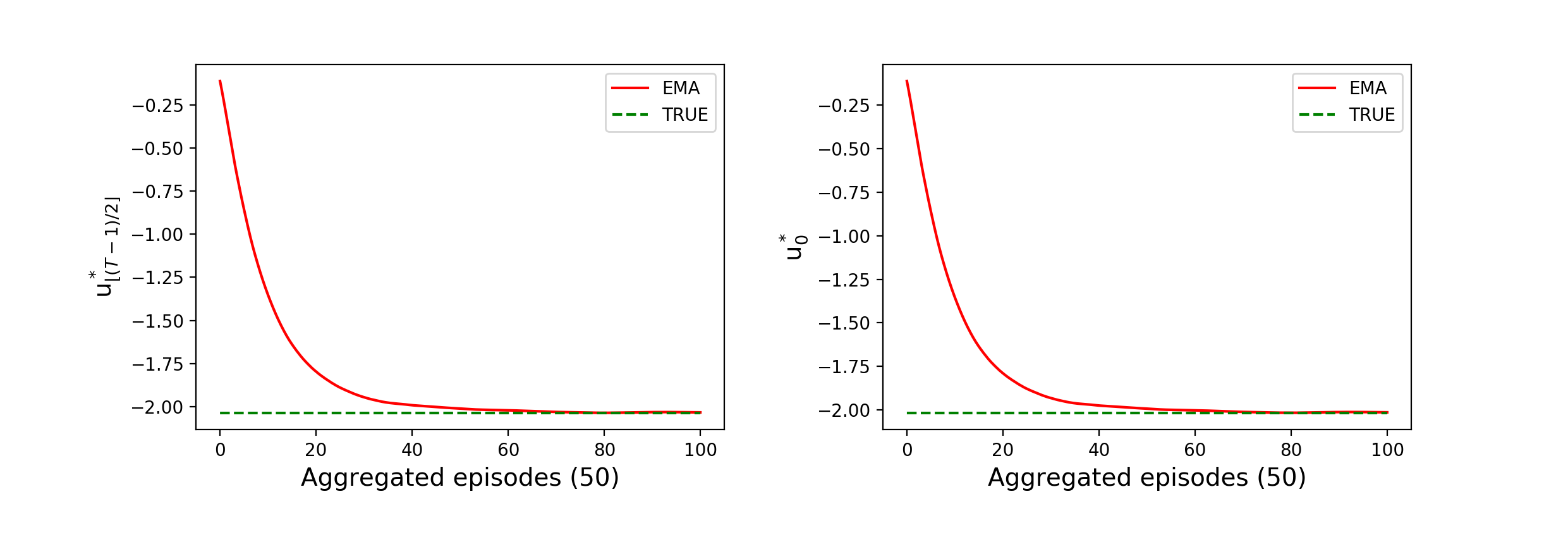}
	\caption{Actor learning curve $(\mu = -20\%)$}
	\label{fig: uopt_neg}
\end{figure}
	
	\vskip 0.2in
	\bibliography{refs}
\end{document}